\documentclass{article}

\renewcommand{\sfdefault}{phv}
\newcommand{\dif}{\mathrm{d}}
\newcommand{\E}{\mathop{\mathbb{E}}\displaylimits}
\newcommand{\softmax}{\mathop{\mathrm{softmax}}}
\newcommand{\conv}{\mathop{\mathrm{conv}}}
\newcommand{\emb}{\mathop{\mathrm{emb}}}
\newcommand{\dist}{\mathop{\mathrm{dist}}}
\newcommand{\mathspan}{\mathop{\mathrm{span}}}
\newcommand{\range}{\mathop{\mathrm{range}}}
\newcommand{\Cov}{\mathop{\mathrm{Cov}}}
\newcommand{\tr}{\mathop{\mathrm{tr}}}

\renewcommand{\P}{\mathbb{P}}


\usepackage{amsmath,amsfonts,amssymb,bm}
\usepackage{aliascnt}
\usepackage[colorlinks=true, linkcolor=fdblue, citecolor=fdblue, urlcolor=fdblue]{hyperref}
\usepackage{cleveref}

\newtheorem{theorem}{Theorem}[section]

\newaliascnt{proposition}{theorem} 
\newtheorem{proposition}[proposition]{Proposition}
\aliascntresetthe{proposition}

\newaliascnt{lemma}{theorem} 
\newtheorem{lemma}[lemma]{Lemma}
\aliascntresetthe{lemma}

\newaliascnt{corollary}{theorem} 
\newtheorem{corollary}[corollary]{Corollary}
\aliascntresetthe{corollary}

\newaliascnt{definition}{theorem} 
\newtheorem{definition}[definition]{Definition}
\aliascntresetthe{definition}

\newenvironment{proof}[1][Proof]{\par\noindent\emph{#1.} }{\hfill$\square$\par}

\crefname{theorem}{theorem}{theorems}
\crefname{proposition}{proposition}{propositions}
\crefname{lemma}{lemma}{lemmas}
\crefname{corollary}{corollary}{corollaries}
\crefname{definition}{definition}{definitions}

\Crefname{theorem}{Theorem}{Theorems}
\Crefname{proposition}{Proposition}{Propositions}
\Crefname{lemma}{Lemma}{Lemmas}
\Crefname{corollary}{Corollary}{Corollaries}
\Crefname{definition}{Definition}{Definitions}









\def\eqref#1{equation~\ref{#1}}









\def\1{\bm{1}}







\def\vzero{{\bm{0}}}

\def\vmu{{\bm{\mu}}}
\def\vtheta{{\bm{\theta}}}
\def\vphi{{\bm{\phi}}}

\def\vkappa{{\bm{\kappa}}}

\def\vb{{\bm{b}}}
\def\vc{{\bm{c}}}

\def\ve{{\bm{e}}}

\def\vs{{\bm{s}}}

\def\vu{{\bm{u}}}
\def\vv{{\bm{v}}}
\def\vw{{\bm{w}}}
\def\vx{{\bm{x}}}
\def\vy{{\bm{y}}}
\def\vz{{\bm{z}}}

\def\vW{{\bm{W}}}


\def\mA{{\bm{A}}}

\def\mI{{\bm{I}}}

\def\mO{{\bm{O}}}

\def\mR{{\bm{R}}}

\def\mU{{\bm{U}}}
\def\mV{{\bm{V}}}

\def\mY{{\bm{Y}}}

\DeclareMathAlphabet{\mathsfit}{\encodingdefault}{\sfdefault}{m}{sl}
\SetMathAlphabet{\mathsfit}{bold}{\encodingdefault}{\sfdefault}{bx}{n}













\usepackage{auxiliaries/PRIMEarxiv}

\pagestyle{fancy}
\thispagestyle{empty}
\rhead{ \emph{ }}

\title{How Do Flow Matching Models Memorize and Generalize in Sample Data Subspaces?}

\author{
\begin{minipage}[t]{0.48\textwidth}
\centering
Weiguo Gao \\
School of Mathematical Sciences,\\ 
School of Data Science\\
Fudan University\\
Shanghai, 200433, China\\
\texttt{wggao@fudan.edu.cn}
\end{minipage}%
\hfill
\begin{minipage}[t]{0.48\textwidth}
\centering
Ming Li \\
School of Mathematical Sciences,
Shanghai Key Laboratory for Contemporary Applied Mathematcis\\
Fudan University\\
Shanghai, 200433, China\\
\texttt{mingli23@m.fudan.edu.cn}
\end{minipage}
}

\begin{document}
\maketitle

\begin{abstract}
Real-world data is often assumed to lie within a low-dimensional structure embedded in high-dimensional space. In practical settings, we observe only a finite set of samples, forming what we refer to as the \emph{sample data subspace}. It serves an essential approximation supporting tasks such as dimensionality reduction and generation. A major challenge lies in whether generative models can reliably synthesize samples that stay within this subspace rather than drifting away from the underlying structure. In this work, we provide theoretical insights into this challenge by leveraging \emph{Flow Matching models}, which transform a simple prior into a complex target distribution via a learned velocity field. By treating the real data distribution as discrete, we derive analytical expressions for the optimal velocity field under a Gaussian prior, showing that generated samples \emph{memorize} real data points and represent the sample data subspace exactly. To generalize to suboptimal scenarios, we introduce the Orthogonal Subspace Decomposition Network (OSDNet), which systematically decomposes the velocity field into subspace and off-subspace components. Our analysis shows that the off-subspace component decays, while the subspace component \emph{generalizes} within the sample data subspace, ensuring generated samples preserve both proximity and diversity.
\end{abstract}

\section{Introduction}

The representation of real-world data in high-dimensional spaces is a fundamental topic in machine learning and data analysis. According to the widely accepted manifold hypothesis \citep{fefferman2016testing}, data such as images are assumed to reside on a low-dimensional structure embedded within a high-dimensional ambient space (e.g., pixel space). In practical scenarios, however, we only observe a finite sample set, forming what we term the \emph{sample data subspace}, an approximation to the full structure. This sample data subspace plays a critical role in data-driven tasks, as it provides a concrete basis for algorithms in dimensionality reduction \citep{mcinnes2018umap} and generation~\citep{yao2024manifold}, which rely on observed samples to approximate the underlying structure. Despite the foundational importance of this subspace, it remains uncertain whether generative models, which synthesize new data samples, can reliably generate samples that adhere to this data-constrained subspace rather than drifting away from the underlying structure.

In response to this challenge, we explore solutions within the framework of \emph{Flow Matching models} \citep{lipman2023flow,liu2023flow}. These models generate samples by progressively transforming a simple prior distribution into a complex target distribution, guided by an ordinary differential equation (ODE) governed by a learned velocity field. Departing from the common assumption of a continuous data distribution, we assume the real data distribution is discrete, composed only of the available samples. This reflects the finite nature of real data, where we approximate the low-dimensional structure with a limited set of points. 

By focusing on the dynamics of the generative process in Flow Matching models, we show that, under a standard Gaussian prior, the optimal velocity field leads to analytical expressions. These expressions can be interpreted as a softmax-weighted sum of the vectors pointing from the current point to the real data points, offering a new perspective on the geometry of the generation paths under the velocity field. To the best of our knowledge, this is hitherto underexplored. We show that under the optimal velocity field, the generated samples \emph{memorize} the real data points, thereby faithfully representing the sample data subspace.

To extend this analysis to broader scenarios, we propose the \emph{Orthogonal Subspace Decomposition Network (OSDNet)}, a network class that decomposes the suboptimal velocity field into two parts: one within the data subspace and one orthogonal to it, referred to as the off-subspace part. These two parts influence the generated samples' subspace and off-subspace components, respectively. Importantly, they can be decoupled in the training loss function to allow for independent optimization. We demonstrate that the off-subspace component decays over time, while the subspace component \emph{generalizes} within the sample data subspace. This ensures that generated samples remain close to the sample data subspace, preserving both proximity and diversity.

Our main contributions are summarized as follows:

\begin{itemize}
\item \textbf{Rigorous mathematical formulation of the optimal velocity field and the OSDNet network class of the suboptimal counterpart.} We derive an explicit analytical expression for the optimal velocity field. For the suboptimal counterpart, we propose the Orthogonal Subspace Decomposition Network (OSDNet), a network class that decomposes the velocity field into subspace and off-subspace components in a systematic manner.

\item \textbf{Memorization of real data points and geometric properties of generation paths under the optimal velocity field.} Leveraging the analytical expression for the optimal velocity field, we analyze the geometry of generation paths in two distinct scenarios: one with sparse, well-separated real data points, and another with a hierarchical structure. Through these geometric insights, we rigorously demonstrate that generated samples memorize the real data points.

\item \textbf{Approximation and generalization in the sample data subspace under the suboptimal velocity field, with an upper bound on the expected distance to real data points.} We show that the off-subspace components of generated samples decay during training, while the subspace components generalize beyond the real data points. This is further refined by a theoretical upper bound on the expected distance between generated samples and real data points, which decreases as the approximation of the suboptimal velocity field improves.
\end{itemize}

\section{Related Works}

In this section, we briefly discuss related works, focusing on flow-based generative models, memorization and generalization in generative models, and low-dimensional structure learning. For a more comprehensive literature review, including an overview of generative models and a comparative analysis with two closely related works, please refer to \cref{app:addition_literature_review}.

\paragraph{Flow-based generative models.} Flow-based generative models generate samples by applying a sequence of transformations that progressively map a simple source distribution, such as a standard Gaussian, to more complex data distributions. Among these, diffusion models \citep{song2019generative, ho2020denoising, song2021score, karras2022elucidating} employ stochastic differential equations (SDEs) to guide the transformation process. They rely on the score function, which is the gradient of the log-density of the data distribution, to capture the structure of the underlying data. Recently, Flow Matching models \citep{lipman2023flow, liu2023flow} have emerged as a promising alternative. These models use ODEs instead of SDEs to represent the generative process. By learning a velocity field that maps the source distribution directly to the target distribution, they offer both computational efficiency and greater theoretical clarity. Flow Matching models have proven successful in large-scale tasks requiring rapid and accurate inference, such as DALL\(\cdot\)E 3 \citep{betker2023improving}. Given these strengths, we focus on Flow Matching models in this paper.

\paragraph{Memorization and generalization in generative models.} Memorization occurs when generative models produce samples that closely replicate the training data, effectively recalling specific examples rather than capturing the broader underlying patterns. In contrast, generalization refers to a model's ability to generate new, previously unseen samples that extend beyond the training data while still conforming to the learned distribution. Recently, memorization in diffusion models has been studied both empirically \citep{somepalli2023diffusion, yoon2023diffusion} and theoretically \citep{li2024good}. Generalization has also been theoretically examined in diffusion models, particularly from the perspectives of implicit bias \citep{kadkhodaie2024generalization} and statistical bounds \citep{li2024generalization}. Despite these advances, the concepts of memorization and generalization remain relatively underexplored in the context of Flow Matching models. Furthermore, existing research tends to focus on either memorization or generalization separately \citep{kadkhodaie2024generalization, li2024generalization, li2024good}, without integrating them into a unified framework. Additionally, current theoretical work often assumes the existence of a continuous real distribution, an assumption that may not hold in practical scenarios.

\paragraph{Low-dimensional structure learning in generative models.} According to the manifold hypothesis, data such as images lies on a low-dimensional structure within a high-dimensional space. Recent studies identify three key ways in which diffusion models exploit low-dimensional structures. First, studies have shown that diffusion models effectively learn low-dimensional data distributions, such as invertible transformations of subspaces \citep{chen2023score} or mixtures of low-rank Gaussians \citep{wang2024diffusion}. This effectiveness is demonstrated through bounds on score estimation \citep{chen2023score} or connections between score learning and subspace clustering \citep{wang2024diffusion}. These assumptions share conceptual similarities with our work, and we will provide a detailed comparative analysis in \cref{app:addition_literature_review}. Second, diffusion models preserve manifold structures by enforcing orthogonal denoising while allowing mixing within the structure \citep{wenliang2022score}. Third, convergence bounds have been established for data distributions on compact sets, broadening the applicability of diffusion models to non-Lebesgue measures \citep{de2022convergence}. Despite progress in learning low-dimensional structures with diffusion models, similar research for Flow Matching models remains largely unexplored, motivating our work.

\paragraph{Notations.} Matrices are represented by bold capital letters, e.g., \(\mA\), vectors by bold lowercase letters, e.g., \(\vx\), and scalars by regular letters, e.g., \(t\). For a matrix \(\mA\), we denote its spectral norm by \(\|\mA\|_2\), its Frobenius norm by \(\|\mA\|_{\mathrm{F}}\), and its trace by \(\tr(\mA)\). We use \(\E\) for the expectation operator, \(\softmax\) for the softmax function, \(\conv\) for the convex hull, \(\emb\) for sinusoidal positional encoding (specific definitions are given later), \(\dist\) for distance between sets, \(\mathspan\) for the linear span of vectors, \(\range\) for the space spanned by the columns of a matrix, \(\Cov\) for the covariance matrix.

\section{Preliminaries}

In this section, we provide background on Flow Matching models, including their foundational principles, key formulations, and the role of optimal transport paths in guiding the generative process.

\subsection{Flow Matching Models}

Flow Matching (FM) \citep{lipman2023flow} is a framework that transforms a simple prior distribution \( p_0(\vx) \), typically a standard Gaussian, into a complex target distribution \( p_1(\vx) \) by learning a velocity field. FM employs various probability paths, each defined by a time-dependent velocity field \( \vu_t(\vx) \). This field controls the evolution of the \emph{flow} \( \vphi_t(\vx) \) via the following ODE
\[
\frac{\dif \vphi_t(\vx)}{\dif t} = \vu_t(\vphi_t(\vx)), \quad \vphi_0(\vx) = \vx, \quad t\in[0, 1].
\]
The evolution of the probability density \( p_t(\vx) \), starting from \(p_0(\vx)\), is governed by the continuity equation (also known as the transport equation):
\[
\frac{\partial p_t(\vx)}{\partial t} + \nabla \cdot \big( p_t(\vx) \vu_t(\vx) \big) = \vzero, \quad t\in[0, 1].
\]
To approximate \( \vu_t(\vx) \), FM uses a neural network \( \vv_t(\vx; \vtheta) \), where \(\vtheta\) represents the network parameters. The network is trained by minimizing the following FM loss
\[
\mathcal{L}_{\text{FM}}(\vtheta) = \E_{t \sim \mathcal{U}[0,1], \vx \sim p_t(\vx)} \big[ \|\vv_t(\vx; \vtheta) - \vu_t(\vx)\|_2^2 \big].
\]
When the exact form of \( p_1(\vx) \) is unknown and only samples from \( p_1(\vx) \) are available, Conditional Flow Matching (CFM) constructs conditional paths \( p_t(\vx|\vx_1) \) for each data sample \( \vx_1 \sim p_1(\vx_1) \). The CFM loss is then defined as
\[
\mathcal{L}_{\text{CFM}}(\vtheta) = \E_{t \sim \mathcal{U}[0,1], \vx_1 \sim p_1(\vx_1), \vx \sim p_t(\vx|\vx_1)} \big[ \|\vv_t(\vx; \vtheta) - \vu_t(\vx|\vx_1)\|_2^2 \big].
\]
Assume that the conditional distribution is Gaussian, given by \( p_t(\vx|\vx_1) = \mathcal{N}(\vx;\vmu_t(\vx_1), \sigma_t(\vx_1)^2\mI) \), and that the flow takes the form
\[
\vphi_t(\vx) = \sigma_t(\vx_1)\vx + \vmu_t(\vx_1).
\]
The conditional velocity field becomes
\[
\vu_t(\vx|\vx_1) = \dfrac{\sigma_t'(\vx_1)}{\sigma_t(\vx_1)}\cdot\big(\vx-\vmu_t(\vx_1)\big) + \vmu_t'(\vx_1).
\]
FM models use two primary types of probability paths: diffusion paths, which are essentially equivalent to probability flow ODEs \citep{song2021score}, and optimal transport (OT) paths, which aim to transform distributions in the most efficient manner. In this paper, we focus on OT paths due to their wide application, theoretical elegance and appealing mathematical properties. We maintain that certain analytical results derived for OT paths can be extended to more general probability paths, providing a broader applicability of our findings. For more details regarding FM models, please refer to \cref{app:flow_matching_models}.

\subsection{Optimal Transport Paths}

OT paths are a specific instance of probability paths, where the flow is defined by linear interpolation between the initial and final positions. The mean function is given by \(\vmu_t(\vx_1) = t \vx_1\), and the standard deviation function is \(\sigma_t(\vx) = 1 - t\). The corresponding velocity field is 
\[
\vu_t(\vx|\vx_1) = \frac{\vx_1 - \vx}{1-t},
\]
resulting in the flow
\[
\vphi_t(\vx) = (1-t)\vx+t\vx_1
\]
when conditioned on \(\vx_1\). In this setup, the conditional flow follows straight-line paths with constant speed, which minimizes the complexity of the trajectory compared to diffusion paths. This leads to a simplified regression task for the CFM model, where the loss function becomes
\[
\mathcal{L}_{\text{CFM, OT}}(\vtheta) = \E_{t\sim\mathcal{U}[0,1], \vx_1\sim p_1(\vx_1), \vx_0\sim p_0(\vx_0)}\big[\|\vv_t(\vphi_t(\vx_0);\vtheta)-(\vx_1-\vx_0)\|_2^2\big].
\]
Note that this type of flow is also referred to as Rectified Flow in \citep{liu2023flow}.

\section{The Optimal Velocity Field}

In this section, we provide a detailed analysis of the derivation of the optimal velocity field and explore its role in shaping the geometry of generation paths. Building on these geometric insights, we prove that the generated samples memorize real data points and represent the data-defined subspace exactly.

\subsection{Derivation of the Optimal Velocity Field}

We begin by presenting the optimal velocity field that minimizes the CFM loss. The proof of \cref{prop:optimal_velocity_field}, along with all the other proofs in this paper, is provided in \cref{app:proofs_to_theorems}.

\begin{proposition}[Optimal velocity field for conditional flow matching]
\label{prop:optimal_velocity_field}
Assume that the function class \(\{\vv_t(\vx;\vtheta)\}\) has enough capacity. Then the optimal velocity field \( \vv_t^*(\vx) \) which minimizes the Conditional Flow Matching (CFM) loss
\[
\mathcal{L}_{\text{CFM}}(\vtheta) = \E_{t \sim \mathcal{U}[0,1], \vx_1 \sim p_1(\vx_1), \vx \sim p_t(\vx|\vx_1)} \big[ \|\vv_t(\vx; \vtheta) - \vu_t(\vx|\vx_1)\|_2^2 \big],
\]
is given by
\[
\vv_t^*(\vx) = \E_{\vx_1 \sim p_1(\vx_1)}[ \vu_t(\vx|\vx_1)|\vx, t].
\]
\end{proposition}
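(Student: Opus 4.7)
The statement is the standard ``regression minimizer is conditional expectation'' identity adapted to the CFM setting, so the plan is to reduce the optimization over $\vtheta$ to a pointwise (in $(t,\vx)$) quadratic minimization and then apply the basic fact that $\E[\vY]$ minimizes $\va \mapsto \E\|\va - \vY\|_2^2$.

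First I would rewrite the joint law driving the expectation. The triple $(t, \vx_1, \vx)$ is drawn with $t \sim \mathcal{U}[0,1]$, $\vx_1 \sim p_1$, and $\vx \sim p_t(\cdot|\vx_1)$. Let $p_t(\vx) = \int p_t(\vx|\vx_1)\, p_1(\vx_1)\, \dif \vx_1$ be the corresponding marginal in $\vx$ at time $t$, and let $p_1(\vx_1|\vx,t)$ denote the conditional density of $\vx_1$ given $(\vx,t)$ obtained by Bayes' rule. By Fubini/disintegration, the CFM loss can be written as a double integral
\[
\mathcal{L}_{\text{CFM}}(\vtheta) = \int_0^1 \int \Big( \int \|\vv_t(\vx;\vtheta) - \vu_t(\vx|\vx_1)\|_2^2 \, p_1(\vx_1|\vx,t)\, \dif \vx_1 \Big) p_t(\vx)\, \dif \vx\, \dif t.
\]

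Next I would observe that the outer measure $p_t(\vx)\,\dif \vx\, \dif t$ does not depend on $\vtheta$, so minimizing $\mathcal{L}_{\text{CFM}}$ is equivalent to minimizing the inner bracketed integral pointwise for each $(t,\vx)$ in the support of $p_t(\vx)\,\dif \vx\,\dif t$. For fixed $(t,\vx)$, the quantity $\vv_t(\vx;\vtheta)$ is a constant vector with respect to the inner integration against $p_1(\vx_1|\vx,t)$, so by the standard $L^2$-projection identity
\[
\E_{\vx_1|\vx,t}\big[\|\va - \vu_t(\vx|\vx_1)\|_2^2\big] = \|\va - \E_{\vx_1|\vx,t}[\vu_t(\vx|\vx_1)]\|_2^2 + \Var_{\vx_1|\vx,t}\big(\vu_t(\vx|\vx_1)\big),
\]
the unique minimizer in $\va$ is $\va^* = \E_{\vx_1|\vx,t}[\vu_t(\vx|\vx_1)]$, which is exactly the claimed $\vv_t^*(\vx)$.

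Finally I would invoke the ``enough capacity'' hypothesis to conclude that some $\vtheta^*$ realizes $\vv_t(\vx;\vtheta^*) = \vv_t^*(\vx)$ for $(p_t(\vx)\,\dif \vx\, \dif t)$-a.e.\ $(t,\vx)$, and hence attains the pointwise minimum simultaneously, yielding the global minimizer of $\mathcal{L}_{\text{CFM}}$. The only mildly delicate step is the disintegration and Bayes' rule used to define $p_1(\vx_1|\vx,t)$; this is routine when $p_t(\vx|\vx_1)$ admits a density for every $\vx_1$ (as in the Gaussian conditional paths used throughout the paper), so I expect no substantive obstacle, and I would simply remark that for OT paths the conditional $p_t(\cdot|\vx_1)$ is Gaussian with positive density, making the disintegration well-defined.
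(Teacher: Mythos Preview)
Your proposal is correct and follows essentially the same approach as the paper: both reduce the CFM loss to a pointwise (in $(t,\vx)$) quadratic minimization and identify the conditional expectation as the minimizer. The only cosmetic difference is that the paper sets the gradient of the pointwise loss to zero, whereas you use the bias--variance decomposition $\E\|\va-\vY\|_2^2 = \|\va-\E\vY\|_2^2 + \Var(\vY)$; these are equivalent standard arguments, and your version is slightly more explicit about the disintegration step.
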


Having established the general form of the optimal vector field, we now investigate a specific case where the prior \(p_0\) is standard Gaussian and the real distribution \(p_1\) is a discrete distribution.

\begin{theorem}[Optimal velocity field for discrete target distribution]
\label{thm:optimal_velocity_field_expression}
Let \( p_0 \sim \mathcal{N}(\vzero, \mI_d) \), and suppose \( p_1 \) is a discrete distribution over a set of points \( \{\vy^i\colon 1\leq i\leq N\}\subset \mathbb{R}^d \), given by
\[
p_1 \sim \frac{1}{N} \sum_{i=1}^{N} \delta_{\vy^i},
\]
where \( \delta_{\vy^i} \) denotes the Dirac delta function centered at \( \vy^i \). Assume that the conditional velocity field is
\[
\vu_t(\vx|\vx_1) = \dfrac{\sigma_t'(\vx_1)}{\sigma_t(\vx_1)}\cdot\big(\vx-\vmu_t(\vx_1)\big) + \vmu_t'(\vx_1).
\]
Then the optimal velocity field \( \vv_t^*(\vx) \) that minimizes the CFM loss is given by
\[
\vv_t^*(\vx) = \sum_{i=1}^{N} \Big(\dfrac{\sigma_t'(\vy^i)}{\sigma_t(\vy^i)}\cdot\big(\vx-\vmu_t(\vy^i)\big) + \vmu_t'(\vy^i)\Big)\cdot\dfrac{\exp\big(-\|(\vx-\vmu_t(\vy^i))/\sigma_t(\vy^i)\|_2^2/2\big)}{\sum_{j=1}^{N}\exp\big(-\|(\vx-\vmu_t(\vy^j))/\sigma_t(\vy^j)\|_2^2/2\big)}.
\]
Specifically, in the case of OT paths where \(\vmu_t(\vx_1)=t\vx_1\) and \(\sigma_t(\vx)=1-t\), the optimal velocity field simplifies to
\[
\vv_t^*(\vx) = \sum_{i=1}^{N}\dfrac{\vy^i-\vx}{1-t}\cdot\dfrac{\exp\big(-\|(\vx-t\vy^i)/(1-t)\|_2^2/2\big)}{\sum_{j=1}^{N}\exp\big(-\|(\vx-t\vy^j)/(1-t)\|_2^2/2\big)}. 
\]
\end{theorem}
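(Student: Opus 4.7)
The starting point is \cref{prop:optimal_velocity_field}, which identifies the minimizer as the conditional expectation
\[
\vv_t^*(\vx) = \E_{\vx_1 \sim p_1}[\vu_t(\vx|\vx_1) \mid \vx, t].
\]
My plan is to evaluate this conditional expectation explicitly by applying Bayes' rule on the joint law of $(\vx_1, \vx)$ at time $t$, exploiting the discreteness of $p_1$ to reduce integrals to finite sums, and then substituting the Gaussian form of $p_t(\vx \mid \vx_1)$ together with the given expression for $\vu_t(\vx \mid \vx_1)$.

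First, I would write
\[
p(\vx_1 \mid \vx, t) = \frac{p_t(\vx \mid \vx_1)\, p_1(\vx_1)}{\int p_t(\vx \mid \vx_1')\, p_1(\vx_1')\, \dif\vx_1'}.
\]
Plugging in the discrete prior $p_1 = \tfrac{1}{N}\sum_{i=1}^N \delta_{\vy^i}$ collapses both integrals to finite sums, and the uniform weight $1/N$ cancels, giving
\[
\vv_t^*(\vx) = \sum_{i=1}^N \vu_t(\vx \mid \vy^i)\,\cdot\,\frac{p_t(\vx \mid \vy^i)}{\sum_{j=1}^N p_t(\vx \mid \vy^j)}.
\]
Next, I would invoke the Gaussian assumption $p_t(\vx \mid \vy^i) = \gN(\vx;\vmu_t(\vy^i), \sigma_t(\vy^i)^2 \mI)$, so that $p_t(\vx\mid\vy^i)$ equals $(2\pi\sigma_t(\vy^i)^2)^{-d/2}\exp(-\|\vx - \vmu_t(\vy^i)\|_2^2/(2\sigma_t(\vy^i)^2))$. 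Substituting the supplied form of $\vu_t(\vx \mid \vy^i)$ into the weighted sum then yields the stated expression. The OT specialization follows by plugging in $\vmu_t(\vy^i) = t\vy^i$, $\sigma_t \equiv 1-t$, $\vmu_t'(\vy^i) = \vy^i$, and $\sigma_t'/\sigma_t = -1/(1-t)$, and simplifying $-(\vx-t\vy^i)/(1-t) + \vy^i = (\vy^i-\vx)/(1-t)$.

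The one subtlety I anticipate is the cancellation of the Gaussian normalizing constants in the softmax ratio: the factors $(2\pi\sigma_t(\vy^i)^2)^{-d/2}$ cancel between numerator and denominator only when $\sigma_t(\vy^i)$ does not depend on $i$. This is automatic in the OT setting that drives the rest of the paper (since $\sigma_t \equiv 1-t$), and it is the implicit convention in the general formula stated in the theorem; I would flag this explicitly when writing up the proof. Apart from this bookkeeping point, every step is a direct algebraic manipulation, so I do not expect any substantive obstacle.
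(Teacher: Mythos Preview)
Your proposal is correct and follows essentially the same route as the paper: apply Bayes' rule to compute the posterior weights $\P(\vx_1=\vy^i\mid \vx,t)$, reduce to a finite softmax over the Gaussian likelihoods, and plug into the conditional-expectation formula from \cref{prop:optimal_velocity_field}. Your remark about the normalizing constants $(2\pi\sigma_t(\vy^i)^2)^{-d/2}$ is a valid subtlety that the paper's proof also glosses over (it writes the likelihood as a standard normal at the standardized point, implicitly assuming $\sigma_t$ is independent of $\vx_1$, which holds for all the concrete paths used in the paper).
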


We now discuss the implications of \cref{thm:optimal_velocity_field_expression}. The optimal velocity field \( \vv_t^*(\vx) \) is a weighted sum of the directions \( (\vy^i-\vx)/(1-t) \), where these directions lie within the subspace spanned by \( \{\vy^1, \dotsc, \vy^N, \vx\} \). Although \( \vx \) changes over time, the entire generation path remains confined to the subspace \( \mathspan\{\vy^1, \dotsc, \vy^N, \vx_0\} \), with \( \vx_0 \) denoting the initial noise sample. This observation implies that the generation path is constrained to a fixed low-dimensional subspace defined by the initial conditions and the real data points.

The weighting in this expression takes the form of a \emph{softmax} function, where the softmax weights are determined by the Euclidean distance between the generated point \( \vx \) and \( t\vy^i \), scaled by the time-dependent factor \(1/(1-t)\). As a result, at time \(t\), real data points closer to \( \vx/t \)\footnote{When \(t=0\), all the real data points contribute equally.} exert a stronger influence on the velocity field, while the contribution from more distant points decays exponentially. As the generative process evolves, this weighting becomes increasingly concentrated around the nearest real data points, leading the generated samples to progressively approach the real data points. 

We remark that the optimal velocity field \( \vv_t^*(\vx) \) coincides with \(\vu_t(\vx)\) as defined in \citep{lipman2023flow}. To the best of our knowledge, this equivalence has not been explicitly addressed in previous works. As an immediate consequence, we have the following corollary based on \citep[Theorem 1]{lipman2023flow}, which provides the marginal probability path under the optimal velocity field. This result will be useful in the subsequent subsections.

\begin{corollary}[Marginal probability path induced by the optimal velocity field \citep{lipman2023flow}]
\label{cor:marginal_probability_path}
Let \( p_0 \sim \mathcal{N}(\vzero, \mI_d) \), and suppose \( p_1 \) is a discrete distribution over a set of points \( \{\vy^i\colon 1\leq i\leq N\}\subset \mathbb{R}^d \), given by
\[
p_1 \sim \frac{1}{N} \sum_{i=1}^{N} \delta_{\vy^i}.
\]
Then the optimal velocity field \( \vv_t^*(\vx) \) for OT paths, as derived in \cref{thm:optimal_velocity_field_expression}, induces the marginal probability path 
\[
p_t(\vx) = \int_{\mathbb{R}^d} p_t(\vx | \vx_1) p_1(\vx_1) \dif \vx_1 = \dfrac{1}{N}\sum_{i=1}^{N} \mathcal{N}(\vx; t\vy^i, (1 - t)^2\mI_d).
\]
\end{corollary}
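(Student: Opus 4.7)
The plan is to instantiate the marginal-path result cited from Lipman et al.\ with the specific OT conditional construction used in the paper. First, I would recall from the Preliminaries that under OT paths the conditional probability path is the Gaussian $p_t(\vx \mid \vx_1) = \mathcal{N}(\vx; t\vx_1, (1-t)^2\mI_d)$, with associated conditional velocity field $\vu_t(\vx\mid\vx_1) = (\vx_1 - \vx)/(1-t)$. Next, I would observe that \cref{prop:optimal_velocity_field} already identifies the minimizer of $\mathcal{L}_{\text{CFM}}$ as the conditional-expectation (``marginal'') velocity field $\vv_t^*(\vx) = \E_{\vx_1\sim p_1}[\vu_t(\vx\mid\vx_1)\mid \vx, t]$, which is precisely the object to which the Lipman et al.\ theorem applies.

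The main step is then to invoke \citep[Theorem 1]{lipman2023flow}: if the velocity field is defined as this conditional expectation with respect to a conditional Gaussian path, then via the continuity equation it generates exactly the marginalized density
\[
p_t(\vx) = \int_{\mathbb{R}^d} p_t(\vx\mid \vx_1)\, p_1(\vx_1)\,\dif \vx_1.
\]
Substituting the discrete target $p_1 = \frac{1}{N}\sum_{i=1}^{N}\delta_{\vy^i}$ collapses the integral against the Dirac masses and immediately yields the mixture
\[
p_t(\vx) = \frac{1}{N}\sum_{i=1}^{N}\mathcal{N}(\vx; t\vy^i, (1-t)^2\mI_d),
\]
which is the claim.

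Finally, I would perform a sanity check at the endpoints to confirm consistency with the setup: at $t=0$ each summand reduces to $\mathcal{N}(\vx; \vzero, \mI_d)$, so $p_0 = \mathcal{N}(\vzero, \mI_d)$ matches the chosen prior, and as $t\to 1^-$ the variance $(1-t)^2$ vanishes and the mixture converges weakly to $\frac{1}{N}\sum_i\delta_{\vy^i} = p_1$, matching the target. Because the corollary is a direct specialization of an already-cited theorem, I do not anticipate a substantive obstacle; the only minor care needed is to note that $\sigma_t = 1-t$ is strictly positive on $[0,1)$, so the conditional density is well-defined throughout the flow, and that the Dirac-measure marginalization is interpreted in the standard distributional sense rather than as a Lebesgue integral of a pointwise density.
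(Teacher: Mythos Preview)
Your proposal is correct and follows essentially the same approach as the paper: the paper remarks that $\vv_t^*(\vx)$ coincides with the marginal velocity field $\vu_t(\vx)$ of \citep{lipman2023flow} and then states the corollary as an immediate consequence of \citep[Theorem~1]{lipman2023flow}, which is exactly what you do by invoking that theorem and specializing to the discrete $p_1$. Your endpoint sanity checks are a nice addition but not present in the paper.
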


\subsection{Path Geometry under the Optimal Velocity Field}
\label{subsec:path_geometry_under_the_optimal_velocity_field}

In this subsection, we investigate the geometry of generation paths under the optimal velocity field. We aim to understand how various properties of the real data points shape these paths and how the velocity field guides the generated points towards the real data points. By examining the intermediate behavior of \( \vx_t \), we seek to gain intuitive insights into the memorization phenomenon. We focus on two scenarios: (\romannumeral 1) when the real data points are sparse and well-separated, and (\romannumeral 2) when they exhibit a hierarchical structure. These cases result in distinct generation path behaviors, as formalized in \cref{thm:path_geometry_straight,thm:path_geometry_hierarchical}.

We first consider a simple setup where the real data points are sparse and well-separated. In this case, we expect generated samples to move directly toward the nearest real data point as the generation progresses. The key intuition is that the softmax weights controlling the velocity field quickly concentrate on a single data point, resulting in nearly linear generation paths directed toward that point. \Cref{thm:path_geometry_straight} formalizes this observation and provides a probabilistic bound on how quickly the softmax weights concentrate on individual points. Please refer to \cref{fig:path_geometry_straight} for the conceptual illustration. We also provide the visualization of generation paths under such circumstances in \cref{app:numerical_experiments}.

\begin{figure}[htb]
\centering
\includegraphics[width=\textwidth]{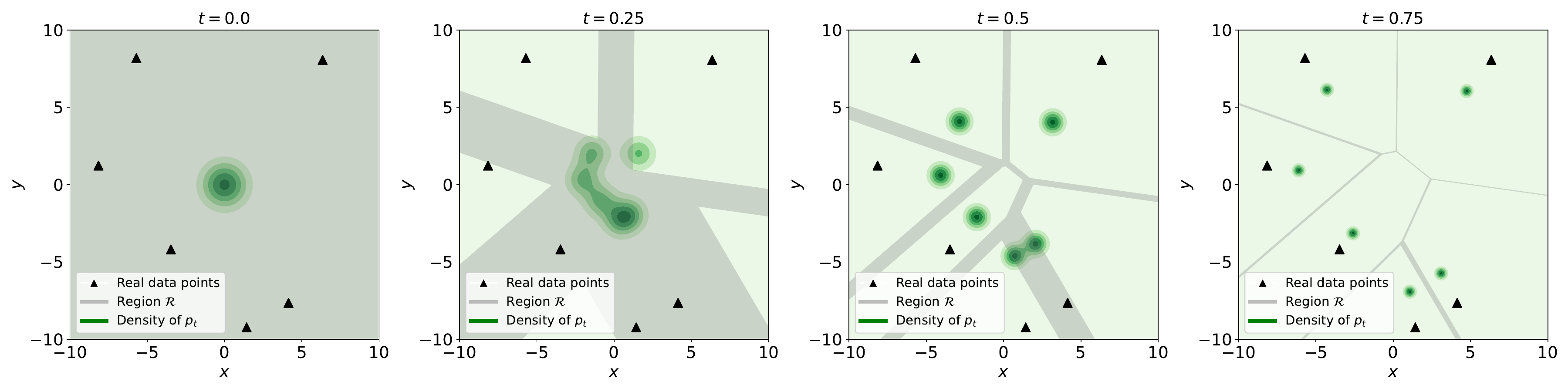}
\caption{The conceptual illustration of \cref{thm:path_geometry_straight}, which shows the regions where the softmax weights in the optimal velocity field are not close to a one-hot vector. The black triangles represent the sparse and well-separated real data points. The gray areas indicate regions where the largest entry of the softmax weight is less than \(0.99\). The darkness of the green areas reflects the densities of \(p_t\). Initially, the probability that \(x \sim p_t\) falls into the gray region is high (at \(t=0\)), but it decreases rapidly over time. This demonstrates that the softmax weights quickly focus on a single data point as the generation process evolves. The core of \cref{thm:path_geometry_straight} lies in estimating this probability.}
\label{fig:path_geometry_straight}
\end{figure}

\begin{theorem}[Probability bound for softmax weight concentration]
\label{thm:path_geometry_straight}
Let \( \{\vy^i\colon 1\leq i\leq N\}\subset \mathbb{R}^d \) be real data points such that \( \|\vy^i - \vy^j\|_2 \geq M \) for all \( 1\leq i\neq j\leq N \). Consider the softmax weight function:
\[
\vw_t(\vx) = \softmax\Big(-\dfrac{1}{2}\Big\Vert\dfrac{\vx - t\vy^1}{1 - t}\Big\Vert_2^2, -\dfrac{1}{2}\Big\Vert\dfrac{\vx - t\vy^2}{1 - t}\Big\Vert_2^2, \dotsc, -\dfrac{1}{2}\Big\Vert\dfrac{\vx - t\vy^N}{1 - t}\Big\Vert_2^2\Big).
\]
Assume that
\[
\vx \sim p_t = \dfrac{1}{N}\sum_{i=1}^{N} \mathcal{N}(t\vy^i, (1 - t)^2\mI_d).
\]
Then, the probability that \(\vx\) falls into the region \(\mathcal{R} \coloneqq \{\vx\colon \max(\vw_t(\vx)) \leq \tau < 1\}\), i.e., where the softmax weight vector is not close to a one-hot vector in the \(\tau\)-sense, is bounded by
\[
\P(\vx\in\mathcal{R})\leq
\dfrac{1}{\sqrt{2\pi}}\cdot\dfrac{1-t}{t}\cdot\dfrac{1}{M}\cdot\log\dfrac{\tau(N-1)}{1-\tau}\cdot N(N-1).
\]
\end{theorem}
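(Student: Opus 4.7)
The plan is to reduce the softmax-concentration condition $\max(\vw_t(\vx))\leq\tau$ to a union of affine \emph{slab} conditions, bound each slab's Gaussian mass by a one-dimensional projection argument, and conclude by a union bound over pairs of centers.

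First, I would rewrite the softmax inequality in terms of the squared distances $\tilde d_i^2(\vx) \coloneqq \|\vx-t\vy^i\|_2^2$. Let $k$ be the index of the nearest center to $\vx$; then the largest softmax entry is $w_k = \exp(-\tilde d_k^2/(2(1-t)^2))/\sum_j\exp(-\tilde d_j^2/(2(1-t)^2))$. Rearranging $w_k\leq\tau$ yields $\sum_{j\neq k}\exp\bigl((\tilde d_k^2-\tilde d_j^2)/(2(1-t)^2)\bigr)\geq(1-\tau)/\tau$, and by pigeonhole there must exist some $j\neq k$ with $\tilde d_j^2-\tilde d_k^2\leq 2(1-t)^2 L$, where $L \coloneqq \log(\tau(N-1)/(1-\tau))$. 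Therefore $\mathcal{R}\subseteq\bigcup_{i<j}S_{ij}$ with $S_{ij}\coloneqq\{\vx:|\tilde d_i^2-\tilde d_j^2|\leq 2(1-t)^2 L\}$.

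Next I would show each $S_{ij}$ is a slab of controlled thickness. Expanding $\tilde d_i^2-\tilde d_j^2=2t\vx^\top(\vy^j-\vy^i)+t^2(\|\vy^i\|_2^2-\|\vy^j\|_2^2)$, projecting onto the unit vector $\ve_{ij}\coloneqq(\vy^j-\vy^i)/\|\vy^j-\vy^i\|_2$, and invoking the separation hypothesis $\|\vy^j-\vy^i\|_2\geq M$ reduces membership in $S_{ij}$ to the scalar condition that $u\coloneqq\ve_{ij}^\top\vx$ lies in a one-dimensional interval of width at most $2(1-t)^2 L/(tM)$. For the probabilistic estimate, note that under any mixture component $\mathcal{N}(t\vy^k,(1-t)^2\mI_d)$ of $p_t$, the scalar projection $u$ is a Gaussian with standard deviation $(1-t)$, whose density is uniformly bounded by $1/((1-t)\sqrt{2\pi})$; thus the probability that $u$ falls into any interval of width $w$ is at most $w/((1-t)\sqrt{2\pi})$. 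Plugging in gives $\P(\vx\in S_{ij})\leq 2(1-t)L/(tM\sqrt{2\pi})$ uniformly in the mixture index, hence also under the full mixture $p_t$. A union bound over the $\binom{N}{2}$ pairs produces exactly the stated bound.

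The main obstacle is the first step: translating the multiplicative, globally coupled softmax inequality on $N$ weights into a collection of clean per-pair geometric constraints. The combination of identifying the dominant weight, performing the $(1-\tau)/\tau$ rearrangement, and applying pigeonhole is precisely where the logarithmic factor $\log(\tau(N-1)/(1-\tau))$ in the final bound actually originates. Once this reduction is in hand, the remaining pieces --- reading off the slab thickness from the separation hypothesis, using the pointwise density bound for a one-dimensional Gaussian, and performing the union bound --- are routine.
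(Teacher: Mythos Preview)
Your proposal is correct and follows essentially the same route as the paper's proof: reduce the softmax condition to a gap bound via the $(1-\tau)/\tau$ rearrangement and pigeonhole, cover $\mathcal{R}$ by slabs determined by pairs of centers, bound each slab's mass by a one-dimensional Gaussian density estimate, and finish with a union bound. The only cosmetic difference is that the paper uses $N(N-1)$ ordered pairs with one-sided regions of half the width, whereas you use $\binom{N}{2}$ unordered pairs with two-sided slabs; the two bookkeepings yield the identical final bound.
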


\Cref{thm:path_geometry_straight} highlights several important aspects regarding how the softmax weights concentrate on one component: 
\begin{itemize}
\item As \(t\) approaches 1, the probability of the softmax weight vector not being concentrated drops rapidly to zero, implying that the weights become more focused on a single component as \(t\) increases. 
\item The parameter \(\tau\) controls the level of concentration: larger values of \(\tau\) indicate stronger concentration, with the probability of non-concentration decreasing logarithmically as \(\tau\) increases.
\item The separation \(M\) between real data points inversely affects this probability; larger values of \(M\) imply greater separation between components, which leads to a lower probability that the softmax vector is not close to a one-hot vector.
\item As the number of components \(N\) increases, the probability of the softmax weight vector not being one-hot grows quadratically, indicating that a higher number of components requires greater separation to maintain concentration.
\item Crucially, the bound is independent of the dimensionality \(d\), implying that this concentration behavior remains robust even in high-dimensional spaces.
\end{itemize}

Together, \cref{thm:path_geometry_straight} shows that when the real data points are sparse and well-separated, the generation paths tend to move directly toward individual real data points, leading to memorization as the softmax weights increasingly favor one point near the end of the generation process.

Next, we explore more complex data distributions where the real data points are hierarchically structured. In such cases, generation paths first distinguish broad clusters of points before refining to finer ones, which is formalized in \cref{thm:path_geometry_hierarchical}. Please refer to \cref{fig:path_geometry_hierarchical} for the conceptual illustration. We also provide the visualization of generation paths in \cref{app:numerical_experiments}.

\begin{figure}[htb]
\centering
\includegraphics[width=\textwidth]{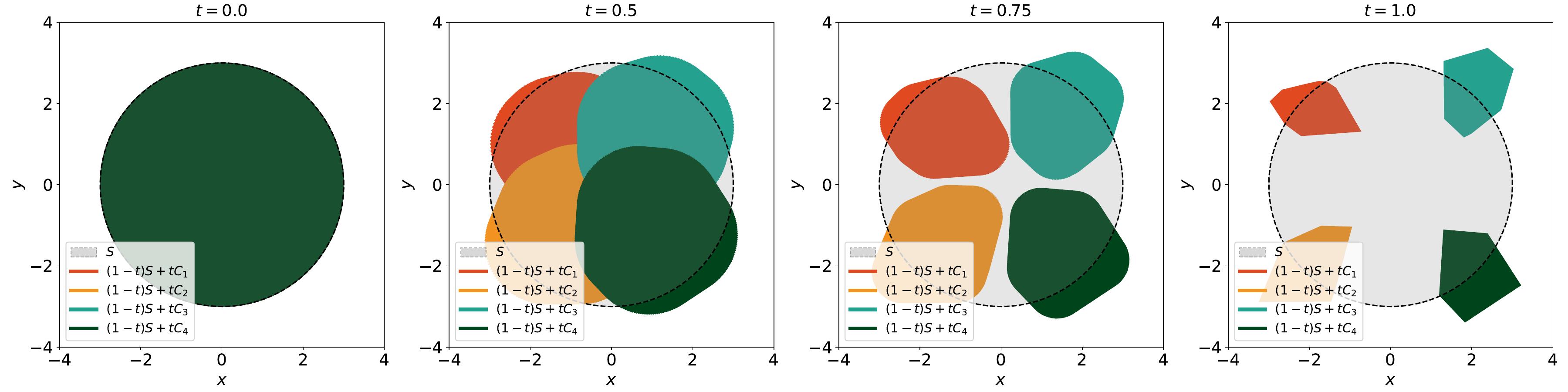}
\caption{The conceptual illustration of \cref{thm:path_geometry_hierarchical}, which depicts the hierarchy emergence. The gray disk represents the set \( S \) where \( p_0 \) is supported, while the colored regions at \( t = 1 \) indicate the support of \( p_1 \). At intermediate stages, the irregularly shaped regions correspond to \( (1-t)S + tC_i \). Initially, at \( t = 0 \), all regions coincide with \( S \), forming a mixed area. As time progresses, the regions start to separate, and once a generation path enters a specific region, it remains confined to it. The main result of \cref{thm:path_geometry_hierarchical} is the proof of a separation time, strictly before \( t = 1 \), when the regions \( (1-t)S + tC_i \) become fully distinct.}
\label{fig:path_geometry_hierarchical}
\end{figure}

\begin{theorem}[Hierarchy emergence of generation paths]
\label{thm:path_geometry_hierarchical}
Let \(C_{i, j}\;(i \in \mathcal{I},\, j \in \mathcal{J}_i)\) be a collection of disjoint, bounded, closed convex sets where the distribution \(p_1\) is supported. Define
\[
C_i = \conv\Big(\bigcup_{j \in \mathcal{J}_i} C_{i, j}\Big), \quad i \in \mathcal{I},
\]
where \(\conv(\cdot)\) denotes the convex hull of a set. Assume that the sets \(C_i\) are disjoint, and that the distribution \(p_0\) is supported on a bounded, closed convex set \(S\). 

Then, there exists \(t_1 \in (0, 1)\) such that for any \(t \in (t_1, 1]\), the \(|\mathcal{I}|\) convex sets
\[
(1 - t) S + t C_i, \quad i \in \mathcal{I}
\]
are disjoint. Similarly, there exists \(t_2 \in (t_1, 1)\) such that for any \(t \in (t_2, 1]\), the \(\sum_{i \in \mathcal{I}} |\mathcal{J}_i|\) convex sets
\[
(1 - t) S + t C_{i, j}, \quad i \in \mathcal{I},\, j \in \mathcal{J}_i
\]
are disjoint.

Let \(\vphi_t\) denote the optimal flow under the OT paths. Then, for any \(\vx\) drawn from \(p_0\), if \(\vphi_{t_1}(\vx) \in (1 - t_1) S + t_1 C_i\), it follows that \(\vphi_t(\vx) \in (1 - t) S + t C_i\) for any \(t \in (t_1, 1]\). Specifically, \(\vphi_1(\vx) \in C_i\). Furthermore, if \(\vphi_{t_2}(\vx) \in (1 - t_2) S + t_2 C_{i,j}\), then \(\vphi_t(\vx) \in (1 - t) S + t C_{i,j}\) for any \(t \in (t_2, 1]\). Specifically, \(\vphi_1(\vx) \in C_{i,j}\).
\end{theorem}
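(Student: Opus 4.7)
The plan is to split the argument into a geometric estimate that produces the two separation times and a topological step that traps each trajectory in a single tube from $t_1$ (resp.\ $t_2$) onward.

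For the existence of $t_1$, I would use a direct Minkowski-sum distance bound. If $\vz$ lies in the intersection $\big((1-t)S + tC_i\big) \cap \big((1-t)S + tC_j\big)$, writing $\vz = (1-t)\vu + t\va = (1-t)\vu' + t\vb$ with $\vu, \vu' \in S$, $\va \in C_i$, $\vb \in C_j$ gives $\|\va - \vb\|_2 \leq \tfrac{1-t}{t}\,\mathrm{diam}(S)$. Since the $C_i$ are disjoint compact convex sets, $d := \min_{i\neq j}\dist(C_i, C_j) > 0$, so the intersection is empty once $t > t_1 := \mathrm{diam}(S)/(\mathrm{diam}(S)+d)$, and $t_1 \in (0,1)$. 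Applying the same reasoning to the finer family $\{C_{i,j}\}$ with $d' := \min_{(i,j)\neq(i',j')}\dist(C_{i,j},C_{i',j'}) > 0$ gives a threshold that I enlarge slightly if necessary so that $t_2 > t_1$ while keeping $t_2 < 1$.

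For the containment of trajectories, the key structural input is that under the OT/Rectified-Flow construction the marginal $p_t$ is the law of $(1-t)\vX_0 + t\vX_1$ for independent $\vX_0 \sim p_0$, $\vX_1 \sim p_1$, and the optimal flow satisfies $\vphi_t \# p_0 = p_t$. Because $\vX_0 \in S$ and $\vX_1 \in \bigcup_{i,j} C_{i,j}$, one immediately gets $\mathrm{supp}(p_t) \subset \bigcup_i A_i(t)$ where $A_i(t) := (1-t)S + tC_i$, and likewise $\mathrm{supp}(p_t) \subset \bigcup_{i,j}\big((1-t)S + tC_{i,j}\big)$. Since $\vphi_t$ is continuous in $t$ and $\vphi_t(\mathrm{supp}(p_0)) \subset \mathrm{supp}(p_t)$, every trajectory $\vphi_t(\vx)$ (for $\vx$ in the support of $p_0$) lies in $\bigcup_i A_i(t)$ at every time.

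I would then conclude by a connectedness argument. Fix such an $\vx$ and set $B_i := \{t \in [t_1, 1] : \vphi_t(\vx) \in A_i(t)\}$. The set-valued Minkowski-sum map $t \mapsto A_i(t)$ has closed graph, and $t \mapsto \vphi_t(\vx)$ is continuous, so each $B_i$ is closed in $[t_1, 1]$. By the first step the $B_i$ are pairwise disjoint, by the second they cover $[t_1, 1]$, and $[t_1, 1]$ is connected, so exactly one $B_i$ is nonempty --- the one determined by $\vphi_{t_1}(\vx) \in A_i(t_1)$. This gives $\vphi_t(\vx) \in A_i(t)$ for all $t \in [t_1, 1]$, and specializing at $t = 1$ yields $\vphi_1(\vx) \in C_i$. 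Rerunning the identical template on $[t_2, 1]$ with the finer cover $\{(1-t)S + tC_{i,j}\}$ gives the second assertion. The main obstacle I anticipate is the support-containment step: it requires the Rectified-Flow identification of $p_t$ as the law of $(1-t)\vX_0 + t\vX_1$ under the independent coupling, which differs from the Gaussian-prior form of \cref{cor:marginal_probability_path} and must be used in the more general bounded-support setting assumed here. Once that identification is in hand, the geometric estimate for $t_1, t_2$ and the topological reduction are both routine.
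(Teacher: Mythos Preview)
Your proposal is correct and reaches the same conclusion, but the route differs from the paper's in both halves of the argument.

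For the existence of \(t_1\), the paper works abstractly: it defines \(d_{i_1,i_2}(t)=\dist\big((1-t)S+tC_{i_1},(1-t)S+tC_{i_2}\big)\), shows that the zero set \(T=\{t:d_{i_1,i_2}(t)=0\}\) is a closed interval (closed by continuity, convex via the identity \((1-(\lambda u+(1-\lambda)v))S+(\lambda u+(1-\lambda)v)C_i=\lambda((1-u)S+uC_i)+(1-\lambda)((1-v)S+vC_i)\)), and concludes \(t_{i_1,i_2}=\max T<1\) because \(1\notin T\). Your approach is more direct and quantitative: from \(\vz=(1-t)\vu+t\va=(1-t)\vu'+t\vb\) you get \(\|\va-\vb\|_2\le\tfrac{1-t}{t}\,\mathrm{diam}(S)\), which yields the explicit threshold \(t_1=\mathrm{diam}(S)/(\mathrm{diam}(S)+d)\). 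This buys you a concrete formula rather than mere existence, at no extra cost.

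For the trajectory-trapping, the paper argues by contradiction with a distance function: it sets \(d(t)=\dist(\vphi_t(\vx),(1-t)S+tC_i)\), notes \(d(t_1)=0\), \(d(t')>0\), and derives a contradiction because any point in \(\bigcup_{k\ne i}A_k(t)\) is bounded away from \(A_i(t)\), forcing a jump in the continuous function \(d\). Your connectedness formulation (\(B_i\) closed by the closed-graph property of \(t\mapsto A_i(t)\), disjoint, covering \([t_1,1]\)) is the same topological content packaged more cleanly. Both versions rest on the identical structural input---that the support of \(p_t\) lies in \(\bigcup_i A_i(t)\). You are right to flag that this requires the Rectified-Flow identification of \(p_t\) as the law of \((1-t)\vX_0+t\vX_1\) in the bounded-support setting, whereas \cref{cor:marginal_probability_path} is stated for a Gaussian prior; the paper's proof simply cites \cref{cor:marginal_probability_path} at this step without addressing the discrepancy, so your version is in fact more careful on this point.
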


We provide several clarifications regarding the assumptions and results of \cref{thm:path_geometry_hierarchical}.

\begin{itemize}
\item The bounded, closed convex sets \(C_{i,j}\)'s can be isolated points, while the sets \(C_i\) can be the convex hulls of these points, forming polygons. See \cref{fig:path_geometry_hierarchical} for an illustration under this configuration.

\item The assumption that \(p_0\) is supported on a bounded convex set \(S\) can be approximately satisfied when \(p_0\) is a standard Gaussian distribution. In high dimensions, most of the probability mass in Gaussian distributions is concentrated near a thin shell, akin to a ``soap bubble''\footnote{See \url{https://www.inference.vc/high-dimensional-gaussian-distributions-are-soap-bubble/} for details.}, effectively approximating support on a bounded, closed convex set.

\item The theorem aligns with empirical findings in diffusion models, where coarse features are generated first, followed by finer details \citep{wang2023painters}. This matches the separation of broad clusters \(C_i\) first, then finer clusters \(C_{i,j}\), reflecting the progressive refinement of image details.

\end{itemize}

\Cref{thm:path_geometry_hierarchical} shows that when the real data points exhibit a hierarchical structure, the generation paths first distinguish broad clusters before refining to smaller ones. This progression leads to memorization, as the paths increasingly focus on individual points within the clusters, ultimately capturing specific real data points at the end of the generation process.

\subsection{Memorization Phenomenon under the Optimal Velocity Field}

We are now prepared to formally state the memorization phenomenon under the optimal velocity field. One might question why we need to restate and prove this result, as \cref{cor:marginal_probability_path} already demonstrates that \( p_1(\vx) \) corresponds to the discrete distribution of real data points. We point out that while \cref{cor:marginal_probability_path} relies on the marginal distribution and tools such as the continuity equation to establish the result, it does not provide insight into the behavior of the individual generation paths. In contrast, \cref{thm:memorization} directly focuses on the flow ODE, offering a more direct and transparent analysis. Moreover, the technique of isolating the linear term and applying the method of variation of constants will prove especially useful in the next section.

\begin{theorem}[Memorization phenomenon under the optimal velocity field]
\label{thm:memorization}
Consider the flow ODE
\[
\dfrac{\dif\vphi_t(\vx)}{\dif t} = \vv_t^*(\vphi_t(\vx)), \quad \vphi_0(\vx)=\vx, \quad t\in[0, 1],
\]
where
\[
\vv_t^*(\vx) = \sum_{i=1}^{N}\dfrac{\vy^i-\vx}{1-t}\cdot\dfrac{\exp\big(-\|(\vx-t\vy^i)/(1-t)\|_2^2/2\big)}{\sum_{j=1}^{N}\exp\big(-\|(\vx-t\vy^j)/(1-t)\|_2^2/2\big)}.
\]
Then the set \(\{\vphi_1(\vx)\colon \vx\in\mathbb{R}^d\}\) equals \(\{\vy^1, \vy^2, \dotsc, \vy^N\}\) up to a finite set.
\end{theorem}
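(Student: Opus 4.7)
The plan is to solve the flow ODE explicitly via the method of variation of constants, as hinted by the paragraph preceding the theorem, and to analyze the behavior as $t\to 1$ through the resulting integral representation. First I write
\[
\vv_t^*(\vx) = -\frac{1}{1-t}\vx + \frac{1}{1-t}\vm(\vx,t), \qquad \vm(\vx,t) := \sum_{i=1}^N w_i(\vx,t)\vy^i \in \conv\{\vy^1,\dots,\vy^N\},
\]
where $w_i(\vx,t)$ is the softmax weight appearing in $\vv_t^*$. Isolating the linear term and multiplying by the integrating factor $1/(1-t)$ gives $\frac{\dif}{\dif t}(\vphi_t/(1-t)) = \vm(\vphi_t,t)/(1-t)^2$, which integrates to
\[
\vphi_t(\vx) = (1-t)\vx + (1-t)\int_0^t \frac{\vm(\vphi_s,s)}{(1-s)^2}\dif s = (1-t)\vx + t\,\bar{\vm}_t,
\]
where $\bar{\vm}_t$ is the normalized average of $s\mapsto \vm(\vphi_s(\vx),s)$ against the measure $(1-s)^{-2}\dif s$ on $[0,t]$; in particular $\bar{\vm}_t\in\conv\{\vy^1,\dots,\vy^N\}$ for every $t\in[0,1)$.

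I then sharpen the softmax via this closed form. A direct substitution yields
\[
\Big\Vert\frac{\vphi_t(\vx)-t\vy^i}{1-t}\Big\Vert_2^2 = \Big\Vert\vx + \frac{t}{1-t}(\bar{\vm}_t-\vy^i)\Big\Vert_2^2,
\]
so as $t\to 1$ the exponent is dominated by $(t/(1-t))^2\|\bar{\vm}_t-\vy^i\|_2^2$. Along any subsequence $t_k\to 1$ with $\bar{\vm}_{t_k}\to\vm^*$, the weights $w_i(\vphi_{t_k},t_k)$ collapse onto $I^*(\vm^*):=\argmin_i\|\vm^*-\vy^i\|_2$ and become uniform on that set, hence $\vm(\vphi_{t_k},t_k)\to \tfrac{1}{|I^*|}\sum_{i\in I^*}\vy^i$. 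Because the normalized measure $(1-s)^{-2}\dif s/\int_0^t(1-s')^{-2}\dif s'$ concentrates near $s=t$ as $t\to 1$, this same limit is inherited by $\bar{\vm}_{t_k}$, producing the self-consistency conditions $\vm^*=\tfrac{1}{|I^*|}\sum_{i\in I^*}\vy^i$ and $\|\vm^*-\vy^i\|_2$ constant over $i\in I^*$.

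Next I enumerate these admissible limits: for each nonempty $I\subset\{1,\dots,N\}$ set $\vm_I:=\tfrac{1}{|I|}\sum_{i\in I}\vy^i$, and retain only those $I$ for which the $\vy^i$ with $i\in I$ are equidistant from $\vm_I$ and strictly closer to $\vm_I$ than every $\vy^j$ with $j\notin I$. This produces a finite set $\mathcal{F}\subset\mathbb{R}^d$ with $|\mathcal{F}|\le 2^N-1$ that contains each $\vy^i$ through the singleton $I=\{i\}$. A Lyapunov-type estimate, exploiting the exponential localization of the softmax once $\bar{\vm}_t$ enters a small neighborhood of a point of $\mathcal{F}$, rules out oscillation between distinct elements of $\mathcal{F}$ along any single trajectory, so $\vphi_1(\vx):=\lim_{t\to 1}\vphi_t(\vx)$ exists and lies in $\mathcal{F}$. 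Combining with \cref{cor:marginal_probability_path}, which says that the law of $\vphi_1(\vx)$ under $\vx\sim p_0$ equals $\frac{1}{N}\sum_i\delta_{\vy^i}$, shows each $\vy^i$ has a basin of positive Lebesgue measure and is attained, so $\{\vy^1,\dots,\vy^N\}\subset\{\vphi_1(\vx):\vx\in\mathbb{R}^d\}\subset\mathcal{F}$ and the two sets differ only by a finite subset of $\mathcal{F}\setminus\{\vy^1,\dots,\vy^N\}$. The main obstacle I expect is the Lyapunov step: the softmax is the only mechanism pinning the trajectory near $t=1$, and quantifying this requires showing that once $\bar{\vm}_t$ is within a sufficiently small, possibly $t$-dependent, distance of some $\vm_I\in\mathcal{F}$, the induced $\vm(\vphi_t,t)$ stays even closer, so that the weighted average $\bar{\vm}_t$ cannot drift back toward a different element of $\mathcal{F}$.
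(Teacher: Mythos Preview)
Your approach via variation of constants is exactly the paper's. The paper rewrites the ODE as $\dot{\vphi}_t = -\vphi_t/(1-t) + \mY\vw_t(\vphi_t)/(1-t)$, solves the homogeneous part, and obtains the same integral formula
\[
\vphi_t(\vx) = (1-t)\Big(\vx + \int_0^t \frac{\mY\vw_s(\vphi_s(\vx))}{(1-s)^2}\dif s\Big),
\]
which is your $(1-t)\vx + t\bar{\vm}_t$ in different notation.

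Where the two diverge is in extracting the $t\to 1$ limit. The paper does not pass to subsequences or enumerate fixed points; it simply applies L'H\^opital's rule to $(1-t)\int_0^t \mY\vw_s/(1-s)^2\,\dif s$, obtaining
\[
\vphi_1(\vx) = \lim_{t\to 1}\mY\vw_t(\vphi_t(\vx)) = \mY\vw_1(\vphi_1(\vx)),
\]
and then argues that $\vw_1(\cdot)$ is one-hot at the index minimizing $\|\vphi_1(\vx)-\vy^i\|_2$, except on the measure-zero set where several distances tie. That is the entire argument: a fixed-point equation read off from L'H\^opital, followed by the observation that its solutions are the $\vy^i$ (plus the tie set).

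Note that this L'H\^opital step presupposes exactly what you isolate as the hard part: the existence of $\lim_{t\to 1}\vw_t(\vphi_t(\vx))$. L'H\^opital's rule requires the derivative ratio to converge before one can conclude anything about the original ratio, and the paper never verifies this hypothesis---it effectively assumes away the oscillation scenario you worry about in your Lyapunov step. So your concern is well placed, but you are not missing anything the paper supplies; the paper's proof is simply less scrupulous on this point. Your subsequential analysis and the explicit description of the admissible limit set $\mathcal{F}$ as equidistant centroids $\vm_I$ is in fact a sharper account of the exceptional set than the paper gives (the paper only says ties occur on a measure-zero set of initial $\vx$, without identifying where $\vphi_1$ can land on those ties). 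If you want to match the paper's level of rigor, you can drop the Lyapunov discussion and invoke L'H\^opital directly; if you want a genuinely complete proof, the Lyapunov step is the place where additional work beyond the paper is needed.
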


In \cref{thm:memorization}, the expression ``up to a finite set'' is necessary and cannot be omitted. For example, in a simple scenario with only two real data points, if a noise sample starts at the midpoint between these two points, it will remain stationary under \(\vv_t^*\), similar to a stationary force field. However, we emphasize that such occurrences have zero probability since the noise samples are drawn from a standard Gaussian distribution. Therefore, despite this technicality, we still refer to this phenomenon as memorization.

\section{The Suboptimal Velocity Field}

In the previous section, we explore the theoretical properties of the optimal velocity field. However, in practical applications, neural networks are commonly used to approximate the velocity field, resulting in a suboptimal one. To better capture and analyze the resulting effects, we develop a network class called the Orthogonal Subspace Decomposition Network (OSDNet) to model the suboptimal velocity field. With OSDNet, we show that under the suboptimal velocity field, off-subspace components decay, while the components within the subspace demonstrate generalization behavior, ensuring generated samples remain close to the data-defined subspace while maintaining diversity.

\subsection{Orthogonal Subspace Decomposition Network}

We develop a network class called Orthogonal Subspace Decomposition Network (OSDNet) to model the suboptimal velocity field. First, we present the formal definition, followed by a detailed explanation of the rationale behind it. Please refer to \cref{fig:OSDNet} for an illustration.

\begin{definition}[Orthogonal Subspace Decomposition Network (OSDNet)]
Let \( \{\vy^i\colon 1\leq i\leq N\}\subset \mathbb{R}^d \) be real data points, which form the columns of the matrix \(\mY = [ \vy^1, \vy^2, \dotsc, \vy^N ]\). Let \( \mV\in \mathbb{R}^{d\times D}\) be a matrix with orthonormal columns spanning the same subspace as these data points, obtained by performing a reduced singular value decomposition on \(\mY\), i.e., \(\mY = \mV\mR\), where \(\mR\in\mathbb{R}^{D\times N}\) is the product of a diagonal matrix (containing the non-zero singular values of \(\mY\)) and a matrix with orthonormal rows (containing the right singular vectors). The suboptimal velocity field, as defined by OSDNet, is given by
\[
\hat{\vv}_t\big(\vx; \hat{\mO}_t, \hat{\vs}_t(\vx)\big) = \mV^\perp \cdot \hat{\mO}_t \cdot (\mV^\perp)^\top \cdot \vx + \mV \cdot \hat{\vs}_t(\vx),
\]
where \( \mV^\perp \) denotes the orthogonal complement of \( \mV \), \( \hat{\mO}_t \) is a time-dependent diagonal matrix, and \( \hat{\vs}_t(\vx) \) is a time-dependent vector function representing the component of the velocity field within the subspace spanned by \(\mV\).
\end{definition}

\begin{figure}[htb]
\centering
\includegraphics[width=.7\textwidth]{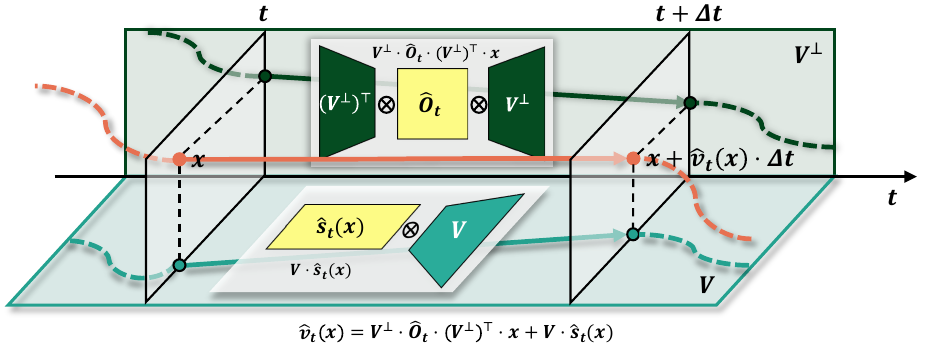}
\caption{An illustration of the Orthogonal Subspace Decomposition Network (OSDNet) in a 2-dimensional space, where time \(t\) progresses from left to right with an interval of \(\Delta t\). The velocity vector \(\hat{\vv}_t(\vx)\) (shown by the \textcolor[rgb]{0.88, 0.29, 0.13}{red} arrow pointing from \(\vx\) to \(\vx + \hat{\vv}_t(\vx) \cdot \Delta t\)) is decomposed into two components: one within the subspace \(\mV\) (\textcolor[rgb]{0.14, 0.63, 0.56}{cyan}) and the other in the orthogonal complement \(\mV^\perp\) (\textcolor[rgb]{0.0, 0.27, 0.11}{green}). These components, \(\mV \cdot \hat{\vs}_t(\vx)\) and \(\mV^\perp \cdot \hat{\mO}_t \cdot (\mV^\perp)^\top \cdot \vx\), are computed by OSDNet, represented within the white background. The trapezoidal shapes in the network indicate the input and output dimensions, while the symbol \(\otimes\) represents matrix multiplication. The dotted curves illustrate the trajectory \(\vphi_t(\vx)\) before time \(t\) and after time \(t + \Delta t\).}
\label{fig:OSDNet}
\end{figure}

This definition is grounded in the principle that the ranges of \(\mV\) and \(\mV^\perp\) span the entire space. As a result, we can project the suboptimal velocity field \(\hat{\vv}_t(\vx)\) onto the subspaces defined by \(\mV\) and \(\mV^\perp\), respectively. The component within \(\mV\) naturally takes the form \(\mV \cdot \hat{\vs}_t(\vx)\), whereas the term within \(\mV^\perp\), specifically \(\mV^\perp \cdot \hat{\mO}_t \cdot (\mV^\perp)^\top \cdot \vx\), might initially appear unconventional. However, this formulation is motivated by the structure of the optimal velocity field, which we explore in \cref{prop:optimal_velocity_field_as_a_specific_instance_of_OSDNet}.

\begin{proposition}[Optimal velocity field as a specific instance of OSDNet]
\label{prop:optimal_velocity_field_as_a_specific_instance_of_OSDNet}
The optimal velocity field \(\vv_t^*(\vx)\) derived in \cref{thm:optimal_velocity_field_expression} is a specific instance of the OSDNet, where
\[
\hat{\mO}_t^* = -\dfrac{1}{1 - t} \cdot \mI_{d-D}
\]
and
\[
\hat{\vs}_t^*(\vx) = \dfrac{1}{1 - t} \cdot \big(\mR\cdot\vw_t(\vx) - \mV^\top \cdot \vx \big).
\]
\end{proposition}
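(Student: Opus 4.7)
The plan is to take the closed form of $\vv_t^*(\vx)$ from \cref{thm:optimal_velocity_field_expression}, rewrite it as a single vector expression involving the data matrix $\mY$, and then split it along the complementary subspaces $\range(\mV)$ and $\range(\mV^\perp)$ using the identity $\mI_d = \mV\mV^\top + \mV^\perp(\mV^\perp)^\top$. Matching the resulting two pieces against the OSDNet template will directly yield the claimed formulas for $\hat{\mO}_t^\ast$ and $\hat{\vs}_t^\ast$.

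Concretely, I would first observe that the softmax weight vector
\[
\vw_t(\vx) = \softmax\Big(-\tfrac{1}{2}\|(\vx-t\vy^i)/(1-t)\|_2^2\Big)_{i=1}^{N}
\]
has entries summing to one. Factoring out $1/(1-t)$ in the expression of $\vv_t^*(\vx)$ and splitting the sum into a $\vy^i$-part and an $\vx$-part, this lets me rewrite
\[
\vv_t^*(\vx) \;=\; \frac{1}{1-t}\Bigl(\mY\,\vw_t(\vx) - \vx\Bigr).
\]
Substituting the reduced SVD $\mY = \mV\mR$ turns the first term into $\mV\mR\vw_t(\vx)$, which already lies in $\range(\mV)$.

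Next I would insert the orthogonal decomposition $\vx = \mV\mV^\top\vx + \mV^\perp(\mV^\perp)^\top\vx$ into the $-\vx/(1-t)$ term. Regrouping the pieces that live in $\range(\mV)$ and those in $\range(\mV^\perp)$ gives
\[
\vv_t^*(\vx) \;=\; \mV\cdot\tfrac{1}{1-t}\bigl(\mR\,\vw_t(\vx) - \mV^\top\vx\bigr) \;+\; \mV^\perp\cdot\Bigl(-\tfrac{1}{1-t}\mI_{d-D}\Bigr)(\mV^\perp)^\top\vx,
\]
which is exactly the OSDNet form $\mV^\perp\hat{\mO}_t(\mV^\perp)^\top\vx + \mV\hat{\vs}_t(\vx)$ with the asserted choices of $\hat{\mO}_t^\ast$ and $\hat{\vs}_t^\ast$. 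Because $\range(\mV)$ and $\range(\mV^\perp)$ are complementary, this identification is unique, so no additional argument is required.

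There is no serious obstacle: the proof is essentially bookkeeping. The only point that needs a moment of care is the use of the fact that $\vw_t(\vx)^\top\vone = 1$ to pull the $-\vx/(1-t)$ outside the sum, and the correct handling of the reduced (as opposed to full) SVD so that $\mR \in \mathbb{R}^{D\times N}$ matches the dimensions of $\vw_t(\vx) \in \mathbb{R}^N$ and $\mV^\top\vx \in \mathbb{R}^D$. Once these dimensions are tracked, the identification is immediate.
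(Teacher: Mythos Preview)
Your proposal is correct and follows essentially the same approach as the paper: both rewrite $\vv_t^*(\vx)$ as $\frac{1}{1-t}(\mY\vw_t(\vx)-\vx)$, substitute $\mY=\mV\mR$, and split $\vx$ via $\mI_d=\mV\mV^\top+\mV^\perp(\mV^\perp)^\top$ to read off the two OSDNet components. Your added remark on uniqueness via complementarity of the subspaces is a small bonus the paper omits, but otherwise the arguments are identical.
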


We now elaborate on the assumption that \(\hat{\mO}_t\) is diagonal. Using the method of variation of constants, we can expression the solution to the ODE 
\[
\dfrac{\dif \vphi_t(\vx)}{\dif t} = \hat{\vv}_t\big(\vphi_t(\vx); \hat{\mO}_t, \hat{\vs}_t(\vx)\big) = \mV^\perp \cdot \hat{\mO}_t \cdot (\mV^\perp)^\top \cdot \vphi_t(\vx) + \mV \cdot \hat{\vs}_t(\vphi_t(\vx)), \quad \vphi_0(\vx)=\vx, \quad t\in [0, 1],
\]
as
\[
\vphi_t(\vx) = \mathcal{T}\exp\Big(\int_{0}^{t} \mV^\perp\cdot\hat{\mO}_s\cdot(\mV^\perp)^\top \dif s \Big) \cdot \bigg(\vx + \int_{0}^{t} \mathcal{T}\exp\Big( \int_{0}^{s} \mV^\perp\cdot \hat{\mO}_\tau\cdot(\mV^\perp)^\top \dif \tau \Big)^{-1} \cdot \mV \cdot \hat{\vs}_s(\vphi_s(\vx)) \dif s \bigg),
\]
where \(\mathcal{T}\exp\) denotes the time-ordered matrix exponential\footnote{Please refer to \cref{app:time_ordered_exponential} for background on time-ordered exponential.}. To ensure that the matrices \(\{\mV^\perp\cdot\hat{\mO}_t\cdot(\mV^\perp)^\top\colon 0\leq t\leq 1\}\) commute with each other, and thus making the time-ordered matrix exponentials tractable, we assume that \(\{\hat{\mO}_t\colon 0\leq t\leq 1\}\) are diagonal. Under these conditions, we can decompose the solution \(\vphi_1(\vx)\) (i.e., the generated sample) into two components: one lying within the span of \(\mV\), and the other within the span of \(\mV^\perp\).

We begin by computing
\[
\exp\Big( \int_{0}^{1} \mV^\perp\cdot\hat{\mO}_s\cdot (\mV^\perp)^\top \dif s \Big).
\]
Note that
\[
\mV^\perp\cdot\hat{\mO}_s\cdot (\mV^\perp)^\top =
\begin{bmatrix}
\mV & \mV^\perp
\end{bmatrix}
\cdot
\begin{bmatrix}
\vzero & \vzero \\
\vzero & \hat{\mO}_s\\
\end{bmatrix}
\cdot
\begin{bmatrix}
\mV^\top \\
(\mV^\perp)^\top
\end{bmatrix}.
\]
By the definition of the matrix exponential, we obtain
\[
\exp\Big( \int_{0}^{1} \mV^\perp\cdot\hat{\mO}_s\cdot (\mV^\perp)^\top \dif s \Big) = \mV\mV^\top + \mV^\perp\cdot\exp\Big(\int_0^1\hat{\mO}_s\dif s\Big)\cdot 
(\mV^\perp)^\top.
\]
Next, we compute
\[
\exp\Big( \int_{0}^{1} -\mV^\perp \cdot \hat{\mO}_s \cdot (\mV^\perp)^\top \dif s \Big)
\]
in a similar way and get
\[
\exp\Big( \int_{0}^{1} \mV^\perp \cdot \hat{\mO}_s \cdot (\mV^\perp)^\top \dif s \Big)^{-1} = \mV\mV^\top + \mV^\perp \cdot \exp\Big(\int_{0}^{1} -\hat{\mO}_s\dif s\Big)\cdot  (\mV^\perp)^\top.
\]
Substituting these expressions into \(\vphi_1(\vx)\), and noting that \(\mV\) and \(\mV^\perp\) have orthonormal columns, we arrive at the decomposition
\[
\vphi_1(\vx) = \underbrace{\mV\cdot\Big(\mV^\top\cdot \vx+\int_{0}^{1}\hat{\vs}_s(\vphi_s(\vx))\dif s\Big)}_{\text{within the span of }\mV} + \underbrace{\mV^\perp\cdot \exp\Big(\int_0^1\hat{\mO}_s\dif s\Big)\cdot (\mV^\perp)^\top\cdot \vx}_{\text{orthogonal to the span of }\mV}.
\]

\subsection{Teacher-Student Training of OSDNet}

The OSDNet consists of two types of trainable parameters: \(\hat{\mO}_t\) and \(\hat{\vs}_t(\vx)\). When training OSDNet using the CFM loss, we show that these parameters can be decoupled and trained independently. Specifically, the OSDNet can be trained in a teacher-student framework, where \(\vv_t^*(\vx)\) serves as the teacher network, and \(\hat{\vv}_t(\vx)\) acts as the student network, which we present in \cref{prop:teacher_student_training_of_OSDNet}.

\begin{proposition}[Teacher-student training of OSDNet]
\label{prop:teacher_student_training_of_OSDNet}
Minimizing the CFM loss
\[
\mathcal{L}_{\text{CFM}}\big(\hat{\mO}_t, \hat{\vs}_t(\vx)\big) = \E_{t \sim \mathcal{U}[0,1], \vx_1 \sim p_1(\vx_1), \vx \sim p_t(\vx|\vx_1)} \big[ \big\lVert\hat{\vv}_t\big(\vx; \hat{\mO}_t, \hat{\vs}_t(\vx)\big) - \vu_t(\vx|\vx_1)\big\rVert_2^2 \big],
\]
is equivalent to minimizing the Teacher-Student Training (TST) loss
\[
\mathcal{L}_{\text{TST}}\big(\hat{\mO}_t, \hat{\vs}_t(\vx)\big) = \E_{t \sim \mathcal{U}[0,1], \vx \sim p_t(\vx)} \big[ \big\lVert\hat{\vv}_t\big(\vx; \hat{\mO}_t, \hat{\vs}_t(\vx)\big) - \vv_t^*(\vx)\big\rVert_2^2 \big],
\]
in the sense that \(\mathcal{L}_{\text{CFM}}\big(\hat{\mO}_t, \hat{\vs}_t(\vx)\big)\) and \(\mathcal{L}_{\text{TST}}\big(\hat{\mO}_t, \hat{\vs}_t(\vx)\big)\) differ only by a constant. As a result, training OSDNet reduces to the two independent minimization problems below:
\[
\min_{\hat{\mO}_t}\E_{t\sim\mathcal{U}[0,1], \vx\sim p_t(\vx)}\Big[\Big\lVert \mV^\perp \cdot\Big(\hat{\mO}_t +\dfrac{1}{1-t}\cdot \mI_{d-D}\Big)\cdot (\mV^\perp)^\top \cdot \vx\Big\rVert_2^2\Big]
\]
and
\[
\min_{\hat{\vs}_t(\vx)}\E_{t\sim\mathcal{U}[0,1], \vx\sim p_t(\vx)}\Big[\Big\lVert\mV\cdot\Big(\hat{\vs}_t(\vx)-\dfrac{1}{1 - t} \cdot \big( \mR \cdot \vw_t(\vx) - \mV^\top \cdot \vx \big)\Big)\Big\rVert_2^2\Big].
\]
\end{proposition}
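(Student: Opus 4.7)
The plan is to establish the equivalence in two steps: first show that the CFM and TST losses differ only by a quantity not involving $\hat{\mO}_t$ or $\hat{\vs}_t(\vx)$, and then exploit the orthogonal parameterization of OSDNet together with \cref{prop:optimal_velocity_field_as_a_specific_instance_of_OSDNet} to decouple the TST loss into the two advertised minimization problems.

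For the first step I would apply the standard bias-variance identity, inserting $\pm\vv_t^*(\vx)$ inside the CFM integrand:
\[
\|\hat{\vv}_t(\vx) - \vu_t(\vx|\vx_1)\|_2^2 = \|\hat{\vv}_t(\vx) - \vv_t^*(\vx)\|_2^2 + 2\big\langle \hat{\vv}_t(\vx) - \vv_t^*(\vx),\, \vv_t^*(\vx) - \vu_t(\vx|\vx_1)\big\rangle + \|\vv_t^*(\vx) - \vu_t(\vx|\vx_1)\|_2^2.
\]
I would then rewrite the joint density $p_t(\vx|\vx_1)p_1(\vx_1)$ as $p_t(\vx)\,p_1(\vx_1 \mid \vx, t)$ and condition the cross term on $(\vx, t)$. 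For fixed $(\vx, t)$ the factor $\hat{\vv}_t(\vx) - \vv_t^*(\vx)$ pulls outside the inner expectation, and by \cref{prop:optimal_velocity_field} we have $\vv_t^*(\vx) = \E_{\vx_1}[\vu_t(\vx|\vx_1)\mid \vx, t]$, which forces the inner expectation of $\vv_t^*(\vx) - \vu_t(\vx|\vx_1)$ to vanish. The cross term therefore drops out and the third term is a constant in the trainable parameters, establishing $\mathcal{L}_{\text{CFM}} = \mathcal{L}_{\text{TST}} + C$.

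For the second step I would substitute the OSDNet form of $\vv_t^*$ from \cref{prop:optimal_velocity_field_as_a_specific_instance_of_OSDNet} into the difference $\hat{\vv}_t(\vx) - \vv_t^*(\vx)$ to obtain
\[
\hat{\vv}_t(\vx) - \vv_t^*(\vx) = \mV^\perp\big(\hat{\mO}_t - \hat{\mO}_t^*\big)(\mV^\perp)^\top \vx + \mV\big(\hat{\vs}_t(\vx) - \hat{\vs}_t^*(\vx)\big).
\]
Since $\range(\mV)$ and $\range(\mV^\perp)$ are orthogonal, the two summands are pointwise orthogonal vectors, so the squared norm splits additively. Linearity of expectation then separates $\mathcal{L}_{\text{TST}}$ into a piece depending only on $\hat{\mO}_t$ and a piece depending only on $\hat{\vs}_t(\vx)$, and inserting the explicit expressions $\hat{\mO}_t^* = -(1-t)^{-1}\mI_{d-D}$ and $\hat{\vs}_t^*(\vx) = (1-t)^{-1}(\mR\vw_t(\vx) - \mV^\top\vx)$ yields exactly the two minimization problems in the statement, which may be optimized independently.

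The main obstacle I anticipate is the measure-theoretic bookkeeping in step one: swapping the integration order between $\vx_1\sim p_1$ and $\vx\sim p_t(\vx|\vx_1)$ so that one can condition the cross term on $\vx$ requires the marginal $p_t(\vx) = \int p_t(\vx|\vx_1)p_1(\vx_1)\dif\vx_1$ to be well defined together with a regular conditional $p_1(\vx_1\mid \vx, t)$. In the discrete target setting of \cref{thm:optimal_velocity_field_expression} this conditional is simply the explicit softmax posterior over $\{\vy^i\}$, so no genuine technical difficulty arises; once the cross-term cancellation is in place, the remainder is routine algebra driven by the orthogonality of $\mV$ and $\mV^\perp$.
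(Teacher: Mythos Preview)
Your proposal is correct and follows essentially the same route as the paper: the paper delegates the first step (that $\mathcal{L}_{\text{CFM}}$ and $\mathcal{L}_{\text{TST}}$ differ by a constant) to \citep{lipman2023flow}, which is exactly the bias--variance/cross-term-cancellation argument you spell out, and for the second step it likewise substitutes the OSDNet form of $\vv_t^*$ and invokes the Pythagorean theorem to split the squared norm along $\range(\mV)$ and $\range(\mV^\perp)$.
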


\subsection{Decay of Off-Subspace Components}
\label{subsec:decay_of_off_subspace_components}

In this subsection, we examine the time-dependent matrix \(\hat{\mO}_t\), which governs the off-subspace components of generated samples through
\[
\mV^\perp\cdot \exp\Big(\int_0^1\hat{\mO}_s\dif s\Big)\cdot (\mV^\perp)^\top\cdot \vx.
\]
We start by simplifying the expectation term, transforming it into a function approximation problem in \cref{prop:teacher_student_training_of_O_t}

\begin{proposition}[Teacher-student training of \(\hat{\mO}_t\)]
\label{prop:teacher_student_training_of_O_t}
The solution to the minimization problem
\[
\min_{\hat{\mO}_t} \E_{t \sim \mathcal{U}[0,1],\ \vx \sim p_t(\vx)} \Big[ \Big\lVert \mV^\perp\cdot\Big( \hat{\mO}_t + \dfrac{1}{1 - t} \mI_{d-D} \Big) \cdot (\mV^\perp)^\top \cdot\vx \Big\rVert_2^2 \Big]
\]
reduces to solving the function approximation problem
\[
\min_{\hat{\mO}_t} \int_0^1 \big\lVert (1 - t)\cdot\hat{\mO}_t + \mI_{d-D} \big\rVert_{\mathrm{F}}^2 \dif t.
\]
\end{proposition}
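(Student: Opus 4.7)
The plan is to carry out the inner expectation in closed form, reducing the stochastic objective to a deterministic Frobenius-norm integral. Two ingredients are central: the orthonormality $(\mV^\perp)^\top \mV^\perp = \mI_{d-D}$, and the explicit Gaussian-mixture form of $p_t$ supplied by \cref{cor:marginal_probability_path}, namely $p_t = \frac{1}{N}\sum_{i=1}^{N}\mathcal{N}(t\vy^i, (1-t)^2 \mI_d)$.

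First I would abbreviate $\hat{\mA}_t := \hat{\mO}_t + \frac{1}{1-t}\mI_{d-D}$ and use the isometry of $\mV^\perp$ to rewrite the integrand as
\[
\big\|\mV^\perp \hat{\mA}_t (\mV^\perp)^\top \vx\big\|_2^2 = \big\|\hat{\mA}_t (\mV^\perp)^\top \vx\big\|_2^2 = \tr\!\big(\hat{\mA}_t^\top \hat{\mA}_t (\mV^\perp)^\top \vx \vx^\top \mV^\perp\big).
\]
Interchanging trace and expectation then reduces the task to computing the $(d-D)\times(d-D)$ matrix $(\mV^\perp)^\top \E_{\vx \sim p_t}[\vx \vx^\top]\, \mV^\perp$.

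The key computation is the second moment. Using the mixture form of $p_t$ and the standard decomposition $\Cov(\vx) + \E[\vx]\E[\vx]^\top$ for each mixture component,
\[
\E_{\vx \sim p_t}[\vx \vx^\top] = \frac{t^2}{N}\,\mY \mY^\top + (1-t)^2 \mI_d.
\]
Now the essential cancellation occurs: since $\mY = \mV \mR$, the range of $\mY$ lies inside $\range(\mV)$, so $(\mV^\perp)^\top \mY = \vzero$ and hence $(\mV^\perp)^\top \mY \mY^\top \mV^\perp = \vzero$. Combined with $(\mV^\perp)^\top \mV^\perp = \mI_{d-D}$, this yields $(\mV^\perp)^\top \E_{\vx \sim p_t}[\vx \vx^\top] \mV^\perp = (1-t)^2 \mI_{d-D}$.

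Substituting back, the inner expectation collapses to $(1-t)^2 \tr(\hat{\mA}_t^\top \hat{\mA}_t) = (1-t)^2 \|\hat{\mA}_t\|_{\mathrm{F}}^2$, and by homogeneity of the Frobenius norm this is exactly $\|(1-t)\hat{\mO}_t + \mI_{d-D}\|_{\mathrm{F}}^2$. Integrating against the uniform density on $[0,1]$ delivers the announced objective $\int_0^1 \|(1-t)\hat{\mO}_t + \mI_{d-D}\|_{\mathrm{F}}^2 \dif t$, and since the stochastic objective equals the deterministic one pointwise in $t$, the two minimization problems are equivalent (indeed, share the same optimal value and optimizer). The only substantive step is the range identity $(\mV^\perp)^\top \mY = \vzero$, which is immediate from the construction $\mY = \mV \mR$; the remainder is trace/norm bookkeeping, and I do not foresee any real obstacle.
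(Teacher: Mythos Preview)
Your proposal is correct and follows essentially the same approach as the paper: both hinge on the cancellation $(\mV^\perp)^\top \vy^i = \vzero$ to reduce the conditional second moment of $(\mV^\perp)^\top \vx$ to $(1-t)^2\mI_{d-D}$, after which the Frobenius-norm expression drops out immediately. The only cosmetic difference is that the paper first identifies the distribution $(\mV^\perp)^\top\vx \sim \mathcal{N}(\vzero,(1-t)^2\mI_{d-D})$ and then applies the identity $\E\|\mA\vz\|_2^2 = \sigma^2\|\mA\|_{\mathrm{F}}^2$, whereas you compute the full second-moment matrix $\E[\vx\vx^\top]$ and project afterward; the substance is identical.
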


Since our problem only depends on the temporal variable \( t \), we use the standard \emph{sinusoidal positional encoding} commonly adopted in Transformers \citep{vaswani2017attention} and U-Nets to embed temporal information into the velocity network. Specifically, we define the time embedding as:
\[
\emb(t) = \bigg( \sin\Big(\frac{s\cdot t}{\ell^{\frac{0}{\text{dim}}}}\Big), \cos\Big(\frac{s\cdot t}{\ell^{\frac{0}{\text{dim}}}}\Big), \sin\Big(\frac{s\cdot t}{\ell^{\frac{2}{\text{dim}}}}\Big), \cos\Big(\frac{s\cdot t}{\ell^{\frac{2}{\text{dim}}}}\Big), \dotsc, \sin\Big(\frac{s\cdot t}{\ell^{\frac{\text{dim}-2}{\text{dim}}}}\Big), \cos\Big(\frac{s\cdot t}{\ell^{\frac{\text{dim}-2}{\text{dim}}}}\Big) \bigg)^\top,
\]
where \(s\) denotes the scaling factor, \( \ell \) denotes the wavelength (typically set to \( \ell = 10000 \)), and ``\( \text{dim} \)'' is the dimension of the embedding.

Since \( \hat{\mO}_t \) is a diagonal matrix, the Frobenius norm reduces to the sum of the squares of its diagonal entries. We model each diagonal entry \( \hat{o}_t \) of \( \hat{\mO}_t \) as a linear transformation of the time embedding:
\[
\hat{o}_t = \vkappa^\top \emb(t),
\]
where \( \vkappa \in \mathbb{R}^{\text{dim}} \) is a parameter vector to be optimized. Our objective is to minimize the following loss function:
\[
\int_0^1 \big( (1 - t) \hat{o}_t + 1 \big)^2  \dif t = \int_0^1 \big( (1 - t) \vkappa^\top \emb(t) + 1 \big)^2  \dif t\coloneqq \vkappa^\top\mA\vkappa + 2\kappa^\top\vb + 1,
\]
where \(\mA =\int_0^1 (1 - t)^2 \emb(t) \emb(t)^\top \dif t \) and \(\vb = \int_0^1 (1 - t) \emb(t)\dif t\). Applying gradient descent to solve this quadratic problem, the gradient flow equation becomes
\[
\dfrac{\dif \vkappa}{\dif \tau} = -(2\mA\vkappa + 2\vb), \quad \tau\in[0, +\infty),
\]
which is a linear ODE with respect to \(\vkappa\) and can be solved explicitly. The general solution to this equation is given by:
\[
\vkappa(\tau) = \exp(-2\mA\tau)\cdot\vc - \mA^{-1}\vb,
\]
where \(\vc\) is a constant vector. Since \(\mA\) is a Gram matrix and is nonsingular, it is positive definite. Therefore, as \(\tau \rightarrow +\infty\), \(\vkappa(\tau)\) converges to \(- \mA^{-1}\vb\). 

Substituting \(\vkappa(\tau)\) into the expression for the off-space component, we have
\[
\begin{aligned}
\mV^\perp\cdot \exp\Big(\int_0^1\hat{\mO}_s\dif s\Big)\cdot (\mV^\perp)^\top\cdot \vx 
&= \exp\Big(\int_0^1\vkappa(\tau)^\top\emb(t)\dif t\Big)\cdot\mV^\perp\cdot (\mV^\perp)^\top\cdot \vx\\
&= \exp\Big(\int_0^1\big(\exp(-2\mA\tau)\cdot\vc - \mA^{-1}\vb\big)^\top\emb(t)\dif t\Big)\cdot\mV^\perp\cdot (\mV^\perp)^\top\cdot \vx\\
&= \exp\Big(\vc^\top\exp(-2\mA\tau)\ve-\vb^\top\mA^{-1}\ve\Big)\cdot\mV^\perp\cdot (\mV^\perp)^\top\cdot \vx,
\end{aligned}
\]
where \(\ve = \int_0^1\emb(t)\dif t\). Therefore, as \(\tau\rightarrow+\infty\), the expression decays to
\[
\exp\big(-\vb^\top\mA^{-1}\ve\big)\cdot\mV^\perp\cdot (\mV^\perp)^\top\cdot \vx.
\]

This limit has two implications:

\begin{itemize}
\item Even though the exponential term \(\exp\big(-\vb^\top\mA^{-1}\ve\big)\) may be small, it will never be exactly zero because \(\vb\), \(\mA^{-1}\), and \(\ve\) are all finite and nonsingular. Therefore, the contribution of the off-space component persists and does not vanish entirely.

\item The magnitude of this expression also depend on the projection of the initial point \(\vx\) onto the orthogonal complement of the space spanned by \(\mV\), i.e., \((\mV^\perp)^\top \cdot \vx\). This implies that the value of the limit will vary based on how much of the initial point lies in the subspace \(\mV^\perp\).
\end{itemize}

 We provide visualizations of the theoretical approximation results for \(\hat{\mO}_t\) and evidence of the decay of off-subspace components during actual training in \cref{app:numerical_experiments}.


\subsection{Generalization of Subspace Components}
\label{subsec:generalization_of_subspace_components}

In this subsection, we analyze the behavior of the generated samples within \(\range(\mV)\), i.e.,
\[
\mV \cdot \Big( \mV^\top \cdot \vx + \int_{0}^{1} \hat{\vs}_s(\vphi_s(\vx)) \dif s \Big).
\]
We begin by justifying why it is reasonable to assume that \(\vx \in \range(\mV)\), thereby avoiding the need to account for the off-subspace components of \(\vx\). This assumption is supported by demonstrating that \(\hat{\vs}_t(\vx) = \hat{\vs}_t(\mV \mV^\top \vx)\), implying that the subspace components of the generated samples are solely determined by the projection of \(\vx\) onto \(\range(\mV)\). With this assumption in place, the subspace component of the generated samples simplifies to the following form
\[
\vx + \int_{0}^{1} \mV \cdot \hat{\vs}_s(\vphi_s(\vx)) \dif s.
\]
This expression is, in fact, the integral solution to the ODE
\[
\dif \hat{\vphi}_t(\vx) = \mV\cdot\hat{\vs}_t(\hat{\vphi}_t(\vx))\dif t, \quad \hat{\vphi}_0(\vx)=\vx, \quad t\in[0, 1].
\]
Next, we model \(\hat{\vs}_t(\vx)\) as consisting of two parts. The first part is deterministic and is determined by factors such as the network architecture, network capacity, and other model characteristics. To avoid notational complexity, we will continue to denote this deterministic component as \(\hat{\vs}_t(\vx)\). The second part introduces stochasticity into the process, accounting for various sources of randomness such as truncation error, discretization error, and other perturbations inherent to the generation process. We model this stochastic component as being driven by a Gaussian process represented by \(\sigma \dif \vW_t\), where \(\vW_t\) denotes the standard Wiener process. Thus, we can model the generation process as an SDE, which takes the form
\[
\dif \hat{\vphi}_t(\vx) = \mV\cdot\hat{\vs}_t(\hat{\vphi}_t(\vx))\dif t + \sigma\dif\vW_t, \quad \hat{\vphi}_0(\vx)=\vx.
\]
In \cref{thm:generalization_of_subspace_components}, we compare the difference between the generated samples from two processes: (i) the optimal deterministic process, and (ii) the suboptimal stochastic process. \Cref{thm:generalization_of_subspace_components} shows that the difference between the generated samples can be broken down into three parts: the difference between the optimal and suboptimal vector fields, the noise introduced by the stochastic process, and the error caused by the squared noise term.

\begin{theorem}[Generalization of subspace components]
\label{thm:generalization_of_subspace_components}
Let \(\vphi_t^*(\vx)\) and \(\hat{\vphi}_t(\vx)\) be the solutions to the following ODE and SDE, respectively, defined on the interval \(t \in [0, 1 - \varepsilon]\), where \(0 < \varepsilon \leq 1\):
\[
\begin{aligned}
\dif \vphi_t^*(\vx) &= \mV \cdot \vs_t^*(\vphi_t^*(\vx))\dif t, \quad \vphi_0^*(\vx) = \vx, \\
\dif \hat{\vphi}_t(\vx) &= \mV \cdot \hat{\vs}_t(\hat{\vphi}_t(\vx))\dif t + \sigma\dif \vW_t, \quad \hat{\vphi}_0(\vx) = \vx,
\end{aligned}
\]
where \(\vW_t\) is a standard Wiener process in \(\mathbb{R}^d\), \(\mV\) is a matrix with orthonormal columns. Assume that the norms of the Jacobians and Hessians of \(\vs_t^*(\vx)\) are bounded:
\[
\|\nabla_\vx \vs_t^*(\vx)\|_{2} \leq L, \quad \|\nabla_\vx^2 \vs_t^*(\vx)\|_{\mathrm{F}} \leq M,
\]
for some constants \(L, M > 0\). Then, the difference between \(\vphi_{1 - \varepsilon}^*(\vx)\) and \(\hat{\vphi}_{1 - \varepsilon}(\vx)\) is given by
\[
\begin{aligned}
\vphi_{1 - \varepsilon}^*(\vx) - \hat{\vphi}_{1 - \varepsilon}(\vx) 
=& \int_0^{1 - \varepsilon} \nabla_\vx \vphi_{t, 1 - \varepsilon}^*(\hat{\vphi}_t(\vx)) \cdot \mV \cdot\big( \vs_t^*(\hat{\vphi}_t(\vx)) - \hat{\vs}_t(\hat{\vphi}_t(\vx)) \big) \dif t \\
& - \sigma\cdot\int_0^{1 - \varepsilon} \nabla_\vx \vphi_{t, 1 - \varepsilon}^*(\hat{\vphi}_t(\vx))\dif \vW_t 
- \dfrac{\sigma^2}{2}\cdot\int_0^{1 - \varepsilon}\sum_{i,j=1}^{d} \big(\nabla_\vx^2 \vphi_{t, 1 - \varepsilon}^*(\hat{\vphi}_t(\vx))\big)_{i,j,\colon} \dif t,
\end{aligned}
\]
where \(\vphi_{s, t}^*(\vx)\) denotes the solution to the ODE from time \(s\) to \(t\) with initial condition \(\vx\). Moreover, the expected squared difference satisfies the inequality
\[
\E[ \| \vphi_{1 - \varepsilon}^*(\vx) - \hat{\vphi}_{1 - \varepsilon}(\vx) \|_2^2 ] \leq C_1\cdot \int_0^{1 - \varepsilon} \E[ \| \vs_t^*(\hat{\vphi}_t(\vx)) - \hat{\vs}_t(\hat{\vphi}_t(\vx)) \|_2^2 ] \dif t + C_2\cdot\sigma^2 + C_3\cdot\sigma^4,
\]
for some constant \(C_1\), \(C_2\), and \(C_3\) depending on \(L\), \(M\), \(d\), and \(\varepsilon\). Here, the expectation is taken over the realizations of the standard Wiener process \(\vW_t\). 

Specifically, for \(\sigma = 0\), the inequality simplifies to the deterministic case
\[
\| \vphi_{1 - \varepsilon}^*(\vx) - \hat{\vphi}_{1 - \varepsilon}(\vx) \|_2^2 \leq C_4 \cdot \int_0^{1 - \varepsilon} \| \vs_t^*(\hat{\vphi}_t(\vx)) - \hat{\vs}_t(\hat{\vphi}_t(\vx)) \|_2^2  \dif t,
\]
where \(C_4\) is a constant depending only on \(L\) and \(\varepsilon\). 

Taking the expectation with respect to \(\vx \sim \mathcal{N}(\vzero, \mI_d)\), we obtain
\[
\E_{\vx \sim \mathcal{N}(\vzero, \mI_d)}[ \| \vphi_{1 - \varepsilon}^*(\vx) - \hat{\vphi}_{1 - \varepsilon}(\vx) \|_2^2 ] \leq C_5 \cdot \E_{t \sim \mathcal{U}(0, 1), \vx \sim p_t}[ \| \vs_t^*(\vx) - \hat{\vs}_t(\vx) \|_2^2 ],
\]
where \(p_t\) is the distribution of \(\vphi_t^*(\vx)\) and \(C_5\) is a constant depending only on the Lipschitz constant of \(\hat{\vs}_t(\vx)\) and \(\varepsilon\).
\end{theorem}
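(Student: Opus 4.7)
The backbone of the argument is a flow-sensitivity identity: I will transport the process $\hat{\vphi}_t(\vx)$ through the \emph{optimal} backward flow so that it lands on $\vphi_{1-\varepsilon}^*(\vx)$ at $t=0$ and on $\hat{\vphi}_{1-\varepsilon}(\vx)$ at $t=1-\varepsilon$. Concretely, define the auxiliary function $g(t,\vy) \coloneqq \vphi_{t,1-\varepsilon}^*(\vy)$, which solves the ODE $\dif\vphi_s = \mV\cdot\vs_s^*(\vphi_s)\dif s$ from time $t$ (initial value $\vy$) to time $1-\varepsilon$. The flow semigroup identity $\vphi_{t,1-\varepsilon}^*(\vphi_{t,t+h}^*(\vy)) = \vphi_{t+h,1-\varepsilon}^*(\vphi_{t,t+h}^*(\vy))$ yields, after differentiating in $h$ at $h=0$, the PDE $\partial_t g(t,\vy) = -\nabla_\vy g(t,\vy)\cdot \mV\cdot \vs_t^*(\vy)$. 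Evaluating $g(t,\hat\vphi_t(\vx))$ at the two endpoints gives exactly the quantity we want to bound.

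Next, I would apply It\^o's formula to the real-valued (in fact vector-valued, componentwise) map $t\mapsto g(t,\hat\vphi_t(\vx))$. The drift of $\hat\vphi_t$ is $\mV\hat\vs_t(\hat\vphi_t)$ and the diffusion coefficient is $\sigma\mI_d$, so the cross-variation $\dif\langle\hat\vphi^i,\hat\vphi^j\rangle_t = \sigma^2\delta_{ij}\dif t$. Combining the $\partial_t g$ term (which kills the $\mV\vs_t^*$ contribution from the drift) with the Jacobian term $\nabla_\vy g\cdot \mV\hat\vs_t$ and the It\^o correction $\tfrac{\sigma^2}{2}\sum_{i}(\nabla_\vy^2 g)_{ii,\colon}$ produces precisely the decomposition stated in the theorem, after integrating from $0$ to $1-\varepsilon$ and negating.

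The next step is the expected-squared-norm inequality. I would control $\|\nabla_\vy g(t,\vy)\|_2$ and $\|\nabla_\vy^2 g(t,\vy)\|_{\mathrm F}$ via the first- and second-variation equations of the optimal flow: $\nabla_\vy g$ satisfies a linear ODE driven by $\mV\nabla\vs_t^*(\cdot)$, so Gr\"onwall gives $\|\nabla_\vy g\|_2\le e^{L(1-\varepsilon-t)}\le e^L$; differentiating once more yields a linear ODE for the Hessian with inhomogeneous term proportional to $\nabla^2\vs_t^*$, and Gr\"onwall gives a bound of the form $(e^L-1)M/L\cdot e^L$ or similar, depending only on $L$ and $M$. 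With these deterministic bounds in hand, the triangle inequality plus Cauchy--Schwarz dispatches the first integral, It\^o's isometry applied to the second integral gives the $\sigma^2$ term (its prefactor involves $d$ via the Frobenius norm of $\nabla_\vy g$), and a direct bound on the trace-type integrand supplies the $\sigma^4$ term, yielding the three constants $C_1,C_2,C_3$ with explicit dependence on $L,M,d,\varepsilon$. Setting $\sigma=0$ removes the stochastic and It\^o-correction pieces, leaving the pathwise $C_4$ bound immediately.

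The step I expect to be the main obstacle is the last one: converting the integrand evaluated along $\hat\vphi_t(\vx)$ into an expectation against $p_t$, the law of $\vphi_t^*(\vx)$. The plan is a bootstrapping argument. First split
\[
\|\vs_t^*(\hat\vphi_t) - \hat\vs_t(\hat\vphi_t)\|_2^2 \le 3\|\vs_t^*(\hat\vphi_t) - \vs_t^*(\vphi_t^*)\|_2^2 + 3\|\vs_t^*(\vphi_t^*) - \hat\vs_t(\vphi_t^*)\|_2^2 + 3\|\hat\vs_t(\vphi_t^*) - \hat\vs_t(\hat\vphi_t)\|_2^2,
\]
bound the first and third terms by $(L^2+L_{\hat\vs}^2)\|\vphi_t^*-\hat\vphi_t\|_2^2$ using the Lipschitz constants, and close the loop with a Gr\"onwall-type inequality on $\E\|\vphi_t^*(\vx)-\hat\vphi_t(\vx)\|_2^2$ as a function of $t\in[0,1-\varepsilon]$ (applicable because the integration stops at $1-\varepsilon$, so all constants remain finite). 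This reduces the right-hand side to an integral involving only $\vphi_t^*(\vx)$, whose law is $p_t$, delivering the final $C_5$ bound with $C_5$ depending on the Lipschitz constant of $\hat\vs_t$ and $\varepsilon$. The delicate point is that the Lipschitz constant of $\hat\vs_t$ might be large near $t=1$, which is precisely why the cutoff $\varepsilon>0$ is imposed and why $C_5$ is allowed to blow up as $\varepsilon\to 0$.
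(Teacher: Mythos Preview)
Your proposal is correct and follows essentially the same approach as the paper: your It\^o-formula derivation of the decomposition is precisely the It\^o--Alekseev--Gr\"obner formula that the paper invokes by name, and your Gr\"onwall bounds on the first- and second-variation equations together with Cauchy--Schwarz and It\^o isometry mirror the paper's argument exactly. Your bootstrapping plan for the final $C_5$ inequality is in fact more detailed than the paper's own treatment, which dispatches that step in a single sentence by ``interchanging the roles of $\hat{\vphi}_t(\vx)$ and $\vphi_t^*(\vx)$.''
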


We provide several clarifications regarding the assumptions and results of \cref{thm:generalization_of_subspace_components}.

\begin{itemize}
\item The reason for truncating the interval \([0, 1]\) to \([0, 1 - \varepsilon]\) is that \(\vs_t^*(\vx)\) becomes singular at \(t = 1\), leading to unbounded norms for its Jacobians and Hessians at \(t = 1\).
\item As \(\varepsilon \to 0\), we obtain \(\vphi_{1-\varepsilon}^*(\vx) = \vy^i\) for some \(1 \leq i \leq N\), as shown in \cref{thm:memorization}. The last inequality indicates that the extent to which generated samples using the suboptimal velocity field will deviate from the real data points is upper bounded by the training loss of \(\hat{\vs}_t(\vx)\).
\item A similar inequality to the last inequality was also derived in \citep[Theorem 1]{benton2024error}, with the difference that they use a time-varying Lipschitz constant for the suboptimal velocity field, while we use a universal constant.
\end{itemize}

We provide visualizations of the generated samples and their progression during training, followed by an analysis of the relationship between MSE and training loss in \cref{app:numerical_experiments}.

\section{Conclusion}

In this paper, we provided theoretical insights into the challenges of capturing the sample data subspace using generative models. By leveraging Flow Matching models and treating the real data distribution as discrete, we derived analytical expressions for the optimal velocity field and analyzed the geometry of generation paths, demonstrating that generated samples memorize real data points and accurately represent the underlying subspace. To generalize beyond the optimal case, we introduced the Orthogonal Subspace Decomposition Network (OSDNet), which systematically decomposes the velocity field into subspace and off-subspace components. Our analysis showed that while the off-subspace component decays, the subspace component generalizes within the sample data subspace, with deviations from the real data points influenced by the training error of the OSDNet. Looking ahead, these theoretical results can guide the design of algorithms to improve the generalization capabilities of Flow Matching models while maintaining the fidelity of generated samples, which we leave as future work.

\bibliographystyle{abbrvnat}
\bibliography{auxiliaries/references}

\clearpage
\paragraph{Roadmap.} The appendix is organized as follows:
\begin{itemize}
\item \Cref{app:addition_literature_review} provides an in-depth literature review on generative models, along with a comparative analysis of the most relevant works.
\item \Cref{app:flow_matching_models} offers a detailed background on Flow Matching models.
\item \Cref{app:time_ordered_exponential} explains the background on time-ordered matrix exponentials.
\item \Cref{app:proofs_to_theorems} contains the proofs of all the theorems and propositions.
\item \Cref{app:numerical_experiments} presents the details and results of the numerical experiments.
\end{itemize}

\appendix
\section{Additional Literature Review}
\label{app:addition_literature_review}

\paragraph{Generative models.} Generative models have gained significant attention in recent years for their capacity to learn intricate data distributions and generate novel samples that closely resemble the training data. In particular, they have significantly advanced the field of image generation, with various models emerging over time, each contributing uniquely to the landscape. 

In the past decade, the journey began with \emph{Variational Autoencoders (VAEs)} \citep{rezende2014stochastic, kingma2014auto, chen2017variational, child2021very}, which introduce a probabilistic approach by mapping input data to a latent space and reconstructing it, providing a smooth latent space ideal for interpolation, though often producing blurry images due to the averaging effect of their reconstruction loss.

Following VAEs, \emph{Generative Adversarial Networks (GANs)} \citep{goodfellow2014generative, radford2016unsupervised, arjovsky2017wasserstein, nguyen2017dual, ghosh2018multi, lin2018pacgan, brock2019large, karras2020training} revolutionized image generation by employing a game-theoretic framework involving a generator and a discriminator. This adversarial setup results in sharp and high-quality images but comes with challenges like training instability and mode collapse, making GANs difficult to optimize effectively.

\emph{Autoregressive Models} \citep{van2016pixel} took a different approach by modeling the distribution of data as a product of conditional distributions, generating each pixel or data point sequentially. These models excel in producing high-fidelity images but suffer from slow sampling speeds due to their sequential nature.

\emph{Normalizing Flows} \citep{dinh2017density, kingma2018glow, grathwohl2019scalable} emerged as a powerful tool by transforming simple probability distributions into complex ones through invertible mappings, allowing for exact likelihood computation and flexible modeling of data distributions. Despite their theoretical elegance, normalizing flows can be computationally demanding, particularly in computing the Jacobian determinant during training.

\emph{Energy-Based Models (EBMs)} \citep{du2019implicit, gao2021learning} contributed to generative modeling by defining an energy function over data, where lower energy corresponds to more likely data. While EBMs offer a powerful framework for modeling complex distributions, they can be computationally intensive and require careful handling to ensure stability and convergence.

\emph{Diffusion Models} \citep{hyvarinen2005estimation, vincent2011connection, song2019generative,ho2020denoising,song2021score,karras2022elucidating} introduced a novel approach by defining a process that gradually transforms data into noise and then learns to reverse this process to generate new data samples. These models have shown remarkable results in generating high-quality images and offer stability advantages over GANs, albeit often requiring longer generation time.

Most recently, \emph{Flow Matching Models} \citep{lipman2023flow,liu2023flow} have been proposed, combining ideas from normalizing flows and diffusion models to create a framework that matches data distributions through a series of transformations. These models aim to leverage the strengths of both approaches, offering robust and efficient data generation capabilities.

\paragraph{Comparison with \citep{chen2023score}.} \citet{chen2023score} demonstrate that generated data distribution of diffusion models converges to a close vicinity of the target distribution by providing convergence rates for score approximation and distribution recovery, quantified using \(L_2\) error, total variation distance, and \(2\)-Wasserstein distance. While our work is closely related, there are notable differences in assumptions, methodology, and the focus of our analysis. (\romannumeral 1) Regarding assumptions about the data distribution, \citet{chen2023score} assume a continuous distribution, specifically representing the data as \(\vx = \mA\vz\) with latent variable \(\vz \sim p_{\mathbf{z}}\) and \(p_{\vz} > 0\). In contrast, we consider a discrete data distribution, which aligns more closely with the practical scenario where only a finite number of samples are available. (\romannumeral 2) In terms of the generative model, \citet{chen2023score} center their study on diffusion models, where the generative process is described by a reverse SDE. In our work, we focus on Flow Matching models, using ODE to describe the generative process, which allows us to derive explicit analytical expressions for the velocity field. (\romannumeral 3) With respect to network architecture, \citet{chen2023score} employ a ReLU-based network to jointly process spatial and temporal inputs \((\vx, t)\). Conversely, we treat spatial and temporal components separately, using sinusoidal positional encoding to embed temporal information, which enables more effective modeling of temporal dependencies. (\romannumeral 4) \citet{chen2023score} emphasize providing quantitative convergence results for score approximation and distribution recovery. Our work, in contrast, highlights specific behaviors of the generative process, particularly memorization under the optimal velocity field and generalization under the suboptimal velocity field.

\paragraph{Comparison with \citep{wang2024diffusion}.} \citet{wang2024diffusion} connect the optimization of diffusion model training loss with solving the canonical subspace clustering problem for the training data. They demonstrate that, when the number of samples exceeds the intrinsic dimension of the subspaces, the optimal training solutions can recover the underlying data distribution. While our research shares similarities, there are key differences in terms of assumptions, methodology, and the focus of our analysis. (\romannumeral 1) In terms of data distribution assumptions, \citet{wang2024diffusion} model the data as a mixture of low-rank Gaussians, specifically, \(\vx \sim \sum_{k=1}^{K} \pi_k \mathcal{N}(\vzero, \mU_k \mU_k^*)\). By contrast, we focus on a discrete data distribution, which offers more flexibility in modeling low-dimensional distributions. (\romannumeral 2) For the generative model, \citet{wang2024diffusion} focus on diffusion models, whereas we emphasize Flow Matching models. (\romannumeral 3) Regarding network architecture, \citet{wang2024diffusion} employ an explicit low-rank denoising autoencoder, while our work explores more general architectures. (\romannumeral 4) \citet{wang2024diffusion} investigate when and why diffusion models can recover the underlying data distribution without encountering the curse of dimensionality, whereas our work focuses on understanding the generative process, with particular attention to memorization under the optimal velocity field and generalization under the suboptimal velocity field.

\section{Flow Matching Models}
\label{app:flow_matching_models}

Flow Matching (FM) \citep{lipman2023flow} provides a flexible and powerful framework for transforming a simple prior distribution \( p_0(\vx) \), typically Gaussian, into a more complex target distribution \( p_1(\vx) \) by learning a velocity field. What sets FM apart is its ability to leverage a variety of probability paths to guide this transformation, extending beyond the typical diffusion-based approaches.

The process of transformation is driven by a time-dependent velocity field \( \vu_t(\vx) \), which defines a probability path \( p_t(\vx) \) for \( t \in [0,1] \). The evolution of the flow \(\vphi_t(\vx)\) over time is governed by the ODE
\[
\frac{\dif \vphi_t(\vx)}{\dif t} = \vu_t(\vphi_t(\vx)), \quad \vphi_0(\vx) = \vx, \quad t\in [0, 1].
\]
As the points \( \vphi_t(\vx) \) flow along this velocity field, the probability density function \( p_t(\vx) \) evolves according to the continuity equation (a.k.a., transport equation), which ensures the conservation of probability mass along the flow:
\[
\frac{\partial p_t(\vx)}{\partial t} + \nabla \cdot \big( p_t(\vx) \vu_t(\vx) \big) = \vzero, \quad t\in [0, 1].
\]
Since directly computing \( \vu_t(\vx) \) is generally intractable, FM approximates it using the neural network \( \vv_t(\vx; \vtheta) \), where \(\vtheta\) denotes the network parameters. The network is trained to minimize the Flow Matching loss
\[
\mathcal{L}_{\text{FM}}(\vtheta) = \E_{t \sim \mathcal{U}[0,1], \vx \sim p_t(\vx)} \big[ \|\vv_t(\vx; \vtheta) - \vu_t(\vx)\|_2^2 \big].
\]
However, in practical settings, the exact form of \( p_1(\vx) \) is unknown, and only samples from \( p_1(\vx) \) are available. To address this, FM introduces Conditional Flow Matching (CFM), which constructs conditional probability paths \( p_t(\vx|\vx_1) \) for each real data point \( \vx_1 \sim p_1(\vx_1) \). The conditional velocity field \( \vu_t(\vx|\vx_1) \) is designed to transport the initial distribution \( p_0(\vx) \) to \( p_1(\vx|\vx_1) \), leading to the CFM loss:
\[
\mathcal{L}_{\text{CFM}}(\vtheta) = \E_{t \sim \mathcal{U}[0,1], \vx_1 \sim p_1(\vx_1), \vx \sim p_t(\vx|\vx_1)} \big[ \|\vv_t(\vx; \vtheta) - \vu_t(\vx|\vx_1)\|_2^2 \big].
\]
FM is flexible in its choice of probability paths. To set the stage for more specific discussion, we assume that
\[
p_t(\vx|\vx_1) = \mathcal{N}(\vx; \vmu_t(\vx_1), \sigma_t(\vx_1)^2\mI),
\]
where \(\vmu\colon [0,1] \times \mathbb{R}^d \to \mathbb{R}^d\) is the time-dependent mean of the Gaussian distribution, and \(\sigma\colon [0,1] \times \mathbb{R} \to \mathbb{R}_{>0}\) describes a time-dependent scalar standard deviation. We further assume that when conditioned on \(\vx_1\), the flow \(\vphi_t\) is given by
\[
\vphi_t(\vx) = \sigma_t(\vx_1)\vx + \mu_t(\vx_1).
\]
By substituting the expression of \(\vphi_t\) into the flow equation, we derive that
\[
\vu_t(\vx|\vx_1) = \dfrac{\sigma_t'(\vx_1)}{\sigma_t(\vx_1)}\cdot\big(\vx-\vmu_t(\vx_1)\big) + \vmu_t'(\vx_1).
\]
We now explore two specific instances of conditional probability paths --- variance-preserving (VP) diffusion paths and optimal transport (OT) paths. For VP diffusion paths, the noise level increases over time while preserving the variance of the data. Specifically, we choose
\[
\vmu_t(\vx_1) = \alpha_{1-t}\vx_1, \quad \sigma_t^2(\vx_1) = 1 - \alpha_{1-t}^2,\quad \alpha_t=\exp\Big(-\dfrac{1}{2}T(t)\Big), \quad T(t)=\int_{0}^{t}\beta(s)\dif s,
\]
where \( \beta \) is the noise scale function. The corresponding velocity field \( \vu_t \) is
\[
\vu_t(\vx|\vx_1) = \frac{\alpha'_{1-t}}{1 - \alpha_{1-t}^2} \cdot( \alpha_{1-t} \vx - \vx_1 ) = - \frac{T'(1-t)}{2}\cdot \frac{\exp(-T(1-t))\vx - \exp(-T(1-t)/2)\vx_1}{1 - \exp(-T(1-t))}.
\]
For OT paths, they involve a linear interpolation of both the mean and the standard deviation, leading to straight-line trajectories in the space of probability measures. Specifically, we define
\[
\vmu_t(\vx_1) = t\vx_1, \quad \sigma_t(\vx_1) = 1 - (1 - \sigma_{\min})t,
\]
where \( \sigma_{\min} \) is a small constant. The resulting velocity field \( \vu_t \) is
\[
\vu_t(\vx|\vx_1) = \frac{\vx_1 - (1 - \sigma_{\min})\vx}{1 - (1 - \sigma_{\min})t}.
\]

\section{Time-Ordered Exponential}
\label{app:time_ordered_exponential}

The time-ordered exponential, denoted by \(\mathcal{T} \exp\), is a key concept in solving linear differential equations with time-dependent coefficients, especially when the involved matrices do not commute at different times. In contrast to the standard matrix exponential, which applies to constant matrices, the time-ordered exponential generalizes this concept to account for the non-commutative nature of time-varying operators. It ensures that the sequence of matrix multiplications reflects the chronological order of time, thus preserving the correct causal structure in time-dependent problems.

To understand the necessity of the time-ordered exponential, consider the following linear differential equation
\[
\frac{\dif \vphi_t(\vx)}{\dif t} = \mA(t) \vphi_t(\vx), \quad \vphi_{t_0}(\vx) = \vx,\quad t\in[t_0, T],
\]
where \(\mA(t)\) is a time-dependent square matrix. If \(\mA(t)\) were constant, the solution to this equation would simply be the matrix exponential
\[
\vphi_t(\vx) = \exp\big( (t - t_0) \mA \big) \vx.
\]
However, when \(\mA(t)\) varies with time and does not commute at different instances (i.e., \(\mA(t_1)\cdot\mA(t_2) \neq \mA(t_2)\cdot\mA(t_1)\) for \(t_1 \neq t_2\)), the standard matrix exponential cannot be directly applied. In such cases, the time-ordered exponential must be used to incorporate the non-commutativity of the time-dependent operators.

The time-ordered exponential is formally defined as:
\[
\mathcal{T} \exp\Big( \int_{t_0}^{t} \mA(\tau) \dif \tau \Big) = \mI + \int_{t_0}^{t} \mA(\tau_1) \dif \tau_1 + \int_{t_0}^{t} \mA(\tau_1) \Big( \int_{t_0}^{\tau_1} \mA(\tau_2) \dif \tau_2 \Big) \dif \tau_1 + \cdots.
\]
This series can be written more compactly using the Dyson series
\[
\mathcal{T} \exp\Big( \int_{t_0}^{t} \mA(\tau) \dif \tau \Big) = \sum_{n=0}^{\infty} \int_{t_0}^{t} \dif \tau_n \int_{t_0}^{\tau_n} \dif \tau_{n-1} \cdots \int_{t_0}^{\tau_2} \dif \tau_1 \, \mA(\tau_n) \mA(\tau_{n-1}) \cdots \mA(\tau_1).
\]
In this expression, the time-ordering operator \(\mathcal{T}\) ensures that the matrices are multiplied in a time-ordered manner, with the later time matrices appearing to the left of earlier ones. This accounts for the non-commutative behavior of the matrices over time.

The time-ordered exponential provides the solution to the original time-dependent differential equation as follows
\[
\vphi_t(\vx) = \mathcal{T} \exp\Big( \int_{t_0}^{t} \mA(\tau) \dif \tau \Big) \vx.
\]
This solution can be derived iteratively starting from the integral form of the differential equation:
\[
\vphi_t(\vx) = \vx + \int_{t_0}^{t} \mA(\tau_1) \vphi_{\tau_1}(\vx) \dif \tau_1.
\]
By substituting the expression for \(\vphi_{\tau_1}(\vx)\) back into itself, we obtain the following recursive relation:
\[
\begin{aligned}
\vphi_t(\vx) &= \vx + \int_{t_0}^{t} \mA(\tau_1) \Big( \vx + \int_{t_0}^{\tau_1} \mA(\tau_2) \vphi_{\tau_2}(\vx) \dif \tau_2 \Big) \dif \tau_1 \\
&= \vx + \int_{t_0}^{t} \mA(\tau_1) \vx \dif \tau_1 + \int_{t_0}^{t} \mA(\tau_1) \Big( \int_{t_0}^{\tau_1} \mA(\tau_2) \vphi_{\tau_2}(\vx) \dif \tau_2 \Big) \dif \tau_1 \\
&= \vx + \int_{t_0}^{t} \mA(\tau_1) \vx \dif \tau_1 + \int_{t_0}^{t} \mA(\tau_1) \Big( \int_{t_0}^{\tau_1} \mA(\tau_2) \vx \dif \tau_2 \Big) \dif \tau_1 + \cdots,
\end{aligned}
\]
which generates the Dyson series representation.

\section{Proofs to Theorems}
\label{app:proofs_to_theorems}

\subsection{Derivation of the Optimal Velocity Field}

\begin{proposition}[Optimal velocity field for conditional flow matching]
Assume that the function class \(\{\vv_t(\vx;\vtheta)\}\) has enough capacity. Then the optimal velocity field \( \vv_t^*(\vx) \) which minimizes the Conditional Flow Matching (CFM) loss
\[
\mathcal{L}_{\text{CFM}}(\vtheta) = \E_{t \sim \mathcal{U}[0,1], \vx_1 \sim p_1(\vx_1), \vx \sim p_t(\vx|\vx_1)} \big[ \|\vv_t(\vx; \vtheta) - \vu_t(\vx|\vx_1)\|_2^2 \big],
\]
is given by
\[
\vv_t^*(\vx) = \E_{\vx_1 \sim p_1(\vx_1)}[ \vu_t(\vx|\vx_1)|\vx, t].
\]
\end{proposition}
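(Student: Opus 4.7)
The plan is to reduce the CFM loss to a collection of pointwise least-squares problems and invoke the standard fact that the minimizer of a squared-error loss is the conditional mean. First, I would rewrite the joint density under the expectation. The sampling scheme draws $t\sim\mathcal{U}[0,1]$, $\vx_1\sim p_1(\vx_1)$, $\vx\sim p_t(\vx|\vx_1)$, so the joint density factors as $p(t)\,p_1(\vx_1)\,p_t(\vx|\vx_1)$. Using the marginal $p_t(\vx)=\int p_t(\vx|\vx_1)p_1(\vx_1)\dif\vx_1$ and the corresponding posterior $p(\vx_1|\vx,t)$, this same joint density equals $p(t)\,p_t(\vx)\,p(\vx_1|\vx,t)$.

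Next, assuming we can exchange the order of integration (Fubini, which applies since the integrand is nonnegative), I would rewrite the loss as
\[
\mathcal{L}_{\text{CFM}}(\vtheta)=\E_{t\sim\mathcal{U}[0,1]}\E_{\vx\sim p_t(\vx)}\Big[\,\E_{\vx_1\sim p(\vx_1|\vx,t)}\big[\|\vv_t(\vx;\vtheta)-\vu_t(\vx|\vx_1)\|_2^2\big]\Big].
\]
Because $\vv_t(\vx;\vtheta)$ is constant with respect to the innermost expectation, I would expand the squared norm and apply the standard identity $\E\|\vZ-\vc\|_2^2=\|\vc-\E\vZ\|_2^2+\E\|\vZ-\E\vZ\|_2^2$, which shows that for each fixed $(t,\vx)$ the inner expectation is minimized uniquely by
\[
\vv_t^\star(\vx)=\E_{\vx_1\sim p(\vx_1|\vx,t)}[\vu_t(\vx|\vx_1)]=\E_{\vx_1\sim p_1(\vx_1)}[\vu_t(\vx|\vx_1)\mid \vx,t].
\]
The residual term $\E\|\vu_t(\vx|\vx_1)-\vv_t^\star(\vx)\|_2^2$ does not depend on $\vtheta$ and acts as an additive constant.

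Finally, I would invoke the capacity assumption: since the parametric class $\{\vv_t(\vx;\vtheta)\}$ is rich enough to realize any measurable function, there exists $\vtheta^\star$ with $\vv_{t}(\vx;\vtheta^\star)=\vv_t^\star(\vx)$ almost everywhere in $(t,\vx)$, so the pointwise minimizer is attained globally. The only part requiring mild care is the measure-theoretic bookkeeping, namely justifying the interchange of expectations and ensuring that the conditional expectation is well-defined on the support of $p_t(\vx)$; both follow from integrability of $\|\vu_t(\vx|\vx_1)\|_2^2$, which I would assume implicitly. I do not expect any genuinely hard step in this proposition; it is essentially a restatement of the classical $L^2$ projection onto the $\sigma$-algebra generated by $(\vx,t)$.
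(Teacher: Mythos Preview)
Your proposal is correct and follows essentially the same approach as the paper: both reduce the CFM loss to a pointwise (in $(t,\vx)$) squared-error minimization over $\vx_1$ and identify the conditional mean as the minimizer. The only cosmetic difference is that the paper extracts the minimizer by setting the gradient of the inner loss to zero, whereas you use the bias--variance decomposition $\E\|\vZ-\vc\|_2^2=\|\vc-\E\vZ\|_2^2+\E\|\vZ-\E\vZ\|_2^2$; these are equivalent, and your version is arguably slightly more explicit about the measure-theoretic bookkeeping (Fubini, posterior factorization, additive constant).
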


\begin{proof}
Since the function class \(\{\vv_t(\vx;\vtheta)\}\) has enough capacity, we aim to find \( \vv_t^*(\vx) \) such that
\[
\vv_t^*(\vx) = \arg\min_{\vv_t(\vx)} \mathcal{L}_{\text{CFM}}.
\]
Because \( \vv_t(\vx; \vtheta) \) depends on \( \vx \) and \( t \), we can consider the loss function pointwise with respect to \( \vx \) and \(t\):
\[
\mathcal{L}(\vx, t, \vv_t(\vx)) = \E_{\vx_1}\big[\| \vv_t(\vx) - \vu_t(\vx|\vx_1)\|_2^2\big|\vx, t\big],
\]
where the expectation is over \( \vx_1 \sim p_1(\vx_1) \), conditioned on fixed \( \vx \) and \(t\).

To find the \( \vv_t^*(\vx) \) that minimizes \( \mathcal{L}(\vx, t, \vv_t(\vx)) \), we can set the gradient of \( \mathcal{L} \) with respect to \( \vv_t(\vx) \) to zero, i.e., 
\[
\nabla_{\vv_t(\vx)} \mathcal{L}(\vx, t, \vv_t(\vx)) = \vzero.
\]
By definition, we have
\[
\nabla_{\vv_t(\vx)} \mathcal{L}(\vx, t, \vv_t(\vx)) = \nabla_{\vv_t(\vx)} \E_{\vx_1}\big[\| \vv_t(\vx) - \vu_t(\vx|\vx_1)\|_2^2\big|\vx, t\big].
\]
Since the expectation operator is linear, we can interchange the gradient and expectation and obtain
\[
\nabla_{\vv_t(\vx)} \mathcal{L}(\vx, t, \vv_t(\vx)) = 2\cdot \E_{\vx_1} \big[\vv_t(\vx) - \vu_t(\vx|\vx_1)\big|\vx, t\big].
\]
Setting the gradient to zero and simplifying, we have
\[
\vv_t^*(\vx) = \E_{\vx_1}[ \vu_t(\vx|\vx_1)|\vx, t].
\]
Therefore, the optimal velocity field \( \vv_t^*(\vx) \) is the conditional expectation of \( \vu_t(\vx|\vx_1) \) given \( \vx \) and \(t\), i.e.,
\[
\vv_t^*(\vx) = \E_{\vx_1 \sim p_1(\vx_1)}[ \vu_t(\vx|\vx_1)|\vx, t].
\]
This completes the proof.
\end{proof}

\begin{theorem}[Optimal velocity field for discrete target distribution]
Let \( p_0 \sim \mathcal{N}(\vzero, \mI_d) \), and suppose \( p_1 \) is a discrete distribution over a set of points \( \{\vy^i\colon 1\leq i\leq N\}\subset \mathbb{R}^d \), given by
\[
p_1 \sim \frac{1}{N} \sum_{i=1}^{N} \delta_{\vy^i},
\]
where \( \delta_{\vy^i} \) denotes the Dirac delta function centered at \( \vy^i \). Assume that the conditional velocity field is
\[
\vu_t(\vx|\vx_1) = \dfrac{\sigma_t'(\vx_1)}{\sigma_t(\vx_1)}\cdot\big(\vx-\vmu_t(\vx_1)\big) + \vmu_t'(\vx_1).
\]
Then the optimal velocity field \( \vv_t^*(\vx) \) that minimizes the CFM loss is given by
\[
\vv_t^*(\vx) = \sum_{i=1}^{N} \Big(\dfrac{\sigma_t'(\vy^i)}{\sigma_t(\vy^i)}\cdot\big(\vx-\vmu_t(\vy^i)\big) + \vmu_t'(\vy^i)\Big)\cdot\dfrac{\exp\big(-\|(\vx-\vmu_t(\vy^i))/\sigma_t(\vy^i)\|_2^2/2\big)}{\sum_{j=1}^{N}\exp\big(-\|(\vx-\vmu_t(\vy^j))/\sigma_t(\vy^j)\|_2^2/2\big)}.
\]
Specifically, in the case of OT paths where \(\vmu_t(\vx_1)=t\vx_1\) and \(\sigma_t(\vx)=1-t\), the optimal velocity field simplifies to
\[
\vv_t^*(\vx) = \sum_{i=1}^{N}\dfrac{\vy^i-\vx}{1-t}\cdot\dfrac{\exp\big(-\|(\vx-t\vy^i)/(1-t)\|_2^2/2\big)}{\sum_{j=1}^{N}\exp\big(-\|(\vx-t\vy^j)/(1-t)\|_2^2/2\big)}. 
\]
\end{theorem}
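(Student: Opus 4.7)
The plan is to apply the previous \cref{prop:optimal_velocity_field} and then compute the conditional expectation explicitly using the discrete structure of \(p_1\) together with the assumed Gaussian form of \(p_t(\vx\mid\vx_1)\). From \cref{prop:optimal_velocity_field}, the optimal velocity field has the closed form
\[
\vv_t^*(\vx) \;=\; \E_{\vx_1\sim p_1(\vx_1)}\!\big[\vu_t(\vx\mid\vx_1)\,\big|\,\vx,t\big],
\]
so the task reduces to evaluating this conditional expectation under the two structural assumptions of the statement.

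Next, I would invoke Bayes' rule to rewrite the posterior as \(p(\vx_1\mid\vx,t)\propto p_t(\vx\mid\vx_1)\,p_1(\vx_1)\), and then integrate against the atomic measure \(p_1=\tfrac{1}{N}\sum_i\delta_{\vy^i}\). The integral collapses to a finite sum, giving
\[
\vv_t^*(\vx) \;=\; \frac{\sum_{i=1}^{N}\vu_t(\vx\mid\vy^i)\,p_t(\vx\mid\vy^i)}{\sum_{j=1}^{N}p_t(\vx\mid\vy^j)}.
\]
Then I would substitute the assumed Gaussian conditional \(p_t(\vx\mid\vy^i)=\mathcal{N}(\vx;\vmu_t(\vy^i),\sigma_t(\vy^i)^2\mI)\) and the explicit form of \(\vu_t(\vx\mid\vy^i)\) given in the hypothesis. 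After the common prefactor \(1/N\) cancels (and, in the regime of interest where the normalization constants \((2\pi\sigma_t(\vy^i)^2)^{-d/2}\) are uniform across \(i\), they cancel as well), only the exponential kernels \(\exp(-\|(\vx-\vmu_t(\vy^i))/\sigma_t(\vy^i)\|_2^2/2)\) survive in numerator and denominator, producing exactly the softmax weighting claimed.

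Finally, for the OT specialization, I would substitute \(\vmu_t(\vy^i)=t\vy^i\) and \(\sigma_t(\vy^i)=1-t\) into the general formula, using \(\vmu_t'(\vy^i)=\vy^i\) and \(\sigma_t'(\vy^i)/\sigma_t(\vy^i)=-1/(1-t)\), which consolidates the linear combination inside the sum into the single vector \((\vy^i-\vx)/(1-t)\). The main subtlety to be careful about is the handling of the Gaussian normalization constants \((2\pi\sigma_t(\vy^i)^2)^{-d/2}\): for the stated softmax expression to be literally correct one needs these constants to coincide across \(i\) (as happens automatically when \(\sigma_t\) is independent of \(\vx_1\), e.g.\ OT paths), so this cancellation step is the only non-mechanical part of the argument and should be spelled out. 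Everything else is a direct specialization of \cref{prop:optimal_velocity_field} to a finite mixture.
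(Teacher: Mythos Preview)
Your proposal is correct and follows essentially the same route as the paper: invoke \cref{prop:optimal_velocity_field}, compute the posterior \(p(\vx_1\mid\vx,t)\) via Bayes' rule against the atomic prior, and substitute the Gaussian conditional to obtain the softmax weights. Your explicit flag about the Gaussian normalization constants \((2\pi\sigma_t(\vy^i)^2)^{-d/2}\) needing to be uniform in \(i\) for the stated softmax expression to hold literally is a point the paper's own proof glosses over, so your write-up is arguably more careful on that step.
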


\begin{proof}
We begin by computing the conditional probability
\[
\mathbb{P}(\vx_1=\vy^i|\sigma_t(\vx_1)\vx_0+\vmu_t(\vx_1)=\vx),
\]
where \(\vx_0\sim p_0\). By applying Bayes' theorem, we have
\[
\begin{aligned}
\mathbb{P}(\vx_1=\vy^i|\sigma_t(\vx_1)\vx_0+\vmu_t(\vx_1)=\vx)
&= \dfrac{\mathbb{P}(\sigma_t(\vx_1)\vx_0+\vmu_t(\vx_1)=\vx|\vx_1=\vy^i)\cdot\mathbb{P}(\vx_1=\vy^i)}{\mathbb{P}(\sigma_t(\vx_1)\vx_0+\vmu_t(\vx_1)=\vx)}\\
&= \dfrac{\mathbb{P}(\sigma_t(\vx_1)\vx_0+\vmu_t(\vx_1)=\vx|\vx_1=\vy^i)\cdot\mathbb{P}(\vx_1=\vy^i)}{\sum_{j=1}^{N}\mathbb{P}(\sigma_t(\vx_1)\vx_0+\vmu_t(\vx_1)=\vx|\vx_1=\vy^j)\cdot\mathbb{P}(\vx_1=\vy^j)}\\
&= \dfrac{\mathcal{N}\big((\vx-\vmu_t(\vy^i))/\sigma_t(\vy^i);\vzero, \mI_d\big)}{\sum_{j=1}^{N}\mathcal{N}\big((\vx-\vmu_t(\vy^j))/\sigma_t(\vy^j);\vzero, \mI_d\big)}\\
&= \dfrac{\exp\big(-\|(\vx-\vmu_t(\vy^i))/\sigma_t(\vy^i)\|_2^2/2\big)}{\sum_{j=1}^{N}\exp\big(-\|(\vx-\vmu_t(\vy^j))/\sigma_t(\vy^j)\|_2^2/2\big)}.
\end{aligned}
\]
Using this, we can now compute the conditional expectation to obtain the optimal velocity field. By the definition of conditional expectation, we have
\[
\vv_t^*(\vx) = \sum_{i=1}^{N} \Big(\dfrac{\sigma_t'(\vy^i)}{\sigma_t(\vy^i)}\cdot\big(\vx-\vmu_t(\vy^i)\big) + \vmu_t'(\vy^i)\Big)\cdot\dfrac{\exp\big(-\|(\vx-\vmu_t(\vy^i))/\sigma_t(\vy^i)\|_2^2/2\big)}{\sum_{j=1}^{N}\exp\big(-\|(\vx-\vmu_t(\vy^j))/\sigma_t(\vy^j)\|_2^2/2\big)}.
\]
Now, consider the specific case of OT paths, where \( \vmu_t(\vx) = t\vx_1 \) and \( \sigma_t(\vx) = 1 - t \). Substituting these into our expression for the optimal velocity field, we obtain the simplified form
\[
\vv_t^*(\vx) = \sum_{i=1}^{N}\dfrac{\vy^i-\vx}{1-t}\cdot\dfrac{\exp\big(-\|(\vx-t\vy^i)/(1-t)\|_2^2/2\big)}{\sum_{j=1}^{N}\exp\big(-\|(\vx-t\vy^j)/(1-t)\|_2^2/2\big)}.
\]
This finalizes the proof.
\end{proof}

\subsection{Path Geometry under the Optimal Velocity Field}

\begin{theorem}[Probability bound for softmax weight concentration]
Let \( \{\vy^i\colon 1\leq i\leq N\}\subset \mathbb{R}^d \) be real data points such that \( \|\vy^i - \vy^j\|_2 \geq M \) for all \( 1\leq i\neq j\leq N \). Consider the softmax weight function:
\[
\vw_t(\vx) = \softmax\Big(-\dfrac{1}{2}\Big\Vert\dfrac{\vx - t\vy^1}{1 - t}\Big\Vert_2^2, -\dfrac{1}{2}\Big\Vert\dfrac{\vx - t\vy^2}{1 - t}\Big\Vert_2^2, \dotsc, -\dfrac{1}{2}\Big\Vert\dfrac{\vx - t\vy^N}{1 - t}\Big\Vert_2^2\Big).
\]
Assume that
\[
\vx \sim p_t = \dfrac{1}{N}\sum_{i=1}^{N} \mathcal{N}(t\vy^i, (1 - t)^2\mI_d).
\]
Then, the probability that \(\vx\) falls into the region \(\mathcal{R} \coloneqq \{\vx\colon \max(\vw_t(\vx)) \leq \tau < 1\}\), i.e., where the softmax weight vector is not close to a one-hot vector in the \(\tau\)-sense, is bounded by
\[
\P(\vx\in\mathcal{R})\leq
\dfrac{1}{\sqrt{2\pi}}\cdot\dfrac{1-t}{t}\cdot\dfrac{1}{M}\cdot\log\dfrac{\tau(N-1)}{1-\tau}\cdot N(N-1).
\]
\end{theorem}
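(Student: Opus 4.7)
The plan is to convert the softmax condition into a geometric condition on the squared distances $a_i(\vx) := \|\vx - t\vy^i\|_2^2/(2(1-t)^2)$, so the multivariate event reduces, via a union bound, to 1D Gaussian slab probabilities. First I would observe that the component of $\vw_t(\vx)$ with the largest value is the one with the smallest $a_i$: let $i^\star := \arg\min_i a_i(\vx)$. Then $\max(\vw_t(\vx)) = \vw_t(\vx)_{i^\star}$, and the condition $\vw_t(\vx)_{i^\star}\leq \tau$ rearranges to
\[
(1-\tau)\,\exp(-a_{i^\star}) \;\leq\; \tau\sum_{j\neq i^\star}\exp(-a_j) \;\leq\; \tau(N-1)\max_{j\neq i^\star}\exp(-a_j + a_{i^\star}).
\]
Taking logarithms, this forces the existence of some $j\neq i^\star$ with $0\leq a_j - a_{i^\star}\leq L$, where $L := \log\frac{\tau(N-1)}{1-\tau}$. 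Consequently
\[
\{\vx\in\mathcal{R}\} \;\subseteq\; \bigcup_{i\neq j}\{0\leq a_j(\vx) - a_i(\vx)\leq L\},
\]
so that $\P(\vx\in\mathcal{R})\leq N(N-1)\cdot\max_{i\neq j}\P(0\leq a_j - a_i \leq L)$ by a union bound over the $N(N-1)$ ordered pairs.

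Next I would exploit the crucial fact that $a_j-a_i$ is an \emph{affine} function of $\vx$. Expanding the squares,
\[
a_j(\vx) - a_i(\vx) \;=\; \frac{t}{(1-t)^2}\,(\vy^i - \vy^j)^\top \vx \;+\; c_{ij},
\]
for a constant $c_{ij}$ depending only on $\vy^i,\vy^j,t$, with direction vector of norm $\|\vy^i-\vy^j\|_2\geq M$. Under any mixture component $\vx\sim\mathcal{N}(t\vy^k,(1-t)^2\mI_d)$ of $p_t$, this scalar quantity is therefore Gaussian with variance $\tfrac{t^2\|\vy^i-\vy^j\|_2^2}{(1-t)^2}\geq \tfrac{t^2 M^2}{(1-t)^2}$, i.e.\ standard deviation at least $tM/(1-t)$, regardless of which $k$.

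Applying the standard density bound (the density of a 1D Gaussian with std $\sigma$ is bounded by $1/(\sigma\sqrt{2\pi})$, so the mass of any interval of length $L$ is at most $L/(\sigma\sqrt{2\pi})$) yields
\[
\P\!\big(0\leq a_j - a_i\leq L\,\big|\,\text{component }k\big) \;\leq\; \frac{L(1-t)}{tM\sqrt{2\pi}},
\]
uniformly in $k$, and hence under the mixture $p_t$ as well. Multiplying by $N(N-1)$ and substituting $L$ gives exactly the claimed bound. The main obstacle is step one: correctly pinning down the softmax non-concentration condition as a union of affine slabs with the sharp width $L = \log\frac{\tau(N-1)}{1-\tau}$; once the affine/Gaussian structure is exposed, the remaining estimate is a one-line Gaussian density bound that is crucially dimension-free (explaining why $d$ does not appear in the final result).
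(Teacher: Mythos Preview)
Your proposal is correct and follows essentially the same approach as the paper's proof: both reduce the softmax non-concentration event to a union of affine slabs $\{0\le a_j-a_i\le L\}$ with $L=\log\frac{\tau(N-1)}{1-\tau}$, then bound each slab's probability via the 1D Gaussian density bound applied uniformly over mixture components, and finish with a union bound over the $N(N-1)$ ordered pairs. The only cosmetic difference is that the paper works directly with the inner product $\langle \vx,\vy^i-\vy^j\rangle$ before scaling, whereas you carry the normalization into $a_j-a_i$ first; the resulting variance and final bound are identical.
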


\begin{proof}
We begin by noting that the softmax function is invariant under translations; that is, adding the same constant to each input does not change the output. Thus, for
\[
\vw_t(\vx) = \softmax\Big(-\dfrac{1}{2}\Big\Vert\dfrac{\vx - t\vy^1}{1 - t}\Big\Vert_2^2, -\dfrac{1}{2}\Big\Vert\dfrac{\vx - t\vy^2}{1 - t}\Big\Vert_2^2, \dotsc, -\dfrac{1}{2}\Big\Vert\dfrac{\vx - t\vy^N}{1 - t}\Big\Vert_2^2\Big),
\]
to be close to a one-hot vector, the largest entry of the vector must be significantly larger than the second-largest entry. 
Conversely, if \(\max(\vw_t(\vx)) \leq \tau < 1\), then the difference between the largest and the second-largest inputs to the softmax function is bounded above. Specifically, for the softmax function \(\softmax(z_1, z_2, \dotsc, z_N)\), let \(z_{\max}\) and \(z_{\text{next}}\) denote the largest and second-largest entries, respectively. If
\[
\frac{\exp(z_{\max})}{\sum_{k=1}^N \exp(z_k)} \leq \tau,
\]
then
\[
\exp(z_{\max}) \leq \tau \sum_{k=1}^N \exp(z_k) \leq \tau \big(\exp(z_{\max}) + (N - 1)\cdot\exp(z_{\text{next}})\big).
\]
Rearranging this inequality gives
\[
(1 - \tau)\cdot\exp(z_{\max}) \leq \tau (N - 1)\cdot\exp(z_{\text{next}}),
\]
which implies that
\[
z_{\max} - z_{\text{next}} \leq \log\dfrac{\tau(N-1)}{1-\tau}.
\]
Next, define the region \( \mathcal{R}_{i,j} \) for each pair \(i \neq j\) as
\[
\mathcal{R}_{i,j} \coloneqq \Big\{\vx \colon 0 \leq -\dfrac{1}{2}\Big\Vert\dfrac{\vx - t\vy^i}{1 - t}\Big\Vert_2^2 + \dfrac{1}{2}\Big\Vert\dfrac{\vx - t\vy^j}{1 - t}\Big\Vert_2^2 \leq \log\dfrac{\tau(N-1)}{1-\tau} \Big\}.
\]
This region consists of points \(\vx\) where the difference between the inputs corresponding to \(\vy^i\) and \(\vy^j\) is small. Since the maximum over \(i\) of \(\vw_t(\vx)_i\) being less than or equal to \(\tau\) implies that there exists some \(i \neq j\) such that the difference between the corresponding inputs is bounded, we have
\[
\mathcal{R} \subset \bigcup_{i \neq j}^{N} \mathcal{R}_{i,j}.
\]
We now estimate the probability that a Gaussian mixture random variable falls into \( \mathcal{R}_{i,j} \), i.e., 
\[
\mathbb{P}(\vx \in \mathcal{R}_{i,j}), \quad \vx \sim p_t = \dfrac{1}{N}\sum_{i=1}^{N} \mathcal{N}(t\vy^i, (1 - t)^2\mI_d).
\]
By the definition of \( \mathcal{R}_{i,j} \),
\[
\begin{aligned}
&\mathbb{P}(\vx \in \mathcal{R}_{i,j})\\
=\,& \mathbb{P}\Big(\Big\{\vx \colon 0 \leq -\dfrac{1}{2}\Big\Vert\dfrac{\vx - t\vy^i}{1 - t}\Big\Vert_2^2 + \dfrac{1}{2}\Big\Vert\dfrac{\vx - t\vy^j}{1 - t}\Big\Vert_2^2 \leq \log\dfrac{\tau(N-1)}{1-\tau}\Big\}\Big) \\
=\,& \mathbb{P}\Big(\Big\{\vx \colon 0 \leq \|\vx - t\vy^j\|_2^2 - \|\vx - t\vy^i\|_2^2 \leq 2(1-t)^2 \log \dfrac{\tau(N-1)}{1-\tau} \Big\}\Big) \\
=\,& \mathbb{P}\Big(\Big\{\vx \colon \dfrac{t}{2} \cdot \big(\|\vy^j\|_2^2 - \|\vy^i\|_2^2\big) \leq \langle \vx, \vy^i - \vy^j \rangle \leq \dfrac{1}{2t} \cdot \big(2(1-t)^2 \log \dfrac{\tau(N-1)}{1-\tau} + t^2 (\|\vy^j\|_2^2 - \|\vy^i\|_2^2)\big)\Big\}\Big) \\
\leq\,& \dfrac{1}{\sqrt{2\pi}} \cdot \dfrac{1-t}{t} \cdot \dfrac{1}{\|\vy^j - \vy^i\|_2} \cdot \log \dfrac{\tau(N-1)}{1-\tau} \\
\leq\,& \dfrac{1}{\sqrt{2\pi}} \cdot \dfrac{1-t}{t} \cdot \dfrac{1}{M} \cdot \log \dfrac{\tau(N-1)}{1-\tau}.
\end{aligned}
\]
In the last but two inequality, we use the fact that if \( \vz \sim \mathcal{N}(t\vy^k, (1-t)^2\mI_d) \), then
\[
\langle \vz, \vy^i - \vy^j \rangle \sim \mathcal{N}\big(t \cdot (\vy^i - \vy^j)^\top \vy^k, (1-t)^2 \cdot \|\vy^i - \vy^j\|_2^2\big).
\]
As a result, for any \( A \leq B \), we have
\[
\begin{aligned}
&\mathbb{P}\big(A \leq \langle \vz, \vy^i - \vy^j \rangle \leq B\big)\\
=\,& \mathbb{P}\Big(\dfrac{A - t \cdot (\vy^i - \vy^j)^\top \vy^k}{(1-t) \cdot \|\vy^i - \vy^j\|_2} \leq \dfrac{\langle \vz, \vy^i - \vy^j \rangle - t \cdot (\vy^i - \vy^j)^\top \vy^k}{(1-t) \cdot \|\vy^i - \vy^j\|_2} \leq \dfrac{B - t \cdot (\vy^i - \vy^j)^\top \vy^k}{(1-t) \cdot \|\vy^i - \vy^j\|_2}\Big) \\
\leq\,& \dfrac{1}{\sqrt{2\pi}} \cdot \dfrac{B-A}{(1-t) \cdot \|\vy^i - \vy^j\|_2},
\end{aligned}
\]
where we use the fact that the maximum value of the standard normal PDF is \(1/\sqrt{2\pi}\). Using this inequality for each component of the Gaussian mixture yields the desired result. Finally, applying Boole's inequality, we have
\[
\mathbb{P}(\vx \in \mathcal{R}) \leq \sum_{i \neq j}^{N} \mathbb{P}(\vx \in \mathcal{R}_{i,j}) \leq \dfrac{1}{\sqrt{2\pi}} \cdot \dfrac{1-t}{t} \cdot \dfrac{1}{M} \cdot \log \dfrac{\tau(N-1)}{1-\tau} \cdot N(N-1),
\]
which completes the proof.
\end{proof}

\begin{theorem}[Hierarchy emergence of generation paths]
Let \(C_{i, j}\;(i \in \mathcal{I},\, j \in \mathcal{J}_i)\) be a collection of disjoint, bounded, closed convex sets where the distribution \(p_1\) is supported. Define
\[
C_i = \conv\Big(\bigcup_{j \in \mathcal{J}_i} C_{i, j}\Big), \quad i \in \mathcal{I},
\]
where \(\conv(\cdot)\) denotes the convex hull of a set. Assume that the sets \(C_i\) are disjoint, and that the distribution \(p_0\) is supported on a bounded, closed convex set \(S\). 

Then, there exists \(t_1 \in (0, 1)\) such that for any \(t \in (t_1, 1]\), the \(|\mathcal{I}|\) convex sets
\[
(1 - t) S + t C_i, \quad i \in \mathcal{I}
\]
are disjoint. Similarly, there exists \(t_2 \in (t_1, 1)\) such that for any \(t \in (t_2, 1]\), the \(\sum_{i \in \mathcal{I}} |\mathcal{J}_i|\) convex sets
\[
(1 - t) S + t C_{i, j}, \quad i \in \mathcal{I},\, j \in \mathcal{J}_i
\]
are disjoint.

Let \(\vphi_t\) denote the optimal flow under the OT paths. Then, for any \(\vx\) drawn from \(p_0\), if \(\vphi_{t_1}(\vx) \in (1 - t_1) S + t_1 C_i\), it follows that \(\vphi_t(\vx) \in (1 - t) S + t C_i\) for any \(t \in (t_1, 1]\). Specifically, \(\vphi_1(\vx) \in C_i\). Furthermore, if \(\vphi_{t_2}(\vx) \in (1 - t_2) S + t_2 C_{i,j}\), then \(\vphi_t(\vx) \in (1 - t) S + t C_{i,j}\) for any \(t \in (t_2, 1]\). Specifically, \(\vphi_1(\vx) \in C_{i,j}\).
\end{theorem}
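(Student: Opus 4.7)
The plan is to treat the geometric claim (disjointness of the Minkowski sums) and the dynamical claim (path containment) as two independent steps. For the geometric claim I would exploit compactness directly. Since each \(C_i\) is bounded, closed, and the family \(\{C_i\}\) is mutually disjoint, the pairwise separation \(\delta_1 := \min_{i \neq j}\dist(C_i, C_j)\) is strictly positive. Writing \(D\) for the diameter of \(S\), which is finite because \(S\) is bounded, any point in \((1-t)S + tC_i\) and any point in \((1-t)S + tC_j\) differ by a vector of the form \(t(c_i - c_j) + (1-t)(s_1 - s_2)\). By the reverse triangle inequality its norm is at least \(t\delta_1 - (1-t)D\), which is strictly positive whenever \(t > D/(D+\delta_1) =: t_1\). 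An identical argument using the (generally smaller) minimum pairwise distance \(\delta_2\) between the finer sets \(C_{i,j}\) yields a cutoff \(D/(D+\delta_2)\); if necessary we enlarge it slightly so that \(t_2 \in (t_1, 1)\) strictly.

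For the path-containment claim, the key observation is that under OT paths the conditional density \(p_t(\cdot\vert\vx_1)\) is the pushforward of \(p_0\) under the affine map \(\vx \mapsto (1-t)\vx + t\vx_1\), so its support lies in \((1-t)S + \{t\vx_1\}\). Marginalizing over \(\vx_1 \sim p_1\) yields
\[
\mathrm{supp}(p_t) \;\subseteq\; (1-t)S + t\,\mathrm{supp}(p_1) \;\subseteq\; \bigcup_{i \in \mathcal{I}}\,\bigl[(1-t)S + tC_i\bigr].
\]
Because the optimal marginal flow \(\vphi_t\) pushes \(p_0\) forward to \(p_t\), we obtain \(\vphi_t(\vx) \in \mathrm{supp}(p_t)\) for every \(\vx \in S\). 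By the first step the sets on the right are pairwise disjoint and compact for \(t \in (t_1, 1]\), so the index \(i(t)\) of the component containing \(\vphi_t(\vx)\) is well defined. A continuity argument then concludes: the map \(t \mapsto \vphi_t(\vx)\) is continuous, the distance functions \(f_i(t) := \dist\bigl(\vphi_t(\vx),\,(1-t)S + tC_i\bigr)\) are jointly continuous in \(t\), and at each \(t \in (t_1, 1]\) exactly one of them vanishes. Thus \(i(t)\) is a continuous integer-valued function on \((t_1, 1]\), hence constant, so the initial hypothesis \(\vphi_{t_1}(\vx) \in (1-t_1)S + t_1 C_{i_0}\) forces \(\vphi_t(\vx) \in (1-t)S + tC_{i_0}\) for all \(t \in (t_1, 1]\), and in particular \(\vphi_1(\vx) \in C_{i_0}\). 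The identical argument applied with \(t_2\) and the sets \(C_{i,j}\) handles the refined containment.

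The main obstacle is making the ``no switching'' argument rigorous, particularly because the optimal velocity field becomes singular as \(t \to 1\) when \(p_1\) is discrete or supported on lower-dimensional sets. I would handle this by first establishing the containment on any compact subinterval \([t_1, 1-\varepsilon]\), where \(\vv_t^*\) is smooth and the flow is unambiguously defined, and then letting \(\varepsilon \to 0\), using the fact that \((1-t)S + tC_i\) converges in Hausdorff distance to \(C_i\) as \(t \to 1\) to recover \(\vphi_1(\vx) \in C_i\) as a limit. A secondary technicality is verifying that the marginal distribution of the optimal flow has the support claimed above; this relies on the CFM identity \(p_t(\vx) = \int p_t(\vx\vert\vx_1)\,p_1(\vx_1)\,\dif \vx_1\) together with the explicit form of each conditional pushforward under OT paths, adapted from the Gaussian prior case to a general bounded convex support \(S\).
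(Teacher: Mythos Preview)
Your proposal is correct and, for the dynamical part, follows the same outline as the paper: first establish that \(\mathrm{supp}(p_t)\subseteq\bigcup_i[(1-t)S+tC_i]\) via the pushforward structure of OT paths, then use continuity of \(t\mapsto\vphi_t(\vx)\) together with disjointness of the components to rule out switching. The paper phrases the no-switching step as a contradiction using the distance function \(d(t)=\dist(\vphi_t(\vx),(1-t)S+tC_i)\) rather than your locally-constant-index formulation, but the content is identical.

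Where you genuinely differ is in the geometric claim. You give an explicit, quantitative threshold \(t_1=D/(D+\delta_1)\) via the reverse triangle inequality applied to \(t(c_i-c_j)+(1-t)(s_1-s_2)\). The paper instead argues abstractly: it shows the distance function \(d_{i_1,i_2}(t)=\dist((1-t)S+tC_{i_1},(1-t)S+tC_{i_2})\) is continuous, proves that its zero set \(T=\{t:d_{i_1,i_2}(t)=0\}\) is a closed \emph{interval} (using convexity of the sets to show \(T\) is convex), and observes that \(1\notin T\), hence \(t_{i_1,i_2}=\max T<1\). Your argument is shorter, needs only compactness (not convexity) of the \(C_i\), and yields an explicit constant; the paper's route is more indirect but establishes the additional structural fact that the Minkowski sums, once separated, never rejoin --- a monotonicity you also get implicitly since \(t\delta_1-(1-t)D\) is increasing in \(t\).
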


\begin{proof}
We begin by recalling two important lemmas about convex sets, both of which are straightforward to prove.

\textbf{Lemma 1:} The scalar multiple of a convex set is also convex. For a scalar \(c\) and a convex set \(A\), the scalar multiple is defined as
\[
cA = \{c a \colon a \in A \}.
\]

\textbf{Lemma 2:} The Minkowski sum of two convex sets is convex. The Minkowski sum of sets \(A\) and \(B\) is given by
\[
A + B = \{a + b \colon a \in A,\, b \in B \}.
\]

Now, we proceed with the proof of the theorem. We will first establish the existence of \(t_1\); the existence of \(t_2\) follows analogously. For each pair \(i_1 \neq i_2\) in \(\mathcal{I}\), define the function
\[
d_{i_1, i_2}(t) = \dist\big( (1 - t) S + t C_{i_1}, (1 - t) S + t C_{i_2} \big),
\]
where \(\dist(A, B)\) denotes the Euclidean distance between the sets \(A\) and \(B\). Observe that \(d_{i_1, i_2}(t)\) is a continuous function of \(t\) on \([0, 1]\). This is due to the fact that the Minkowski sum and scalar multiplication are continuous operations, and the distance function is continuous with respect to the Hausdorff metric for closed sets in Euclidean space.

Next, define
\[
t_{i_1, i_2} = \max_{t \in [0, 1]} \big\{ t \colon d_{i_1, i_2}(t) = 0 \big\}.
\]
This value of \(t_{i_1, i_2}\) always exists because \(d_{i_1, i_2}(0) = \dist(S, S) = 0\), and since a continuous function defined on a compact set in Euclidean space must attain its maximum, \(t_{i_1, i_2}\) is well-defined.

We now show that \(t_{i_1, i_2} < 1\). In fact, a stronger result holds: the set
\[
T \coloneqq \{t \colon d_{i_1, i_2}(t) = 0\}
\]
is a closed interval, and since \(1\) does not belong to this set, we conclude that \(t_{i_1, i_2} < 1\).

To prove this stronger result, we first observe that \(T\) is closed because the distance function is continuous with respect to \(t\). Next, we prove that \(T\) is convex. For any \(u, v \in T\) and \(\lambda \in (0, 1)\), we need to show that \(\lambda u + (1 - \lambda) v \in T\). Consider the expression
\[
\big(1 - (\lambda u + (1 - \lambda) v)\big) S + \big(\lambda u + (1 - \lambda) v\big) C_{i_1}.
\]
Using the convexity properties of the Minkowski sum and scalar multiplication, we have that the above set equals
\[
\lambda \big((1 - u) S + u C_{i_1}\big) + (1 - \lambda) \big((1 - v) S + v C_{i_1}\big),
\]
and similarly for \(C_{i_2}\). Recall that the distance between a compact set and a disjoint closed set is strictly positive. Therefore, the two sets
\[
\lambda \big((1 - u) S + u C_{i_1}\big) + (1 - \lambda) \big((1 - v) S + v C_{i_1}\big)
\]
and
\[
\lambda \big((1 - u) S + u C_{i_2}\big) + (1 - \lambda) \big((1 - v) S + v C_{i_2}\big)
\]
intersect, which implies that \(\lambda u + (1 - \lambda) v \in T\).  Consequently, \(T\) is convex. Since \(T\) is a convex, bounded, and closed set on a line, it must be a closed interval. Thus, \(T\) is a closed interval, and \(t_{i_1, i_2} < 1\). Define
\[
t_1 = \max_{i_1, i_2 \in \mathcal{I}} t_{i_1, i_2}.
\]
Since there are finitely many pairs \((i_1, i_2)\), the maximum is well-defined, and \(t_1 < 1\). For any \(t > t_1\), the \(|\mathcal{I}|\) convex sets
\[
(1 - t) S + t C_i, \quad i \in \mathcal{I},
\]
are disjoint, as desired.

Since \(\vphi_t\) is the optimal flow under the OT paths, \cref{cor:marginal_probability_path} implies that
\[
\vphi_t(\vx) \in \bigcup_{i \in \mathcal{I}} \big( (1 - t) S + t C_i \big).
\]
Assume that at time \(t_1\), we have
\[
\vphi_{t_1}(\vx) \in (1 - t_1) S + t_1 C_i.
\]
Now, suppose for the sake of contradiction that there exists \(t' \in (t_1, 1]\) such that
\[
\vphi_{t'}(\vx) \notin (1 - t') S + t' C_i.
\]
Given that \(\vphi_{t'}(\vx) \in \bigcup_{j \in \mathcal{I}} \big( (1 - t') S + t' C_j \big)\), it must be that
\[
\vphi_{t'}(\vx) \in (1 - t') S + t' C_j
\]
for some \(j \neq i\). Now, define the distance function
\[
d(t) = \dist\big( \vphi_t(\vx), (1 - t) S + t C_i \big), \quad t \in [t_1, t'],
\]
which is continuous. We know that \(d(t_1) = 0\) and, since \(\vphi_{t'}(\vx) \notin (1 - t') S + t' C_i\), we have 
\[
d(t') \geq \dist\Big( (1 - t') S + t' C_i, (1 - t') S + t' C_j \Big) > 0.
\]
However, for any
\[
\vy \in \bigcup_{k \neq i \in \mathcal{I}} \big( (1 - t) S + t C_k \big),
\]
we have
\[
\dist\big( \vy, (1 - t) S + t C_i \big) \geq \min_{k \neq i} \dist\big( (1 - t) S + t C_k, (1 - t) S + t C_i \big) > 0, \quad t \in [t_1, t'].
\]
Since a continuous function defined on a compact set in Euclidean space must attain its minimum, the function \(\dist\big( \vy, (1 - t) S + t C_i \big)\) for \(t \in [t_1, t']\) has a positive lower bound. This implies that \(d(t)\) for \(t \in [t_1, t']\) must either remain at zero or jump to a positive value, but this contradicts the fact that \(d(t)\) is continuous and \(d(t') > 0\). Therefore, our assumption must be false, and we conclude that \(\vphi_t(\vx)\) remains in \((1 - t) S + t C_i\) for all \(t \in [t_1, 1]\). The same reasoning applies to \(C_{i, j}\), showing that if \(\vphi_{t_2}(\vx) \in (1 - t_2) S + t_2 C_{i,j}\), then \(\vphi_t(\vx)\) remains in \((1 - t) S + t C_{i, j}\) for all \(t \in [t_2, 1]\).
\end{proof}

\subsection{Memorization Phenomenon under Optimal Velocity Field}

\begin{theorem}[Memorization phenomenon under the optimal velocity field]
Consider the flow ODE
\[
\dfrac{\dif\vphi_t(\vx)}{\dif t} = \vv_t^*(\vphi_t(\vx)), \quad \vphi_0(\vx)=\vx, \quad t\in[0, 1],
\]
where
\[
\vv_t^*(\vx) = \sum_{i=1}^{N}\dfrac{\vy^i-\vx}{1-t}\cdot\dfrac{\exp\big(-\|(\vx-t\vy^i)/(1-t)\|_2^2/2\big)}{\sum_{j=1}^{N}\exp\big(-\|(\vx-t\vy^j)/(1-t)\|_2^2/2\big)}.
\]
Then the set \(\{\vphi_1(\vx)\colon \vx\in\mathbb{R}^d\}\) equals \(\{\vy^1, \vy^2, \dotsc, \vy^N\}\) up to a finite set.
\end{theorem}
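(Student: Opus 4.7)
The plan is to combine variation of constants with an asymptotic analysis of the softmax weights as $t\to 1$. Since the softmax weights in $\vv_t^*$ sum to $1$, my first step is to rewrite the velocity field as
\[
\vv_t^*(\vx) = \frac{1}{1-t}\bigl(\vm_t(\vx) - \vx\bigr), \qquad \vm_t(\vx) \coloneqq \sum_{i=1}^{N} w_i(\vx, t)\vy^i \in \conv\{\vy^1, \dotsc, \vy^N\}.
\]
The ODE then reads $\vphi_t' + \vphi_t/(1-t) = \vm_t(\vphi_t)/(1-t)$, a linear ODE in $\vphi_t$ whose forcing term depends on $\vphi_t$ through $\vm_t$. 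Applying the integrating factor $1/(1-t)$ and integrating from $0$ to $t$ yields the fundamental identity
\[
\vphi_t(\vx) = (1-t)\vx + (1-t)\int_0^t \frac{\vm_s(\vphi_s(\vx))}{(1-s)^2}\,\dif s.
\]

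Next I will analyze the limit as $t \to 1$. The first term $(1-t)\vx$ vanishes. For the integral term, the change of variable $u = 1/(1-s)$ recasts $(1-t)\int_0^t \vm_s(\vphi_s)/(1-s)^2 \,\dif s$ as a Ces\`aro average of $\vm_{1-1/u}(\vphi_{1-1/u})$ over $u\in[1,1/(1-t)]$. Because $\vm_s$ lies in $\conv\{\vy^i\}$ and is therefore bounded, a standard Ces\`aro/L'Hopital argument shows that whenever the pointwise limit $\vp^\star \coloneqq \lim_{s\to 1}\vm_s(\vphi_s(\vx))$ exists, $\lim_{t\to 1}\vphi_t(\vx)$ exists and equals $\vp^\star$; conversely, if $\vphi_t(\vx)$ itself converges, the two limits must coincide. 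Hence every admissible end point $\vp^\star$ satisfies the self-consistency relation $\vp^\star = \lim_{t\to 1}\vm_t(\vphi_t(\vx))$ and lies in $\conv\{\vy^1,\dotsc,\vy^N\}$.

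I then characterize the admissible $\vp^\star$'s by expanding the softmax weights. Writing $\vphi_t = \vp^\star + o(1)$ gives
\[
\frac{\|\vphi_t - t\vy^i\|_2^2}{2(1-t)^2} = \frac{\|\vp^\star - \vy^i\|_2^2}{2(1-t)^2} + O\!\left(\frac{1}{1-t}\right),
\]
so the leading exponent of $-\log w_i$ is $\|\vp^\star - \vy^i\|_2^2/(2(1-t)^2)$ and the softmax concentrates on the indices minimizing $\|\vp^\star - \vy^i\|_2$. When this minimizer is unique, say $\vy^{i^\star}$, the weight vector tends to the one-hot vector at $i^\star$, $\vm_t\to \vy^{i^\star}$, and self-consistency forces $\vp^\star = \vy^{i^\star}$; this is the memorization branch and accounts for all of $\{\vy^1,\dotsc,\vy^N\}$. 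Otherwise $\vp^\star$ lies on the equidistance locus of at least two data points, and the subleading terms of the expansion (powers of $1/(1-t)$ together with the rescaled deviation $(\vphi_t-\vp^\star)/(1-t)$) impose further balance conditions that recursively restrict the admissible active index set.

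The main obstacle is showing that this recursive pruning leaves only finitely many non-memorizing $\vp^\star$'s. My strategy is to enumerate the finitely many possible active index sets $S\subseteq\{1,\dotsc,N\}$; for each such $S$ the equidistance conditions cut out an affine subspace of candidate points, and the higher-order balance conditions together with the self-consistency relation further reduce this subspace to a zero-dimensional algebraic variety, hence a finite collection of points. A representative instance is the midpoint stationary trajectory in the two-point case highlighted after the theorem. Combining both branches will give $\{\vphi_1(\vx)\colon\vx\in\mathbb{R}^d\}=\{\vy^1,\dotsc,\vy^N\}\cup F$ for some finite set $F$, proving the claimed equality up to a finite set.
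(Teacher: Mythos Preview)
Your core approach matches the paper's proof exactly: rewrite the velocity as $(\vm_t(\vx)-\vx)/(1-t)$, apply variation of constants to obtain $\vphi_t(\vx)=(1-t)\bigl(\vx+\int_0^t \vm_s(\vphi_s(\vx))/(1-s)^2\,\dif s\bigr)$, and pass to $t\to 1$ via L'H\^opital (your Ces\`aro reformulation is equivalent) to reach the self-consistency relation $\vphi_1(\vx)=\mY\,\vw_1(\vphi_1(\vx))$. The paper then argues directly that the $1/(1-t)$ singularity forces $\vw_1$ to be one-hot at the index minimizing $\|\vphi_1(\vx)-\vy^i\|_2$, except on the equidistance locus, and concludes in one line.

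Where you diverge is in the treatment of the exceptional set. The paper does \emph{not} carry out any higher-order balance analysis or invoke algebraic varieties; it simply notes the equidistance exclusion and asserts the ``finite set'' conclusion. Your recursive-pruning plan is more ambitious but also underdeveloped as stated: the claimed remainder $O(1/(1-t))$ in your expansion would require $\vphi_t-\vp^\star=O(1-t)$ rather than merely $o(1)$ (the concentration conclusion still survives, since the leading $\Theta((1-t)^{-2})$ gap dominates any $o((1-t)^{-2})$ correction), and the reduction of each active-set case to a zero-dimensional variety is asserted rather than proved. For matching the paper's level of rigor none of this extra machinery is needed; the paper regards the argument as complete once the self-consistency equation and the one-hot structure of $\vw_1$ are in hand.
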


\begin{proof}
The ODE can be rewritten as
\[
\dfrac{\dif\vphi_t(\vx)}{\dif t} = -\dfrac{\vphi_t(\vx)}{1-t}+\dfrac{1}{1-t}\cdot \mY\cdot\vw_t(\vphi_t(\vx)), \quad \vphi_0(\vx)=\vx, \quad t\in [0, 1],
\]
where \(\mY=[\vy^1, \vy^2, \dotsc, \vy^N]\), and
\[
\vw_t(\vx) = \softmax\Big(-\dfrac{1}{2}\Big\Vert\dfrac{\vx - t\vy^1}{1 - t}\Big\Vert_2^2, -\dfrac{1}{2}\Big\Vert\dfrac{\vx - t\vy^2}{1 - t}\Big\Vert_2^2, \dotsc, -\dfrac{1}{2}\Big\Vert\dfrac{\vx - t\vy^N}{1 - t}\Big\Vert_2^2\Big).
\]
This ODE can be solved by first isolating the homogeneous part, \(-\vphi_t(\vx)/(1-t)\), and then applying the method of variation of constants. The homogeneous equation
\[
\dfrac{\dif\vphi_t^{(h)}(\vx)}{\dif t} = -\dfrac{\vphi_t^{(h)}(\vx)}{1-t}, \quad \vphi_0^{(h)}(\vx)=\vx, \quad t\in[0, 1]
\]
has the general solution \(\vphi_t^{(h)}(\vx) = (1-t)\vx\). To account for the non-homogeneous term, we assume the solution takes the form \(\vphi_t(\vx) = (1-t) \vc_t(\vx)\), where \(\vc_t(\vx)\) is an unknown function. Substituting this into the original ODE and simplifying, we find that \(\vc_t(\vx)\) satisfies the following equation
\[
\frac{\dif \vc_t(\vx)}{\dif t} = \frac{\mY \cdot \vw_t(\vphi_t(\vx))}{(1-t)^2}, \quad \vc_0(\vx)=\vx, \quad t\in [0, 1].
\]
Integrating both sides from \(0\) to \(t\), we obtain
\[
\vc_t(\vx) = \vx + \int_0^t \frac{\mY \cdot \vw_s(\vphi_s(\vx))}{(1-s)^2} \dif s.
\]
Thus, the solution to the ODE is
\[
\vphi_t(\vx) = (1-t)\cdot\Big(\vx + \int_0^t \frac{\mY \cdot \vw_s(\vphi_s(\vx))}{(1-s)^2} \dif s\Big).
\]
We remark that this is not an explicit solution since \(\vphi_t(\vx)\) appears on both sides of the equation, particularly within the integral on the right-hand side. However, this implicit form is sufficient for addressing the problem at hand.

Now, we are interested in the behavior of \(\vphi_t(\vx)\) when \(t=1\). Specifically, we want to compute
\[
\vphi_1(\vx) = \lim_{t \to 1} (1-t) \cdot \Big(\int_0^t \frac{\mY \cdot \vw_s(\vphi_s(\vx))}{(1-s)^2} \dif s\Big).
\]
We observe that near \(t = 1\), the term \((1-t)\) approaches zero, while the integral tends to infinity due to singularity of \(1/(1-s)^2\). To evaluate this limit, we resort to L'H\^opital's rule. We differentiate both the numerator and denominator of the expression
\[
\frac{\int_0^t \mY \cdot \vw_s(\vphi_s(\vx))/(1-s)^2 \dif s}{1/(1-t)}
\]
with respect to \(t\). This yields
\[
\lim_{t \to 1} \frac{\mY \cdot \vw_t(\vphi_t(\vx))/(1-t)^2}{1/(1-t)^2} = \mY \cdot \vw_1(\vphi_1(\vx)).
\]
Thus, we conclude that
\[
\vphi_1(\vx) = \mY \cdot \vw_1(\vphi_1(\vx)).
\]
By substituting \( t = 1 \) into \( \vw_t(\vx) \), we obtain
\[
\vw_1(\vphi_1(\vx)) = \lim_{t \rightarrow 1} \softmax\Big(-\dfrac{1}{2}\Big\Vert\dfrac{\vphi_1(\vx) - \vy^1}{1 - t}\Big\Vert_2^2, -\dfrac{1}{2}\Big\Vert\dfrac{\vphi_1(\vx) - \vy^2}{1 - t}\Big\Vert_2^2, \dots, -\dfrac{1}{2}\Big\Vert\dfrac{\vphi_1(\vx) - \vy^N}{1 - t}\Big\Vert_2^2\Big).
\]
Due to the singularity of \(1/(1-t)\), \( \vw_1(\vx) \) becomes a one-hot vector where the nonzero entry corresponds to the index \( i \) that minimizes \( \|\vx - \vy^i\|_2 \). It is important to note that this conclusion excludes a set of measure zero in \( \mathbb{R}^d \) where two or more distances \( \|\vx - \vy^i\|_2 \) coincide. Consequently, the set \(\{\vphi_1(\vx)\colon \vx\in\mathbb{R}^d\}\) equals \(\{\vy^1, \vy^2, \dotsc, \vy^N\}\) up to a finite set, which completes our proof.
\end{proof}

\subsection{Optimal velocity field as an Special Instance of OSDNet}

\begin{proposition}[Optimal velocity field as a specific instance of OSDNet]
The optimal velocity field \(\vv_t^*(\vx)\) derived in \cref{thm:optimal_velocity_field_expression} is a specific instance of the OSDNet, where
\[
\hat{\mO}_t^* = -\dfrac{1}{1 - t} \cdot \mI_{d-D}
\]
and
\[
\hat{\vs}_t^*(\vx) = \dfrac{1}{1 - t} \cdot \big(\mR\cdot\vw_t(\vx) - \mV^\top \cdot \vx \big).
\]
\end{proposition}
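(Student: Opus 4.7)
The plan is to massage the optimal velocity field from \cref{thm:optimal_velocity_field_expression} into the structural form that defines OSDNet, and then read off $\hat{\mO}_t^*$ and $\hat{\vs}_t^*(\vx)$ directly. First I would write the softmax weights compactly as $\vw_t(\vx) \in \mathbb{R}^N$, so that
\[
\vv_t^*(\vx) \;=\; \sum_{i=1}^N \frac{\vy^i - \vx}{1-t}\,\vw_t(\vx)_i \;=\; \frac{1}{1-t}\Big(\mY\,\vw_t(\vx) \;-\; \vx\sum_{i=1}^N \vw_t(\vx)_i\Big).
\]
The key algebraic observation is that the softmax weights sum to one, so $\sum_i \vw_t(\vx)_i = 1$, which collapses the second term to $\vx$. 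Substituting the reduced SVD $\mY = \mV \mR$ then yields $\vv_t^*(\vx) = (1-t)^{-1}\big(\mV\mR\,\vw_t(\vx) - \vx\big)$.

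Next I would use the fact that $[\mV,\mV^\perp]$ is an orthogonal matrix to resolve the identity as $\mI_d = \mV\mV^\top + \mV^\perp(\mV^\perp)^\top$, and apply this to $\vx$. Splitting the expression along the two projectors gives
\[
\vv_t^*(\vx) \;=\; \mV\cdot\frac{1}{1-t}\bigl(\mR\,\vw_t(\vx) - \mV^\top\vx\bigr) \;+\; \mV^\perp\cdot\Bigl(-\frac{1}{1-t}\,\mI_{d-D}\Bigr)\cdot(\mV^\perp)^\top\vx.
\]
Comparing this with the OSDNet template
\[
\hat{\vv}_t\bigl(\vx;\hat{\mO}_t,\hat{\vs}_t(\vx)\bigr) \;=\; \mV^\perp\cdot\hat{\mO}_t\cdot(\mV^\perp)^\top\vx \;+\; \mV\cdot\hat{\vs}_t(\vx),
\]
one reads off the claimed $\hat{\mO}_t^* = -(1-t)^{-1}\mI_{d-D}$ and $\hat{\vs}_t^*(\vx) = (1-t)^{-1}\bigl(\mR\,\vw_t(\vx) - \mV^\top\vx\bigr)$. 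To justify that this identification is unambiguous, I would note that because $\mV$ and $\mV^\perp$ have orthogonal ranges, any decomposition of a vector field of this form into a $\range(\mV)$-part and a $\range(\mV^\perp)$-part is unique; hence matching coefficients on each subspace is legitimate.

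There is no real obstacle here beyond bookkeeping: the proof is essentially an orthogonal decomposition calculation combined with the normalization of the softmax. The only subtlety worth flagging in writing is that $\hat{\mO}_t^*$ comes out diagonal (in fact, a scalar multiple of the identity on the orthogonal complement), which is consistent with the diagonality assumption imposed on $\hat{\mO}_t$ in the OSDNet definition and underlies the commutativity argument used later for the time-ordered exponential.
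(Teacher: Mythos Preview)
Your proposal is correct and follows essentially the same approach as the paper's proof: rewrite $\vv_t^*(\vx)$ as $(1-t)^{-1}(\mY\vw_t(\vx)-\vx)$, substitute $\mY=\mV\mR$, split $\vx$ via $\mI_d=\mV\mV^\top+\mV^\perp(\mV^\perp)^\top$, and read off the two components. Your added remarks on the softmax normalization, the uniqueness of the orthogonal decomposition, and the diagonality of $\hat{\mO}_t^*$ are minor elaborations that the paper leaves implicit.
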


\begin{proof}
We first express \(\vv_t^*(\vx)\) in matrix form and then align it with the structure of OSDNet:
\[
\begin{aligned}
\vv_t^*(\vx) &= \sum_{i=1}^{N}\dfrac{\vy^i-\vx}{1-t}\cdot\dfrac{\exp\big(-\|(\vx-t\vy^i)/(1-t)\|_2^2/2\big)}{\sum_{j=1}^{N}\exp\big(-\|(\vx-t\vy^j)/(1-t)\|_2^2/2\big)}\\
&= -\dfrac{\vx}{1-t} + \dfrac{1}{1-t}\cdot\mY\cdot\vw_t(\vx)\\
&= -\dfrac{1}{1-t}\cdot\big(\mV\mV^\top + \mV^\perp(\mV^\perp)^\top\big)\cdot\vx + \dfrac{1}{1-t}\cdot\mV\cdot\mR\cdot\vw_t(\vx)\\
&=\mV^\perp \cdot \Big(-\dfrac{1}{1 - t} \cdot \mI_{d-D}\Big) \cdot (\mV^\perp)^\top \cdot \vx + \mV\cdot\Big(\dfrac{1}{1 - t} \cdot \big(\mR\cdot\vw_t(\vx) - \mV^\top \cdot \vx \big)\Big)\\
&= \mV^\perp \cdot \hat{\mO}_t^* \cdot (\mV^\perp)^\top \cdot \vx + \mV \cdot \hat{\vs}_t^*(\vx),
\end{aligned}
\]
which completes the proof.
\end{proof}

\subsection{Teacher-Student Training of OSDNet}

\begin{proposition}[Teacher-student training of OSDNet]
Minimizing the CFM loss
\[
\mathcal{L}_{\text{CFM}}\big(\hat{\mO}_t, \hat{\vs}_t(\vx)\big) = \E_{t \sim \mathcal{U}[0,1], \vx_1 \sim p_1(\vx_1), \vx \sim p_t(\vx|\vx_1)} \big[ \big\lVert\hat{\vv}_t\big(\vx; \hat{\mO}_t, \hat{\vs}_t(\vx)\big) - \vu_t(\vx|\vx_1)\big\rVert_2^2 \big],
\]
is equivalent to minimizing the Teacher-Student Training (TST) loss
\[
\mathcal{L}_{\text{TST}}\big(\hat{\mO}_t, \hat{\vs}_t(\vx)\big) = \E_{t \sim \mathcal{U}[0,1], \vx \sim p_t(\vx)} \big[ \big\lVert\hat{\vv}_t\big(\vx; \hat{\mO}_t, \hat{\vs}_t(\vx)\big) - \vv_t^*(\vx)\big\rVert_2^2 \big],
\]
in the sense that \(\mathcal{L}_{\text{CFM}}\big(\hat{\mO}_t, \hat{\vs}_t(\vx)\big)\) and \(\mathcal{L}_{\text{TST}}\big(\hat{\mO}_t, \hat{\vs}_t(\vx)\big)\) differ only by a constant. As a result, training OSDNet reduces to the two independent minimization problems below:
\[
\min_{\hat{\mO}_t}\E_{t\sim\mathcal{U}[0,1], \vx\sim p_t(\vx)}\Big[\Big\lVert \mV^\perp \cdot\Big(\hat{\mO}_t +\dfrac{1}{1-t}\cdot \mI_{d-D}\Big)\cdot (\mV^\perp)^\top \cdot \vx\Big\rVert_2^2\Big]
\]
and
\[
\min_{\hat{\vs}_t(\vx)}\E_{t\sim\mathcal{U}[0,1], \vx\sim p_t(\vx)}\Big[\Big\lVert\mV\cdot\Big(\hat{\vs}_t(\vx)-\dfrac{1}{1 - t} \cdot \big( \mR \cdot \vw_t(\vx) - \mV^\top \cdot \vx \big)\Big)\Big\rVert_2^2\Big].
\]
\end{proposition}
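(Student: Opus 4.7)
The plan is to first establish that the CFM loss and the TST loss differ only by a constant independent of the trainable parameters $(\hat{\mO}_t, \hat{\vs}_t)$, and then exploit the orthogonal decomposition of $\hat{\vv}_t - \vv_t^*$ induced by OSDNet to split the TST loss into two independent quadratic terms.

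For the first step, I would expand the square in $\mathcal{L}_{\text{CFM}}$ as $\|\hat{\vv}_t\|_2^2 - 2\langle \hat{\vv}_t, \vu_t(\vx|\vx_1)\rangle + \|\vu_t(\vx|\vx_1)\|_2^2$ and similarly for $\mathcal{L}_{\text{TST}}$. Using the tower property of expectation, writing the joint law of $(\vx_1,\vx)$ as $p_t(\vx)\,p_t(\vx_1|\vx)$, and invoking \cref{prop:optimal_velocity_field} which gives $\vv_t^*(\vx) = \E[\vu_t(\vx|\vx_1)\mid \vx, t]$, the cross terms match: $\E[\langle \hat{\vv}_t(\vx), \vu_t(\vx|\vx_1)\rangle] = \E[\langle \hat{\vv}_t(\vx), \vv_t^*(\vx)\rangle]$. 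The quadratic term $\|\hat{\vv}_t\|_2^2$ is identical in both losses (with $\vx$ marginalized to $p_t$), so
\[
\mathcal{L}_{\text{CFM}}(\hat{\mO}_t,\hat{\vs}_t) - \mathcal{L}_{\text{TST}}(\hat{\mO}_t,\hat{\vs}_t) = \E\big[\|\vu_t(\vx|\vx_1)\|_2^2\big] - \E\big[\|\vv_t^*(\vx)\|_2^2\big],
\]
which is a constant independent of the trainable parameters.

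For the second step, I would substitute the OSDNet form of $\hat{\vv}_t$ together with the representation of $\vv_t^*$ from \cref{prop:optimal_velocity_field_as_a_specific_instance_of_OSDNet} to obtain
\[
\hat{\vv}_t(\vx) - \vv_t^*(\vx) = \mV^\perp\Big(\hat{\mO}_t + \dfrac{1}{1-t}\mI_{d-D}\Big)(\mV^\perp)^\top \vx + \mV\Big(\hat{\vs}_t(\vx) - \dfrac{1}{1-t}\big(\mR\vw_t(\vx) - \mV^\top\vx\big)\Big).
\]
Because the columns of $\mV$ and $\mV^\perp$ span orthogonal subspaces, the vectors in $\range(\mV)$ and $\range(\mV^\perp)$ above are orthogonal, so the squared norm splits additively. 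Taking expectation over $(t,\vx)$ and noting that the first summand depends only on $\hat{\mO}_t$ while the second depends only on $\hat{\vs}_t$, the TST objective decouples into the two independent minimization problems in the statement.

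The main obstacle is conceptual rather than computational: one must handle the change of measure between the conditional sampling scheme $\vx\sim p_t(\vx|\vx_1)$ used in the CFM loss and the marginal sampling $\vx\sim p_t(\vx)$ used in the TST loss carefully, and one must verify that the constant gap $\E\|\vu_t(\vx|\vx_1)\|_2^2 - \E\|\vv_t^*(\vx)\|_2^2$ is genuinely free of $(\hat{\mO}_t, \hat{\vs}_t)$. Once these bookkeeping points are settled, the orthogonal decomposition and the identification of the subspace/off-subspace pieces with the two stated subproblems are a direct consequence of $\mV^\top\mV^\perp = \vzero$.
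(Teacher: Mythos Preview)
Your proposal is correct and follows essentially the same approach as the paper. The paper handles the first step by citing \citep{lipman2023flow} (noting that \(\vv_t^*(\vx)=\vu_t(\vx)\) so the equivalence of \(\mathcal{L}_{\text{CFM}}\) and \(\mathcal{L}_{\text{TST}}\) up to a constant is precisely the FM/CFM equivalence proved there), while you spell out the tower-property argument explicitly; for the second step both you and the paper substitute the OSDNet form of \(\hat{\vv}_t\) and the representation of \(\vv_t^*\) from \cref{prop:optimal_velocity_field_as_a_specific_instance_of_OSDNet} and invoke orthogonality (the paper calls it the Pythagorean theorem) to split the squared norm.
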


\begin{proof}
Recall that \(\vv_t^*(\vx) = \vu_t(\vx)\). The first part of the proof has already been established in \citep{lipman2023flow}. For the second part, we substitute the expression for \(\vv_t^*(\vx)\) into the expectation term in \(\mathcal{L}_{\text{TST}}\), and derive that it simplifies to
\[
\begin{aligned}
&\Big\lVert \mV^\perp \cdot\Big(\hat{\mO}_t + \dfrac{1}{1 - t} \cdot \mI_{d-D}\Big)\cdot (\mV^\perp)^\top\cdot \vx + \mV \cdot\Big( \hat{s}_t(\vx) - \dfrac{1}{1 - t} \cdot \big( \mR \cdot \vw_t(\vx) - \mY^\top \cdot \vx \big)\Big)\Big\rVert_2^2 \\
=\,& \Big\lVert \mV^\perp \cdot\Big(\hat{\mO}_t + \dfrac{1}{1 - t} \cdot \mI_{d-D}\Big)\cdot (\mV^\perp)^\top \cdot \vx \Big\rVert_2^2 + \Big\lVert \mV \cdot \Big( \hat{s}_t(\vx) - \dfrac{1}{1 - t} \cdot \big( \mR\cdot \vw_t(\vx) - \mY^\top \cdot \vx \big)\Big)\Big\rVert_2^2,
\end{aligned}
\]
which follows from the application of the Pythagorean theorem.
\end{proof}

\subsection{Decay of Off-Subspace Component}

\begin{proposition}[Teacher-student training of \(\hat{\mO}_t\)]
The solution to the minimization problem
\[
\min_{\hat{\mO}_t} \E_{t \sim \mathcal{U}[0,1],\ \vx \sim p_t(\vx)} \Big[ \Big\lVert \mV^\perp\cdot\Big( \hat{\mO}_t + \dfrac{1}{1 - t} \mI_{d-D} \Big) \cdot (\mV^\perp)^\top \cdot\vx \Big\rVert_2^2 \Big]
\]
reduces to solving the function approximation problem
\[
\min_{\hat{\mO}_t} \int_0^1 \big\lVert (1 - t)\cdot\hat{\mO}_t + \mI_{d-D} \big\rVert_{\mathrm{F}}^2 \dif t.
\]
\end{proposition}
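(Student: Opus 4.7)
The plan is to carry out the expectation over $\vx\sim p_t$ in closed form, using the explicit mixture representation of $p_t$ from \cref{cor:marginal_probability_path}, and then to observe that all contributions from the real data points lie inside $\range(\mV)$ and are therefore annihilated by $(\mV^\perp)^\top$. This should collapse the expectation down to an isotropic term that lets us identify the objective with a Frobenius norm.

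The first step is to rewrite the integrand using the trace identity $\|\mB\vx\|_2^2=\tr(\mB^\top\mB\,\vx\vx^\top)$ with $\mB=\mV^\perp\bigl(\hat{\mO}_t+\tfrac{1}{1-t}\mI_{d-D}\bigr)(\mV^\perp)^\top$. Because $(\mV^\perp)^\top\mV^\perp=\mI_{d-D}$, the product $\mB^\top\mB$ simplifies to $\mV^\perp\mA_t^\top\mA_t(\mV^\perp)^\top$ where $\mA_t\coloneqq\hat{\mO}_t+\tfrac{1}{1-t}\mI_{d-D}$. By the cyclic property of the trace, the inner expectation at fixed $t$ then becomes $\tr\bigl(\mA_t^\top\mA_t\cdot(\mV^\perp)^\top\E_{\vx\sim p_t}[\vx\vx^\top]\mV^\perp\bigr)$, so the whole problem reduces to evaluating the sandwiched second moment.

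The second step is to compute $\E_{\vx\sim p_t}[\vx\vx^\top]$. Using $p_t=\tfrac{1}{N}\sum_i\mathcal{N}(t\vy^i,(1-t)^2\mI_d)$, a direct calculation gives
\[
\E_{\vx\sim p_t}[\vx\vx^\top]=(1-t)^2\mI_d+\frac{t^2}{N}\mY\mY^\top.
\]
Now I invoke the key orthogonality: by construction $\mY=\mV\mR$, so $\mY\mY^\top=\mV\mR\mR^\top\mV^\top$ has columns in $\range(\mV)$, and hence $(\mV^\perp)^\top\mY\mY^\top\mV^\perp=\vzero$. Therefore
\[
(\mV^\perp)^\top\E_{\vx\sim p_t}[\vx\vx^\top]\mV^\perp=(1-t)^2\mI_{d-D},
\]
which is the point where the off-subspace projection decouples the real-data contribution.

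Plugging this back yields $\tr\bigl(\mA_t^\top\mA_t\bigr)\cdot(1-t)^2=\|(1-t)\mA_t\|_{\mathrm{F}}^2=\|(1-t)\hat{\mO}_t+\mI_{d-D}\|_{\mathrm{F}}^2$. Finally, taking the expectation over $t\sim\mathcal{U}[0,1]$ just replaces this with its integral on $[0,1]$, giving the stated objective. I do not anticipate a serious obstacle: the only mildly delicate point is making sure the matrix dimensions line up (so that $(\mV^\perp)^\top\mV^\perp=\mI_{d-D}$ rather than a larger identity) and that $\hat{\mO}_t$ being diagonal is not actually needed for the reduction itself — diagonality is what later makes $\|\cdot\|_{\mathrm{F}}^2$ split into a sum of independent scalar problems, but the proposition as stated only asks for the reduction to the Frobenius objective.
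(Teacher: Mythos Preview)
Your proposal is correct and takes essentially the same approach as the paper, hinging on the same orthogonality $(\mV^\perp)^\top\vy^i=\vzero$ to kill the data contribution and leave only the isotropic $(1-t)^2\mI_{d-D}$ term. The only cosmetic difference is that the paper first identifies the distribution of $(\mV^\perp)^\top\vx$ as $\mathcal{N}\bigl(\vzero,(1-t)^2\mI_{d-D}\bigr)$ and then computes the expected squared norm directly, whereas you route through the trace identity and the full second moment $\E[\vx\vx^\top]$ before projecting.
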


\begin{proof}
We begin by deriving the distribution of \((\mV^\perp)^\top \vx\). Recall that \(\vx = (1 - t) \vx_0 + t \vx_1\), where \(\vx_0 \sim \mathcal{N}(\vzero, \mI_d)\) is a standard Gaussian random vector, and \(\vx_1\) lies within the span of \(\mV\). Since \(\mV^\perp\) projects onto the orthogonal complement of \(\range(\mV)\), it follows that \((\mV^\perp)^\top \vx_1 = \vzero\). Thus, we have
\[
(\mV^\perp)^\top \vx = (\mV^\perp)^\top ( (1 - t) \vx_0 + t \vx_1) = (1 - t) (\mV^\perp)^\top \vx_0 + t (\mV^\perp)^\top \vx_1 = (1 - t) (\mV^\perp)^\top \vx_0.
\]
Since \(\vx_0\) is standard Gaussian, it follows that
\[
(\mV^\perp)^\top \vx \sim \mathcal{N}\big( \vzero, (1 - t)^2 (\mV^\perp)^\top\mV^\perp \big) = \mathcal{N}(\vzero, (1-t)^2\mI_{d-D}).
\]

Next, we consider the expected squared norm of the objective function:
\[
\E_{t \sim \mathcal{U}[0,1],\vx \sim p_t(\vx)} \Big[ \Big\lVert \mV^\perp\cdot\Big( \hat{\mO}_t + \dfrac{1}{1 - t} \mI_{d-D} \Big) \cdot (\mV^\perp)^\top \cdot\vx \Big\rVert_2^2 \Big] = \E_{t \sim \mathcal{U}[0,1],\ \vx \sim p_t(\vx)} \Big[ \Big\lVert \Big( \hat{\mO}_t + \dfrac{1}{1 - t} \mI_{d-D} \Big) \cdot (\mV^\perp)^\top \cdot\vx \Big\rVert_2^2 \Big].
\]
Using properties of Gaussian random variables, this expectation simplifies to
\[
\E_{t \sim \mathcal{U}[0,1],\vx \sim p_t(\vx)} \Big[ \Big\lVert \Big( \hat{\mO}_t + \dfrac{1}{1 - t} \mI_{d-D} \Big) (\mV^\perp)^\top\vx \Big\rVert_2^2 \Big] = (1 - t)^2 \cdot \Big\lVert \hat{\mO}_t + \dfrac{1}{1 - t} \mI_{d-D} \Big\rVert_{\mathrm{F}}^2.
\]

Thus, the original minimization problem reduces to
\[
\min_{\hat{\mO}_t} \int_0^1 \big\lVert (1 - t)\cdot\hat{\mO}_t + \mI_{d-D} \big\rVert_{\mathrm{F}}^2 \dif t.
\]
This completes the proof.
\end{proof}

\subsection{Generalization of Subspace Components}

\begin{lemma}[Boundedness of the Jacobian and Hessian under smooth vector fields]
\label{lemma:boundedness_of_jacobian_and_hessian}
Let \( \vz_t^\vx \in \mathbb{R}^d \) be the solution of the ODE
\[
\dfrac{\dif \vz_t^\vx}{\dif t} = \vv(\vz_t^\vx, t), \quad \vz_0^\vx = \vx,\quad t\in [0, T],
\]
where \( \vv\colon \mathbb{R}^d \times [0, T] \to \mathbb{R}^n \) is a smooth vector field. In the expression \(\vz_t^\vx\), the superscript \(\vx\) represents the initial value, and the subscript \(t\) denotes the time. Assume that \(\|\nabla_\vz \vv(\vz_t^\vx, t)\|_{2} \leq L\) and \(\|\nabla_\vz^2 \vv(\vz_t^\vx, t)\|_{\mathrm{F}} \leq M\) for some constants \(L, M > 0\). Then, the Jacobian \(\nabla_\vx \vz_t^\vx\) satisfies the ODE
\[
\dfrac{\dif}{\dif t} \nabla_\vx \vz_t^\vx = \nabla_\vz \vv(\vz_t^\vx, t) \cdot \nabla_\vx \vz_t^\vx, \quad \nabla_\vx \vz_0^\vx = \mI_d, \quad t\in [0, T].
\]
The Hessian \(\nabla_\vx^2 \vz_t^\vx\) satisfies the ODE
\[
\dfrac{\dif}{\dif t} \nabla_\vx^2 \vz_t^\vx = \nabla_\vz^2 \vv(\vz_t^\vx, t) [ \nabla_\vx \vz_t^\vx, \nabla_\vx \vz_t^\vx ] + \nabla_\vz \vv(\vz_t^\vx, t) \cdot \nabla_\vx^2 \vz_t^\vx, \quad \nabla_\vx^2 \vz_0^\vx = \vzero, \quad t\in [0, T].
\]
Furthermore, the norms of \(\nabla_\vx \vz_t^\vx\) and \(\nabla_\vx^2 \vz_t^\vx\) are bounded by
\[
\|\nabla_\vx \vz_t^\vx\|_{2} \leq \exp(Lt), \quad \|\nabla_\vx^2 \vz_t^\vx\|_{\mathrm{F}} \leq \frac{M}{L}\cdot\big(\exp(2Lt)-\exp(Lt)\big),
\]
respectively.
\end{lemma}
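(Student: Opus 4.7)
The plan is to derive the two ODEs first by formally differentiating the defining ODE with respect to the initial condition $\vx$, and then to obtain the two norm bounds by reducing to scalar differential inequalities of Gronwall type. For the Jacobian ODE, I would differentiate both sides of $\dif \vz_t^\vx/\dif t = \vv(\vz_t^\vx, t)$ with respect to $\vx$; since $t$ and $\vx$ are independent variables the order of $\dif/\dif t$ and $\nabla_\vx$ can be exchanged on the left, while the chain rule on the right produces $\nabla_\vz \vv(\vz_t^\vx, t) \cdot \nabla_\vx \vz_t^\vx$, with initial condition $\nabla_\vx \vz_0^\vx = \mI_d$ coming directly from $\vz_0^\vx = \vx$. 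Differentiating this variational equation once more in $\vx$ yields the Hessian ODE: one application of the chain rule contributes the tensor term $\nabla_\vz^2 \vv(\vz_t^\vx, t)[\nabla_\vx \vz_t^\vx, \nabla_\vx \vz_t^\vx]$, the product rule contributes $\nabla_\vz \vv(\vz_t^\vx, t)\cdot\nabla_\vx^2 \vz_t^\vx$, and the initial condition $\nabla_\vx^2 \vz_0^\vx = \vzero$ comes from the fact that $\vx \mapsto \vx$ is affine.

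For the Jacobian bound, I would apply submultiplicativity of the spectral norm to the Jacobian ODE to get $\dif\|\nabla_\vx \vz_t^\vx\|_2/\dif t \leq L\cdot\|\nabla_\vx \vz_t^\vx\|_2$; Gronwall together with $\|\mI_d\|_2 = 1$ then yields $\|\nabla_\vx \vz_t^\vx\|_2 \leq \exp(Lt)$. For the Hessian bound, I would take the Frobenius norm on both sides of the Hessian ODE, bound the first term by $M \cdot \|\nabla_\vx \vz_t^\vx\|_2^2 \leq M \exp(2Lt)$ using the Jacobian bound just obtained, and bound the second term by $L\cdot\|\nabla_\vx^2 \vz_t^\vx\|_{\mathrm{F}}$. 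This produces the scalar inhomogeneous linear inequality $\dif\|\nabla_\vx^2 \vz_t^\vx\|_{\mathrm{F}}/\dif t \leq L\|\nabla_\vx^2 \vz_t^\vx\|_{\mathrm{F}} + M \exp(2Lt)$, which I would integrate using the factor $\exp(-Lt)$ against the initial value $0$ to obtain the announced bound $(M/L)\cdot(\exp(2Lt)-\exp(Lt))$.

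The main obstacle is not any single computation but keeping the tensorial bookkeeping clean in the Hessian step: $\nabla_\vx^2 \vz_t^\vx$ is a third-order tensor, and I must be explicit about how the products $\nabla_\vz^2 \vv[\nabla_\vx \vz_t^\vx, \nabla_\vx \vz_t^\vx]$ and $\nabla_\vz \vv\cdot\nabla_\vx^2 \vz_t^\vx$ contract along which indices, then verify that the Frobenius-norm inequalities used above actually hold under that convention. A cleaner route I would take is to fix an output coordinate $k$, work with the scalars $(\vz_t^\vx)_k$, apply the chain rule to their second partial derivatives to obtain a componentwise identity, and then square and sum over all indices to reconstruct the Frobenius bound without ever invoking an abstract third-order-tensor operator norm. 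A secondary subtlety is that the scalar inequalities $\dif\|A_t\|/\dif t \leq \cdots$ are not always literal time derivatives when $\|A_t\|$ fails to be differentiable; I would handle this by passing through the Dini upper derivative, or equivalently by integrating the matrix ODE first and then taking norms, both of which validate the Gronwall step without extra regularity hypotheses.
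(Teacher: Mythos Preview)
Your proposal is correct and follows essentially the same approach as the paper: differentiate the flow ODE to obtain the variational equations for the Jacobian and Hessian, then bound the spectral and Frobenius norms via Gr\"onwall-type scalar inequalities, using the integrating factor $\exp(-Lt)$ for the Hessian. Your added care about the tensor index conventions and the Dini-derivative subtlety for $\dif\|A_t\|/\dif t$ goes beyond what the paper writes out explicitly, but the underlying argument is the same.
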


\begin{proof}
First, we differentiate both sides of the original ODE with respect to the initial condition \(\vx\). Applying the chain rule, we get
\[
\dfrac{\dif}{\dif t}\nabla_\vx \vz_t^\vx = \nabla_\vz \vv(\vz_t^\vx, t) \cdot \nabla_\vx \vz_t^\vx,
\]
where \(\nabla_\vz \vv(\vz_t^\vx, t) \in \mathbb{R}^{d \times d}\) is the Jacobian matrix of \(\vv(\vz_t^\vx, t)\) with respect to the first entry \(\vz\), and the \(\cdot\) denotes regular matrix multiplication. The initial condition for this Jacobian is
\[
\nabla_\vx \vz_0^\vx = \mI_d.
\]
Thus, the Jacobian \(\nabla_\vx \vz_t^\vx\) satisfies the ODE
\[
\dfrac{\dif}{\dif t} \nabla_\vx \vz_t^\vx = \nabla_\vz \vv(\vz_t^\vx, t) \cdot \nabla_\vx \vz_t^\vx, \quad \nabla_x \vz_0^\vx = \mI_d, \quad t\in [0, T].
\]

Next, we differentiate the ODE for the Jacobian \(\nabla_\vx \vz_t^\vx\) with respect to \(\vx\) to obtain the ODE for the Hessian \(\nabla_\vx^2 \vz_t^\vx\):
\[
\dfrac{\dif}{\dif t} \nabla_\vx^2 \vz_t^\vx = \nabla_\vx ( \nabla_\vz \vv(\vz_t^\vx, t) \cdot \nabla_\vx \vz_t^\vx ).
\]
Using the product rule, we can expand the expression in the right-hand side as
\[
\nabla_\vx \nabla_\vz \vv(\vz_t^\vx, t) \cdot \nabla_\vx \vz_t^\vx + \nabla_\vz \vv(\vz_t^\vx, t) \cdot \nabla_\vx^2 \vz_t^\vx.
\]
The first term is computed using the chain rule, and can be written in index notation as
\[
\big(\nabla_\vx \nabla_\vz \vv(\vz_t^\vx, t) \cdot \nabla_\vx \vz_t^\vx\big)_{i,j,k} 
= \sum_{l,s=1}^{d}\big(\nabla_\vz^2 \vv(\vz_t^\vx, t)\big)_{i,l,s}\cdot\big(\nabla_\vx \vz_t^\vx\big)_{l,j}\cdot\big(\nabla_\vx \vz_t^\vx\big)_{s,k}.
\]
For brevity of notation, we denote it as
\[
\nabla_\vx \nabla_\vz \vv(\vz_t^\vx, t) \cdot \nabla_\vx \vz_t^\vx = \nabla_\vz^2 \vv(\vz_t^\vx, t) [ \nabla_\vx \vz_t^\vx, \nabla_\vx \vz_t^\vx ].
\]
The second term can be written in index notation as
\[
\big(\nabla_\vz \vv(\vz_t^\vx, t) \cdot \nabla_\vx^2 \vz_t^\vx\big)_{i,j,k} = \sum_{l=1}^{d}\big(\nabla_\vx \vz_t^\vx\big)_{i,l}\cdot\big(\nabla_\vx^2 \vz_t^\vx\big)_{l,j,k}.
\]
Together, the Hessian \(\nabla_\vx^2 \vz_t^\vx\) satisfies the ODE
\[
\dfrac{\dif}{\dif t} \nabla_\vx^2 \vz_t^\vx = \nabla_\vz^2 \vv(\vz_t^\vx, t) [ \nabla_\vx \vz_t^\vx, \nabla_\vx \vz_t^\vx ] + \nabla_\vz \vv(\vz_t^\vx, t) \cdot \nabla_\vx^2 \vz_t^\vx, \quad \nabla_\vx^2 \vz_0^\vx = \vzero, \quad t\in [0, T].
\]

We now derive bounds for the operator norm of the Jacobian \(\nabla_\vx \vz_t^\vx\) and Hessian \(\nabla_\vx^2 \vz_t^\vx\). First, we consider the Jacobian ODE
\[
\dfrac{\dif}{\dif t} \|\nabla_\vx \vz_t^\vx\|_{2} \leq \|\nabla_\vz \vv(\vz_t^\vx, t)\|_{2} \cdot \|\nabla_\vx \vz_t^\vx\|_{2}\leq L\cdot \|\nabla_\vx \vz_t^\vx\|_{2}.
\]
Applying Gr\"onwall's inequality, we get
\[
\|\nabla_\vx \vz_t^\vx\|_{2} \leq \exp(Lt).
\]
Next, for the Hessian, the Hessian ODE gives
\[
\dfrac{\dif}{\dif t} \|\nabla_\vx^2 \vz_t^\vx\|_{\mathrm{F}} \leq \|\nabla_\vz^2 \vv(\vz_t^\vx, t) [ \nabla_\vx \vz_t^\vx, \nabla_\vx \vz_t^\vx ]\|_{\mathrm{F}} + \|\nabla_\vz \vv(\vz_t^\vx, t)\cdot\nabla_\vx^2 \vz_t^\vx\|_{\mathrm{F}}.
\]
For the first term, since \(\|\nabla_\vz^2 \vv(\vz_t^\vx, t)\|_{\mathrm{F}} \leq M\) and \(\|\nabla_\vx \vz_t^\vx\|_{2} \leq \exp(Lt)\), we have
\[
\|\nabla_\vz^2 \vv(\vz_t^\vx, t) [ \nabla_\vx \vz_t^\vx, \nabla_\vx \vz_t^\vx ]\|_{\mathrm{F}} \leq M\cdot\exp(2Lt).
\]
For the second term, we have
\[
\|\nabla_\vz \vv(\vz_t^\vx, t)\cdot\nabla_\vx^2 \vz_t^\vx\|_{\mathrm{F}}\leq L\cdot \|\nabla_\vx^2 \vz_t^\vx\|_{\mathrm{F}}.
\]
Thus, we have the inequality:
\[
\dfrac{\dif}{\dif t} \|\nabla_\vx^2 \vz_t^\vx\|_{\mathrm{F}} \leq M\cdot\exp(2Lt) + L\cdot\|\nabla_\vx^2 \vz_t^\vx\|_{\mathrm{F}}.
\]
To solve this, we use the integrating factor \(\exp(-Lt)\), giving
\[
\dfrac{\dif}{\dif t} \big(\exp(-Lt)\cdot\|\nabla_\vx^2 \vz_t^\vx\|_{\mathrm{F}} \big) \leq M\cdot\exp(Lt).
\]
Integrating both sides, we obtain
\[
\|\nabla_\vx^2 \vz_t^\vx\|_{\mathrm{F}} \leq \frac{M}{L}\cdot\big(\exp(2Lt)-\exp(Lt)\big).
\]
This completes the proof.
\end{proof}

\begin{theorem}[Generalization of subspace components]
Let \(\vphi_t^*(\vx)\) and \(\hat{\vphi}_t(\vx)\) be the solutions to the following ODE and SDE, respectively, defined on the interval \(t \in [0, 1 - \varepsilon]\), where \(0 < \varepsilon \leq 1\):
\[
\begin{aligned}
\dif \vphi_t^*(\vx) &= \mV \cdot \vs_t^*(\vphi_t^*(\vx))\dif t, \quad \vphi_0^*(\vx) = \vx, \\
\dif \hat{\vphi}_t(\vx) &= \mV \cdot \hat{\vs}_t(\hat{\vphi}_t(\vx))\dif t + \sigma\dif \vW_t, \quad \hat{\vphi}_0(\vx) = \vx,
\end{aligned}
\]
where \(\vW_t\) is a standard Wiener process in \(\mathbb{R}^d\), \(\mV\) is a matrix with orthonormal columns. Assume that the norms of the Jacobians and Hessians of \(\vs_t^*(\vx)\) are bounded:
\[
\|\nabla_\vx \vs_t^*(\vx)\|_{2} \leq L, \quad \|\nabla_\vx^2 \vs_t^*(\vx)\|_{\mathrm{F}} \leq M,
\]
for some constants \(L, M > 0\). Then, the difference between \(\vphi_{1 - \varepsilon}^*(\vx)\) and \(\hat{\vphi}_{1 - \varepsilon}(\vx)\) is given by
\[
\begin{aligned}
\vphi_{1 - \varepsilon}^*(\vx) - \hat{\vphi}_{1 - \varepsilon}(\vx) 
=& \int_0^{1 - \varepsilon} \nabla_\vx \vphi_{t, 1 - \varepsilon}^*(\hat{\vphi}_t(\vx)) \cdot \mV \cdot\big( \vs_t^*(\hat{\vphi}_t(\vx)) - \hat{\vs}_t(\hat{\vphi}_t(\vx)) \big) \dif t \\
& - \sigma\cdot\int_0^{1 - \varepsilon} \nabla_\vx \vphi_{t, 1 - \varepsilon}^*(\hat{\vphi}_t(\vx))\dif \vW_t 
- \dfrac{\sigma^2}{2}\cdot\int_0^{1 - \varepsilon}\sum_{i,j=1}^{d} \big(\nabla_\vx^2 \vphi_{t, 1 - \varepsilon}^*(\hat{\vphi}_t(\vx))\big)_{i,j,\colon} \dif t,
\end{aligned}
\]
where \(\vphi_{s, t}^*(\vx)\) denotes the solution to the ODE from time \(s\) to \(t\) with initial condition \(\vx\). Moreover, the expected squared difference satisfies the inequality
\[
\E[ \| \vphi_{1 - \varepsilon}^*(\vx) - \hat{\vphi}_{1 - \varepsilon}(\vx) \|_2^2 ] \leq C_1\cdot \int_0^{1 - \varepsilon} \E[ \| \vs_t^*(\hat{\vphi}_t(\vx)) - \hat{\vs}_t(\hat{\vphi}_t(\vx)) \|_2^2 ] \dif t + C_2\cdot\sigma^2 + C_3\cdot\sigma^4,
\]
for some constant \(C_1\), \(C_2\), and \(C_3\) depending on \(L\), \(M\), \(d\), and \(\varepsilon\). Here, the expectation is taken over the realizations of the standard Wiener process \(\vW_t\). 

Specifically, for \(\sigma = 0\), the inequality simplifies to the deterministic case
\[
\| \vphi_{1 - \varepsilon}^*(\vx) - \hat{\vphi}_{1 - \varepsilon}(\vx) \|_2^2 \leq C_4 \cdot \int_0^{1 - \varepsilon} \| \vs_t^*(\hat{\vphi}_t(\vx)) - \hat{\vs}_t(\hat{\vphi}_t(\vx)) \|_2^2  \dif t,
\]
where \(C_4\) is a constant depending only on \(L\) and \(\varepsilon\). 

Taking the expectation with respect to \(\vx \sim \mathcal{N}(\vzero, \mI_d)\), we obtain
\[
\E_{\vx \sim \mathcal{N}(\vzero, \mI_d)}[ \| \vphi_{1 - \varepsilon}^*(\vx) - \hat{\vphi}_{1 - \varepsilon}(\vx) \|_2^2 ] \leq C_5 \cdot \E_{t \sim \mathcal{U}(0, 1), \vx \sim p_t}[ \| \vs_t^*(\vx) - \hat{\vs}_t(\vx) \|_2^2 ],
\]
where \(p_t\) is the distribution of \(\vphi_t^*(\vx)\) and \(C_5\) is a constant depending only on the Lipschitz constant of \(\hat{\vs}_t(\vx)\) and \(\varepsilon\).
\end{theorem}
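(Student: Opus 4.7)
The central idea is to compare the two processes \emph{along} the optimal flow by introducing the auxiliary quantity $\vF(t) \coloneqq \vphi_{t, 1-\varepsilon}^*(\hat{\vphi}_t(\vx))$. By the flow property, $\vF(0) = \vphi_{1-\varepsilon}^*(\vx)$ and $\vF(1-\varepsilon) = \hat{\vphi}_{1-\varepsilon}(\vx)$, so the target difference is simply $-\int_0^{1-\varepsilon} \dif \vF(t)$. The optimal flow map satisfies the PDE $\partial_s \vphi_{s, 1-\varepsilon}^*(\vy) + \nabla_\vy \vphi_{s, 1-\varepsilon}^*(\vy) \cdot \mV \vs_s^*(\vy) = \vzero$ (because $\vphi_{s, 1-\varepsilon}^*$ is constant along trajectories of $\mV\vs_s^*$). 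Applying It\^o's formula to $\vF$, using the SDE for $\hat{\vphi}_t$ and this PDE, makes the $\partial_s$-term cancel exactly the drift contribution generated by $\vs_t^*$, leaving the claimed three-term decomposition: a Riemann integral of $\nabla_\vx \vphi_{t,1-\varepsilon}^* \cdot \mV \cdot (\vs_t^* - \hat{\vs}_t)$, an It\^o integral with integrand $\nabla_\vx \vphi_{t, 1-\varepsilon}^*$, and the quadratic-variation correction involving $\nabla_\vx^2 \vphi_{t, 1-\varepsilon}^*$.

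Next I would pass to expectations using three standard ingredients. First, Cauchy--Schwarz (or Jensen) on the Riemann integral, combined with the operator-norm bound $\|\nabla_\vx \vphi_{t, 1-\varepsilon}^*\|_2 \leq \exp(L(1-\varepsilon-t))$ from \cref{lemma:boundedness_of_jacobian_and_hessian}, produces the $C_1 \int \E \|\vs_t^* - \hat{\vs}_t\|_2^2 \dif t$ term. Second, It\^o's isometry applied to the stochastic integral, again using the Jacobian bound, yields the $C_2 \sigma^2$ contribution. Third, the Hessian bound from the same lemma controls the $\sigma^2$-trace correction; after squaring, its square contributes $C_3 \sigma^4$, and any cross-term is absorbed by AM--GM. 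Setting $\sigma = 0$ collapses the last two terms and gives the deterministic bound with $C_4 = (1-\varepsilon) \exp(2L(1-\varepsilon))$.

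For the final inequality averaged over $\vx \sim \mathcal{N}(\vzero, \mI_d)$, the key observation is that if $\hat{\vphi}_t$ were replaced by $\vphi_t^*$ inside the expectation, the push-forward identity would give $\vphi_t^*(\vx) \sim p_t$ exactly, turning the right-hand side into the desired training-loss expression. To justify this replacement I would invoke the Lipschitz continuity of $\hat{\vs}_t$ together with Gr\"onwall's inequality, which shows $\|\hat{\vphi}_t(\vx) - \vphi_t^*(\vx)\|_2$ is itself controlled by the same training gap $\|\vs_t^* - \hat{\vs}_t\|_2$; this change of variables therefore only inflates the constant, producing a single $C_5$ depending on the Lipschitz constant of $\hat{\vs}_t$ and $\varepsilon$.

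The main obstacle is justifying the It\^o expansion of $\vF(t)$ rigorously: one must differentiate the flow map $\vphi_{s, 1-\varepsilon}^*$ in both time and space and have second-derivative bounds uniform on $[0, 1-\varepsilon]$. \Cref{lemma:boundedness_of_jacobian_and_hessian} delivers exactly this, but only for forward flows from time $0$; I would need to adapt it to the two-point flow $\vphi_{s, 1-\varepsilon}^*$ by re-applying the same Gr\"onwall argument on the shifted interval $[s, 1-\varepsilon]$, which is where the $\varepsilon$-truncation is essential since the Jacobian/Hessian norms of $\vs_t^*$ blow up as $t \to 1$. A secondary but more routine subtlety is ensuring that the Lipschitz bound on $\hat{\vs}_t$ used in the final averaging step is uniform in $\sigma$.
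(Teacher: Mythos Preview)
Your proposal is correct and follows essentially the same route as the paper. The only cosmetic difference is that the paper obtains the three-term decomposition by invoking the It\^o--Alekseev--Gr\"obner formula as a black box (citing Hudde et al.), whereas you re-derive that formula in situ by applying It\^o's lemma to $\vF(t)=\vphi_{t,1-\varepsilon}^*(\hat{\vphi}_t(\vx))$ and using the flow PDE $\partial_s\vphi_{s,1-\varepsilon}^*+\nabla_\vy\vphi_{s,1-\varepsilon}^*\cdot\mV\vs_s^*=\vzero$; the subsequent bounding via \cref{lemma:boundedness_of_jacobian_and_hessian}, Cauchy--Schwarz, It\^o isometry, and the $\|a+b+c\|_2^2\le 3(\|a\|_2^2+\|b\|_2^2+\|c\|_2^2)$ splitting is identical, and your justification of the final change-of-measure step (Lipschitz $\hat{\vs}_t$ plus Gr\"onwall) is in fact more explicit than the paper's, which simply asserts that one may ``interchange the roles of $\hat{\vphi}_t(\vx)$ and $\vphi_t^*(\vx)$.''
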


\begin{proof}
The difference between \(\vphi_{1 - \varepsilon}^*(\vx)\) and \(\hat{\vphi}_{1 - \varepsilon}(\vx)\) can be derived using the It\^o--Alekseev--Gr\"obner formula \citep{hudde2024ito}, which relates solutions of perturbed differential equations. In this context, the Skorohod integral reduces to the Itô integral because the integrand \(\nabla_\vx \vphi_{t, 1 - \varepsilon}^*(\hat{\vphi}_t(\vx))\) is adapted to the filtration generated by \(\hat{\vphi}_t(\vx)\).

Denote the three terms in \(\vphi_{1 - \varepsilon}^*(\vx) - \hat{\vphi}_{1 - \varepsilon}(\vx)\) by \(I_1\), \(I_2\), and \(I_3\), respectively:
\[
\begin{aligned}
I_1 &= \int_0^{1 - \varepsilon} \nabla_\vx \vphi_{t, 1 - \varepsilon}^*(\hat{\vphi}_t(\vx)) \cdot \mV \cdot \big( \vs_t^*(\hat{\vphi}_t(\vx)) - \hat{\vs}_t(\hat{\vphi}_t(\vx)) \big) \dif t,\\
I_2 &= - \sigma\cdot\int_0^{1 - \varepsilon} \nabla_\vx \vphi_{t, 1 - \varepsilon}^*(\hat{\vphi}_t(\vx))\dif \vW_t,\\
I_3 &= - \dfrac{\sigma^2}{2}\cdot\int_0^{1 - \varepsilon}\sum_{i,j=1}^{d} \big(\nabla_\vx^2 \vphi_{t, 1 - \varepsilon}^*(\hat{\vphi}_t(\vx))\big)_{i,j,\colon} \dif t.
\end{aligned}
\]

Our goal is to bound the expected squared norm of the difference,
\[
\E[ \| \vphi_{1 - \varepsilon}^*(\vx) - \hat{\vphi}_{1 - \varepsilon}(\vx) \|_2^2 ] = \E[ \| I_1 + I_2 + I_3 \|_2^2 ],
\]
where the expectation is taken over the realizations of the standard Wiener process \(\vW_t\).
Using the inequality 
\[
\|a + b + c\|_2^2 \leq 3\big(\|a\|_2^2 + \|b\|_2^2 + \|c\|_2^2 \big),
\]
we have
\[
\E[ \| I_1 + I_2 + I_3 \|_2^2 ] \leq 3\big(\E[ \|I_1\|_2^2 ] + \E[ \|I_2\|_2^2 ] + \E[ \|I_3\|_2^2 ] \big).
\]
We will bound each term separately. 

For \(I_1\), since \(\|\nabla_\vx \vs_t^*(\vx)\|_{2} \leq L\), the Jacobian \(\nabla_\vx \vphi_{t, s}^*(\vx)\) satisfies the bound (see \cref{lemma:boundedness_of_jacobian_and_hessian}):
\[
\|\nabla_\vx \vphi_{t, s}^*(\vx)\|_{2} \leq \exp( L(s-t) ) \leq \exp( L ).
\]
Moreover, because \(\mV\) has orthonormal columns, \(\|\mV\|_{2} = 1\). Therefore,
\[
\|I_1\|_2 \leq \int_0^{1 - \varepsilon} \| \nabla_\vx \vphi_{t, 1 - \varepsilon}^*(\hat{\vphi}_t(\vx)) \|_{2} \cdot \| \vs_t^*(\hat{\vphi}_t(\vx)) - \hat{\vs}_t(\hat{\vphi}_t(\vx)) \|_2 \dif t \leq \exp(L)\cdot\int_0^{1 - \varepsilon} \| \vs_t^*(\hat{\vphi}_t(\vx)) - \hat{\vs}_t(\hat{\vphi}_t(\vx)) \|_2 \dif t.
\]
Applying the Cauchy--Schwarz inequality,
\[
\|I_1\|_2^2 \leq \exp(2L)\cdot(1 - \varepsilon)\cdot\int_0^{1 - \varepsilon} \| \vs_t^*(\hat{\vphi}_t(\vx)) - \hat{\vs}_t(\hat{\vphi}_t(\vx)) \|_2^2 \dif t.
\]
Taking expectations,
\[
\E[ \|I_1\|_2^2 ] \leq \exp(2L)\cdot(1 - \varepsilon)\cdot\int_0^{1 - \varepsilon} \E[ \| \vs_t^*(\hat{\vphi}_t(\vx)) - \hat{\vs}_t(\hat{\vphi}_t(\vx)) \|_2^2 ] \dif t.
\]

For \(I_2\), using the It\^o isometry,
\[
\E[ \|I_2\|_2^2 ] = \sigma^2\cdot \E\Big[ \int_0^{1 - \varepsilon} \| \nabla_\vx \vphi_{t, 1 - \varepsilon}^*(\hat{\vphi}_t(\vx)) \|_{\mathrm{F}}^2 \dif t \Big].
\]
Since the \(2\)-norm bounds the Frobenius norm,
\[
\| \nabla_\vx \vphi_{t, 1 - \varepsilon}^*(\hat{\vphi}_t(\vx)) \|_{\mathrm{F}}^2 \leq d\cdot\| \nabla_\vx \vphi_{t, 1 - \varepsilon}^*(\hat{\vphi}_t(\vx)) \|_{2}^2 \leq d\cdot\exp(2L).
\]
Therefore,
\[
\E[ \|I_2\|_2^2 ] \leq \sigma^2\cdot d\cdot\exp(2L)\cdot(1 - \varepsilon).
\]

For \(I_3\), by the bound on the Hessian \(\nabla_\vx^2 \vphi_{t, s}^*(\vx)\) (see \cref{lemma:boundedness_of_jacobian_and_hessian}),
\[
\| \nabla_\vx^2 \vphi_{t, s}^*(\vx) \|_{\mathrm{F}} \leq \dfrac{M}{L}\cdot\big(\exp(2L(s-t))-\exp(L(s-t))\big)\leq \dfrac{M}{L}\cdot\big(\exp(2L)-\exp(L)\big).
\]
Since
\[
\Big\|\sum_{i,j=1}^d\big(\nabla_\vx^2 \vphi_{t, 1 - \varepsilon}^*(\hat{\vphi}_t(\vx))\big)_{i,j,\colon}\Big\|_{2}\leq \sqrt{d}\cdot\|\nabla_\vx^2 \vphi_{t, s}^*(\vx) \|_{\mathrm{F}},
\]
we have
\[
\|I_3\|_2 \leq \dfrac{\sigma^2}{2} \int_0^{1 - \varepsilon} \sqrt{d}\cdot\| \nabla_\vx^2 \vphi_{t, 1 - \varepsilon}^*(\hat{\vphi}_t(\vx)) \|_{\mathrm{F}} \dif t \leq \dfrac{\sigma^2\sqrt{d}M}{2L}\cdot\big(\exp(2L)-\exp(L)\big).
\]
Thus,
\[
\E[ \|I_3\|_2^2 ] \leq \dfrac{\sigma^4dM^2}{4L^2}\cdot\big(\exp(2L)-\exp(L)\big)^2.
\]

Adding up the bounds, we obtain
\[
\E[ \| \vphi_{1 - \varepsilon}^*(\vx) - \hat{\vphi}_{1 - \varepsilon}(\vx) \|_2^2 ] \leq C_1\cdot \int_0^{1 - \varepsilon} \E[ \| \vs_t^*(\hat{\vphi}_t(\vx)) - \hat{\vs}_t(\hat{\vphi}_t(\vx)) \|_2^2 ] \dif t + C_2\cdot\sigma^2 + C_3\cdot\sigma^4,
\]
where \(C_1\), \(C_2\), and \(C_3\) are constants that depend on \(L\), \(M\), \(d\), and \(\varepsilon\).

Specifically, for \(\sigma = 0\), the inequality simplifies to the deterministic case
\[
\| \vphi_{1 - \varepsilon}^*(\vx) - \hat{\vphi}_{1 - \varepsilon}(\vx) \|_2^2 \leq C_4 \cdot \int_0^{1 - \varepsilon} \| \vs_t^*(\hat{\vphi}_t(\vx)) - \hat{\vs}_t(\hat{\vphi}_t(\vx)) \|_2^2  \dif t,
\]
where \(C_4 = \exp(2L)\cdot(1-\varepsilon)\) is a constant depending only on \(L\) and \(\varepsilon\). 

Taking the expectation with respect to \(\vx \sim \mathcal{N}(\vzero, \mI_d)\), and noting that \(\hat{\vphi}_t(\vx)\) and \(\vphi_t^*(\vx)\) evolve according to their respective dynamics, we can interchange the roles of \(\hat{\vphi}_t(\vx)\) and \(\vphi_t^*(\vx)\) in the integrals. This leads to
\[
\E_{\vx \sim \mathcal{N}(\vzero, \mI_d)}[ \| \vphi_{1 - \varepsilon}^*(\vx) - \hat{\vphi}_{1 - \varepsilon}(\vx) \|_2^2 ] \leq C_5 \cdot \E_{t \sim \mathcal{U}(0, 1), \vx \sim p_t}[ \| \vs_t^*(\vx) - \hat{\vs}_t(\vx) \|_2^2 ],
\]
where \(p_t\) is the distribution of \(\vphi_t^*(\vx)\) and \(C_5\) is a constant depending only on the Lipschitz constant of \(\hat{\vs}_t(\vx)\) and \(\varepsilon\).
\end{proof}

\section{Numerical Experiments}
\label{app:numerical_experiments}

In this section, we begin with a recapitulation of the OSDNet network class, categorizing the two parameters, \(\hat{\mO}_t\) and \(\hat{\vs}_t(\vx)\), into their optimal and learned versions. We then present the details and results of numerical experiments, which include: (\romannumeral 1) the path geometry under the optimal velocity field (with both \(\hat{\mO}_t\) and \(\hat{\vs}_t(\vx)\) set to their optimal values); (\romannumeral 2) the decay of off-subspace components (with \(\hat{\vs}_t(\vx)\) optimal and \(\hat{\mO}_t\) learned); and (\romannumeral 3) the generalization of subspace components (with \(\hat{\mO}_t\) optimal and \(\hat{\vs}_t(\vx)\) learned).

\subsection{Architecture of the OSDNet}
Let \( \{\vy^i \colon 1 \leq i \leq N\} \subset \mathbb{R}^d \) represent real data points, which form the columns of the matrix \(\mY = [ \vy^1, \vy^2, \dotsc, \vy^N ]\). We obtain the matrix \(\mV\in\mathbb{R}^{d\times D}\), with orthonormal columns spanning the same subspace as these data points, by performing a reduced singular value decomposition on \(\mY\), i.e., \(\mY = \mV\mR\), where \(\mR\in\mathbb{R}^{D\times N}\) is the product of a diagonal matrix (containing the non-zero singular values of \(\mY\)) and a matrix with orthonormal rows (containing the right singular vectors). The OSDNet is given by
\[
\hat{\vv}_t\big(\vx; \hat{\mO}_t, \hat{\vs}_t(\vx)\big) = \mV^\perp \cdot \hat{\mO}_t \cdot (\mV^\perp)^\top \cdot \vx + \mV \cdot \hat{\vs}_t(\vx),
\]
where \(\hat{\mO}_t\) is a time-dependent diagonal matrix, and \(\hat{\vs}_t(\vx)\) is a time-dependent vector function, both of which are trainable parameters. As shown in \cref{prop:optimal_velocity_field_as_a_specific_instance_of_OSDNet}, the optimal velocity field is a specific case of OSDNet, where
\[
\hat{\mO}_t^* = -\dfrac{1}{1 - t} \cdot \mI_{d-D}
\]
and
\[
\hat{\vs}_t^*(\vx) = \dfrac{1}{1 - t} \cdot \big(\mR \cdot \vw_t(\vx) - \mV^\top \cdot \vx \big).
\]
The parameters \(\hat{\mO}_t\) and \(\hat{\vs}_t(\vx)\) in the OSDNet can be categorized into two types: the optimal version and the learned version. The learned version can employ any neural network architecture, as long as the input-output dimensionality constraints are satisfied. The optimal version, on the other hand, is derived directly from theoretical formulations and serves to verify the properties of the optimal velocity field. 
Having these optimal formulations provides a valuable reference point. In particular, by fixing one component to its optimal form, we can observe how the learned component influences the generated samples in the subspaces spanned by \(\mV\) and \(\mV^\perp\).

\subsection{Path Geometry under the Optimal Velocity Field}

In this subsection, we provide visualizations of generation paths under the optimal velocity field, which corresponds to \cref{subsec:path_geometry_under_the_optimal_velocity_field}.

\paragraph{Sparse and well separated datasets.}

In this setting, we set \( d = D = 2 \) to enable visualization. We randomly sample \( N = 6 \) points uniformly from the region \([-10, 10] \times [-10, 10]\). Both \( \hat{\mO}_t \) and \( \hat{\vs}_t(\vx) \) are configured to their optimal versions to implement the optimal velocity field. We plot in \cref{fig:exp_path_geometry_straight_trajectory} the generation paths, which are derived by discretizing the flow ODE into \(100\) steps, starting from various noise samples (green crosses). These paths, represented by colored trajectories, evolve over time \( t \), with brighter colors indicating larger values of \( t \). As \( t \) progresses, each path straightens and converges towards specific real data points (black triangles), effectively verifying \cref{thm:path_geometry_straight}.

\begin{figure}[htb]
\centering
\includegraphics[width=0.4\linewidth]{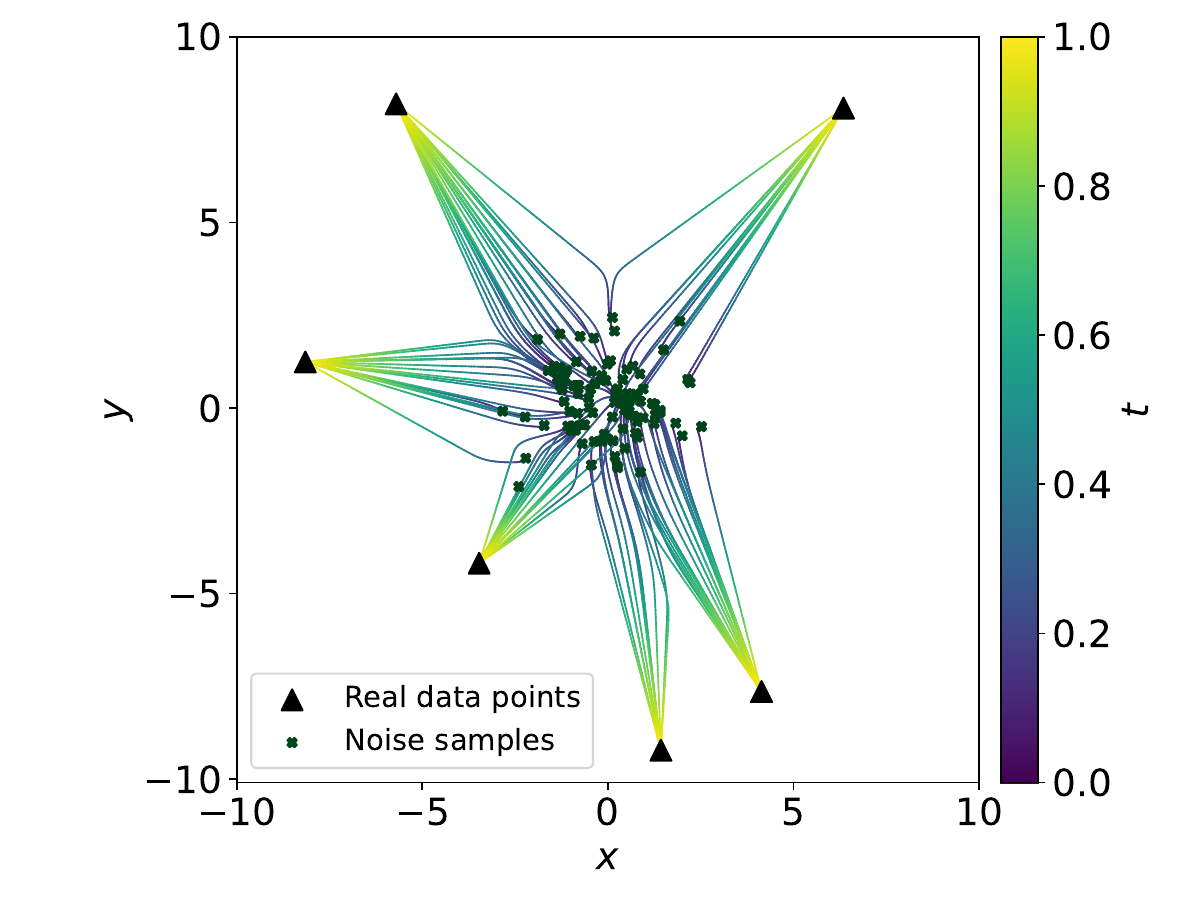}
\caption{The generation paths of Flow Matching models applied to sparse and well separated datasets under the optimal velocity field. The black triangles represent the real data points, which are \( N = 6 \) points randomly sampled from the region \([-10, 10] \times [-10, 10]\). The \textcolor[rgb]{0.0, 0.27, 0.11}{green crosses} denote noise samples. The colored trajectories depict generation paths starting from various noise samples, with brighter colors indicating larger \(t\). As \(t\) increases, these paths straighten, converging towards specific real data points, validating \cref{thm:path_geometry_straight}.}
\label{fig:exp_path_geometry_straight_trajectory}
\end{figure}

\paragraph{Hierarchical datasets.}

In this setting, we also set \( d = D = 2 \) for visualization purposes. Four cluster centers are defined at coordinates \((-2, 2)\), \((-2, -2)\), \((2, 2)\), and \((2, -2)\). Around each center, \(30\) points are sampled by adding Gaussian noise with a standard deviation of \(0.5\), resulting in a total of \( N = 120 \) real data points. Both \( \hat{\mO}_t \) and \( \hat{\vs}_t(\vx) \) are configured to their optimal versions to apply the optimal velocity field. In \cref{fig:exp_path_geometry_hierarchical_trajectory}, we plot the generation paths and intermediate points \(\vphi_t(\vx)\) for \( t = 0, 0.25, 0.5, \) and \( 0.75 \). The generation paths are derived by discretizing the flow ODE into \( 100 \) steps, starting from various noise samples (green crosses). These paths, represented by colored trajectories, evolve over time \( t \), with brighter colors indicating larger \( t \) values. Compared to those in \cref{fig:exp_path_geometry_straight_trajectory}, these paths are more curved. In the second, third, and fourth subfigures, the intermediate points \(\vphi_t(\vx)\) for \( t = 0.25, 0.5, \) and \( 0.75 \) are represented with markers in different colors and styles: cyan squares for \( t=0.25 \), orange diamonds for \( t=0.5 \), and red pentagons for \( t=0.75 \). The regions covered by these points overlap for \( t = 0, 0.25, \) and \( 0.5 \), but become disjoint at \( t = 0.75 \), effectively validating \cref{thm:path_geometry_hierarchical}.

\begin{figure}[htb]
\centering
\includegraphics[width=1\linewidth]{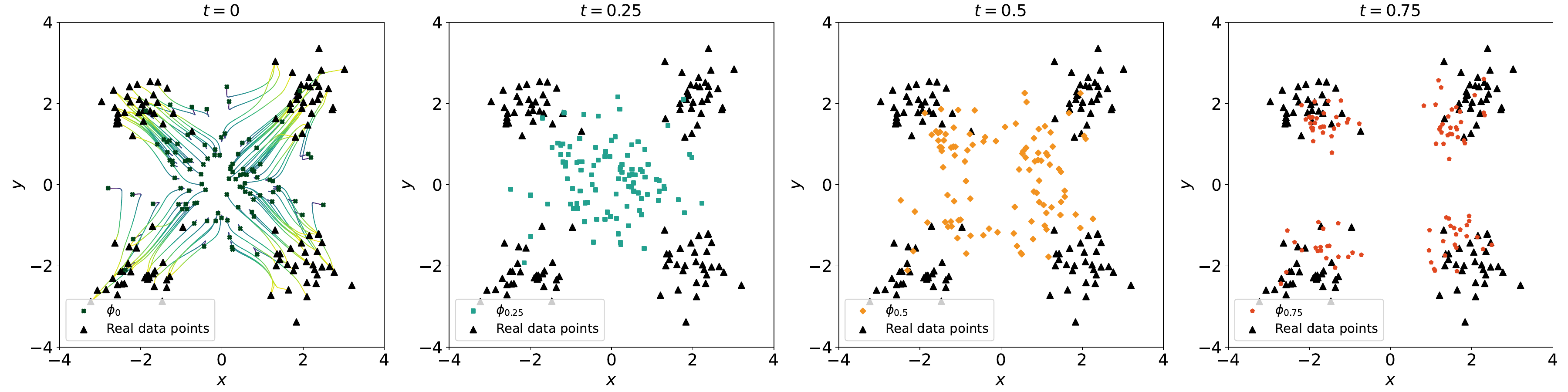}
\caption{The generation paths of Flow Matching models applied to hierarchical datasets (with one hierarchy) under the optimal velocity field. The black triangles represent real data points, which are obtained by adding Gaussian noise with a standard deviation of \(0.5\) to the four cluster centers \((-2, 2)\), \((-2, -2)\), \((2, 2)\), and \((2, -2)\). In the first subfigure (\( t=0 \)), the \textcolor[rgb]{0.0, 0.27, 0.11}{green crosses} show the initial noise samples, and the colored trajectories depict generation paths with brighter colors indicating larger \( t \). These paths are more curved than those in \cref{fig:exp_path_geometry_straight_trajectory}. In the second, third, and fourth subfigures, the markers represent the intermediate points \(\vphi_t(\vx)\): \textcolor[rgb]{0.14, 0.63, 0.56}{cyan squares} for \( t=0.25 \), \textcolor[rgb]{0.95, 0.58, 0.13}{orange diamonds} for \( t=0.5 \), and \textcolor[rgb]{0.88, 0.29, 0.13}{red pentagons} for \( t=0.75 \). The regions covered by these intermediate points overlap for \( t=0, 0.25, \) and \( 0.5 \), but are disjoint at \( t=0.75 \), providing visual validation for \cref{thm:path_geometry_hierarchical}.}
\label{fig:exp_path_geometry_hierarchical_trajectory}
\end{figure}

\subsection{Decay of Off-Subspace Components}

In this subsection, we first provide visualizations of the theoretical approximation results for \(\hat{\mO}_t\), followed by evidence of the decay of off-subspace components during actual training. These visualizations aim at supporting the theoretical analysis in \cref{subsec:decay_of_off_subspace_components}.

\paragraph{Sinusoidal positional encoding approximation.}

Recall that in our formulation, \(\hat{\mO}_t\) is a diagonal matrix where each diagonal entry is given by \(\hat{o}_t = \vkappa^\top \emb(t)\). Here, \(\vkappa\) is a parameter vector. The sinusoidal positional encoding, \(\emb(t)\), is defined as
\[
\emb(t) = \bigg( \sin\Big(\frac{s\cdot t}{\ell^{\frac{0}{\text{dim}}}}\Big), \cos\Big(\frac{s\cdot t}{\ell^{\frac{0}{\text{dim}}}}\Big), \sin\Big(\frac{s\cdot t}{\ell^{\frac{2}{\text{dim}}}}\Big), \cos\Big(\frac{s\cdot t}{\ell^{\frac{2}{\text{dim}}}}\Big), \dotsc, \sin\Big(\frac{s\cdot t}{\ell^{\frac{\text{dim}-2}{\text{dim}}}}\Big), \cos\Big(\frac{s\cdot t}{\ell^{\frac{\text{dim}-2}{\text{dim}}}}\Big) \bigg)^\top,
\]
where \(s\) is the scaling factor, \(\ell\) denotes the wavelength and \(\text{dim}\) represents the embedding dimension. In \cref{subsec:decay_of_off_subspace_components}, we have derived the limiting behavior of the off-subspace component of the generated sample as \(\vkappa\) converges to its limit value, \(-\mA^{-1}\vb\), where
\[
\mA = \int_0^1 (1 - t)^2 \emb(t) \emb(t)^\top \dif t, \quad \vb = \int_0^1 (1 - t) \emb(t) \dif t.
\]
To gain insight into the approximation behavior, we present a comparison of the limit function \(\vkappa^\top\emb(t)\) against the theoretical target \(-1/(1-t)\) under various configurations of the scaling factor \(s\) and embedding dimension \(\text{dim}\), as shown in \cref{fig:exp_sinusoidal_positional_encoding}. The plots display the absolute values of both functions on a logarithmic \(y\)-axis, highlighting differences across several orders of magnitude. The left plot corresponds to a scaling factor of \(s = 1\), while the right plot corresponds to \(s = 1000\), both with \(\ell\) fixed at \(10000\). Additionally, we include an inset zoomed around \(t = 0\) to provide a detailed view of the approximation behavior in the range \(t = 0\) to \(t = 0.1\). This magnified view offers further insight into the early-stage behavior of \(\vkappa^\top\emb(t)\) in relation to the theoretical target \(-1/(1-t)\). When \(s=1\), the approximation of \(\vkappa^\top\emb(t)\) deviates significantly from the target \(-1/(1-t)\) as \(t\) approaches \(1\), with all embedding dimensions showing similar behavior, indicating limited impact of embedding dimension on approximation quality. In contrast, when \(s=1000\), the approximation of \(\vkappa^\top\emb(t)\) remains close to the target over the majority of the range, reflecting a substantial improvement. However, for larger embedding dimensions, \(\vkappa^\top\emb(t)\) exhibits noticeable oscillations near \(t=0\), despite an overall better fit to the target.

\begin{figure}[htb]
\centering
\includegraphics[width=0.48\linewidth]{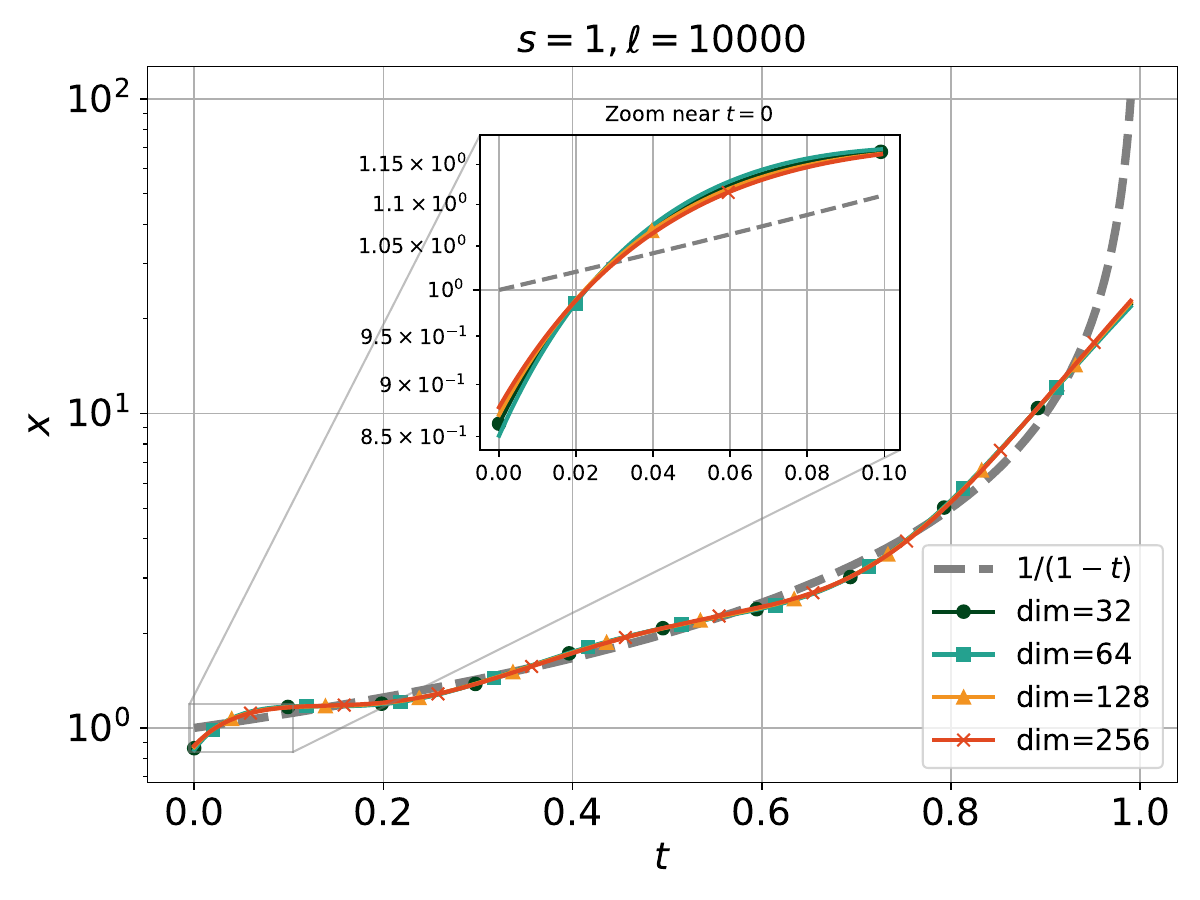}
\includegraphics[width=0.48\linewidth]{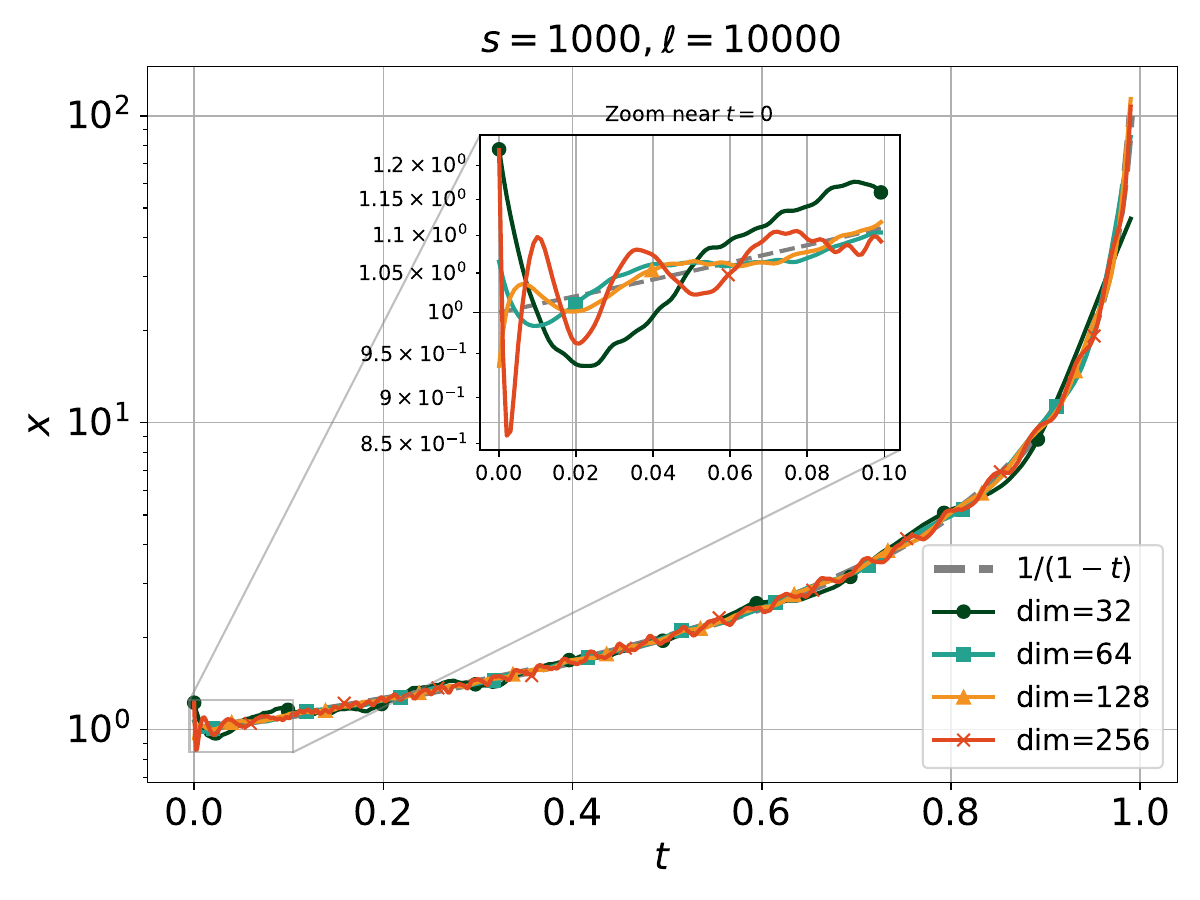}
\caption{Comparison of the limit function \(\vkappa^\top \emb(t)\) (solid colored lines) with the theoretical target \(-1/(1-t)\) (dashed line), both in absolute value, under different embedding dimensions. The \(y\)-axis is on a logarithmic scale. The left plot uses scaling factor \(s = 1\), while the right uses \(s = 1000\). Embedding dimensions are represented as follows: \(\text{dim} = 32\) (\textcolor[rgb]{0.0, 0.27, 0.11}{green, circles}), \(\text{dim} = 64\) (\textcolor[rgb]{0.14, 0.63, 0.56}{cyan, squares}), \(\text{dim} = 128\) (\textcolor[rgb]{0.95, 0.58, 0.13}{orange, triangles}), and \(\text{dim} = 256\) (\textcolor[rgb]{0.88, 0.29, 0.13}{red, crosses}). When \(s=1\), the approximation quality near \(t = 1\) is limited and largely independent of embedding dimension. For \(s=1000\), the approximation improves overall, but higher embedding dimensions introduce oscillations near \(t=0\).}
\label{fig:exp_sinusoidal_positional_encoding}
\end{figure}

\paragraph{Magnitude of off-subspace components.} In this experiment, we set \(d=100\) and \(D=20\). We randomly sample \(N=200\) points uniformly from the \(100\)-dimensional unit cube, and then set the last \(100-20=80\) entries to zero. The \(\hat{\vs}_t(\vx)\) is configured to its optimal version, while \(\hat{\mO}_t\) is implemented as a linear transformation of \(\emb(t)\) with \(s=1000\), \(\ell=10000\) and \(\text{dim}=256\). The OSDNet is trained using the SGD optimizer with a learning rate of \(0.1\) for \(80000\) epochs. Every \(20000\) epochs, we generate \(10000\) samples using a fixed random seed, and then plot the histogram of the Frobenius norm of the last \(80\) entries of these generated samples in \cref{fig:exp_frobenius_norm_histograms}. As shown in this figure, the Frobenius norm distribution tends to follow a normal distribution with a mean that gradually decreases over time. However, the norms do not vanish, indicating that the magnitude of the off-subspace components remains nonzero even after extensive training. These findings align with the theoretical discussions in \cref{subsec:decay_of_off_subspace_components}, where the decay of the off-subspace components is expected to reach a lower bound rather than diminish completely.

\begin{figure}[htb]
\centering
\includegraphics[width=1\linewidth]{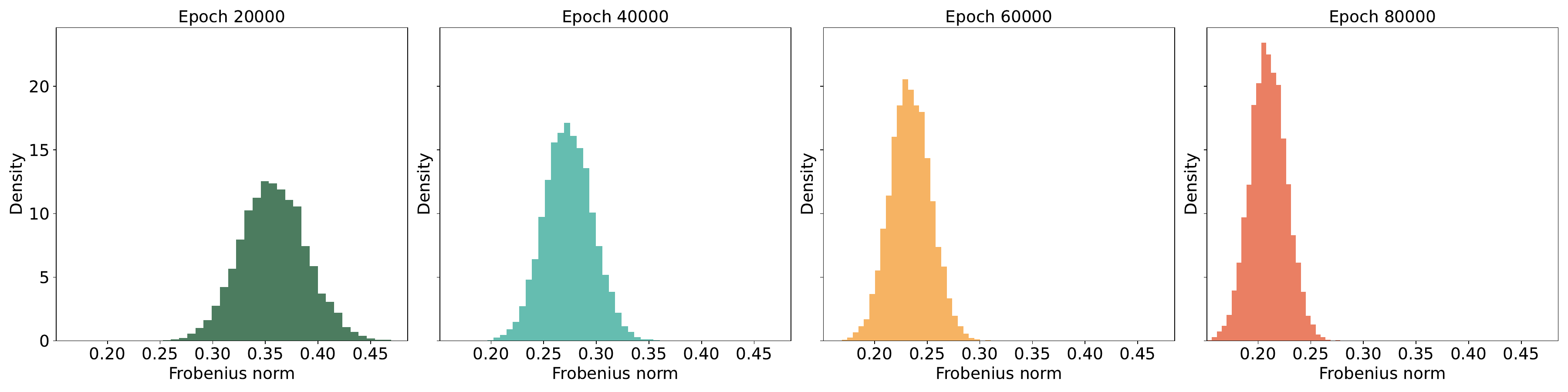}
\caption{Histograms of the Frobenius norm of the last \(80\) components of the \(10000\) generated samples at different training epochs (\textcolor[rgb]{0.0, 0.27, 0.11}{\(20000\)}, \textcolor[rgb]{0.14, 0.63, 0.56}{\(40000\)}, \textcolor[rgb]{0.95, 0.58, 0.13}{\(60000\)}, and \textcolor[rgb]{0.88, 0.29, 0.13}{\(80000\)} epochs). The Frobenius norm distribution tends to follow a normal distribution with a gradually decreasing mean. However, the norms do not converge to zero, indicating that the off-subspace components retain some magnitude even after extensive training. Both observations align with the discussions in \cref{subsec:decay_of_off_subspace_components}.}
\label{fig:exp_frobenius_norm_histograms}
\end{figure}

\subsection{Generalization of Subspace Components}

In this section, we first provide visualizations of the generated samples and their progression during training, followed by the relationship between MSE and training loss. These results illustrate the alignment between empirical observations and the theoretical analysis in \cref{thm:generalization_of_subspace_components}.

\paragraph{Visualization of generated samples.} In this experiment, we set \(d = D = 20\). We randomly sample \(N=200\) points uniformly from the \(20\)-dimensional unit cube. The \(\hat{\mO}_t\) is configured to its optimal version, while \(\hat{\vs}_t(\vx)\) is implemented using a ResNet with \(2\) residual blocks. Each block consists of fully connected layers with a hidden dimension of \(256\), followed by skip connections that allow the original input to be added to the output of the block. The OSDNet is trained using the AdamW optimizer with a learning rate of \(0.0001\) for \(20000\) epochs. Every \(5000\) epochs, we generate \(10000\) samples using a fixed random seed, and then plot the first two dimensions, as well as the \(2\)-dimensional Uniform Manifold Approximation and Projection (UMAP) \citep{mcinnes2018umap} embedding in \cref{fig:exp_generated_samples_and_umaps}. For clarity, only \(1000\) generated samples are displayed in the plots. As shown in this figure, the generated samples initially exhibit generalization that is distinct from the real data points. As training progresses, they gradually move closer to the real data distribution, aligning with the discussions in \cref{subsec:generalization_of_subspace_components}, where we demonstrate the generalization of subspace components under a suboptimal velocity field.

\begin{figure}[ht]
\centering
\includegraphics[width=1\linewidth]{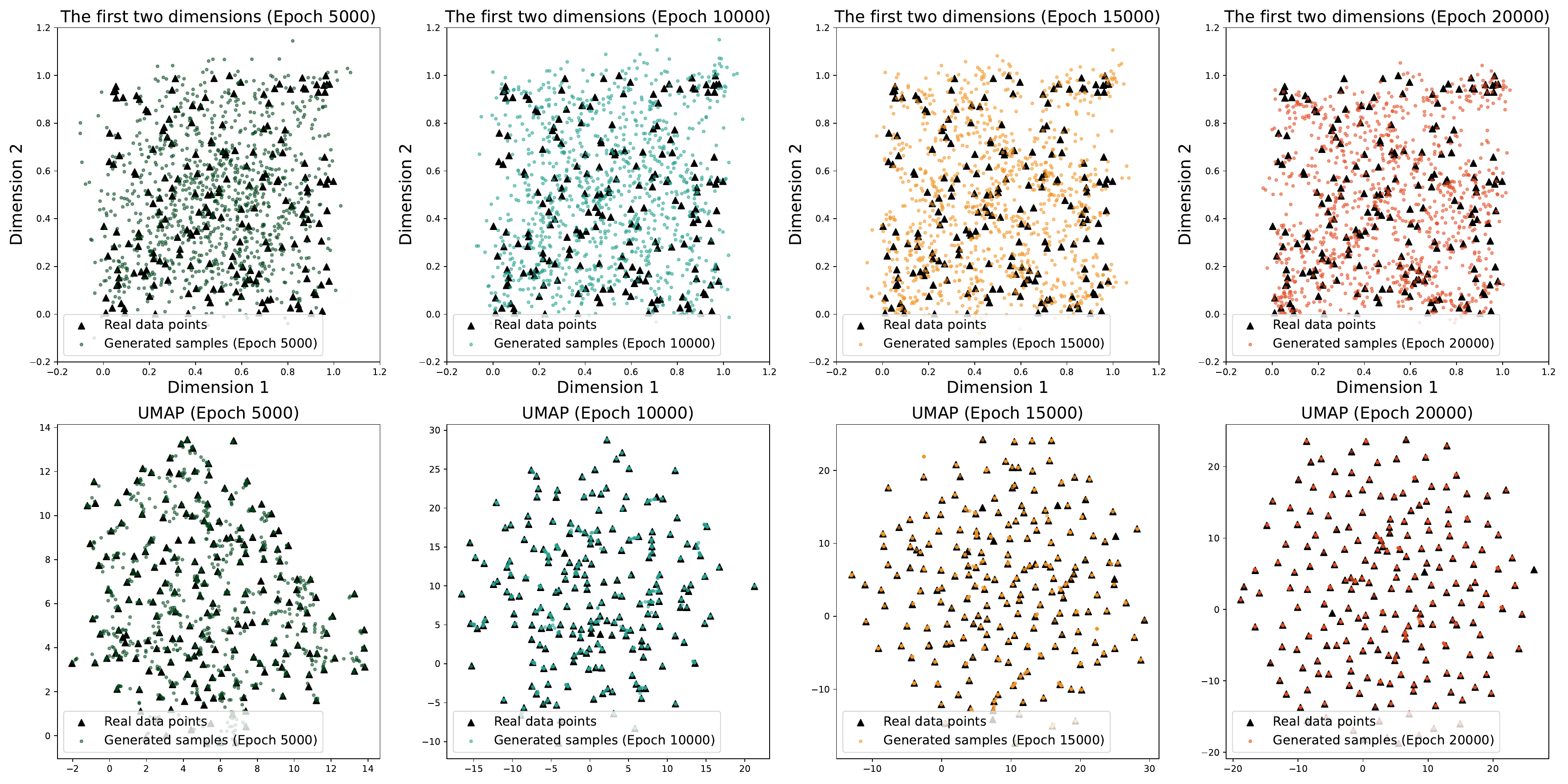}
\caption{The generated samples during training across different epochs. The first row visualizes the first two dimensions of the generated samples (colored dots) alongside the real data points (black triangles) at epochs \textcolor[rgb]{0.0, 0.27, 0.11}{\(5000\)}, \textcolor[rgb]{0.14, 0.63, 0.56}{\(10000\)}, \textcolor[rgb]{0.95, 0.58, 0.13}{\(15000\)}, and \textcolor[rgb]{0.88, 0.29, 0.13}{\(20000\)}, respectively. The second row presents the corresponding UMAP embeddings, reducing the dimensionality of the generated samples to a \(2\)-dimensional space. It can be observed that the generated samples exhibit generalization that is distinct from the real data points. As training progresses, they increasingly approach the real data points (which is more evident from the UMAP plots). Both observations align with the discussions in \cref{subsec:generalization_of_subspace_components}.}
\label{fig:exp_generated_samples_and_umaps}
\end{figure}

\paragraph{Relationship between MSE and Training Loss.} Every \(200\) epochs, we generate \(10000\) samples using both the OSDNet and its optimal counterpart. We then calculate the mean squared error (MSE) between the two sets of samples and plot these error values alongside the training loss in \cref{fig:exp_mse_vs_training_loss}. In this figure, the two curves represent the mean squared error (MSE) between the generated samples and the real data points, and the training loss, respectively, plotted over the training epochs. It can be observed that both curves exhibit similar trends, indicating that as the training progresses, the MSE decreases in parallel with the reduction in training loss. This supports the conclusion in \cref{thm:generalization_of_subspace_components}, where it is shown that the MSE is upper-bounded by the training error (up to a constant scaling), implying that the better the model is trained, the closer the generated samples are to the real data points on average.

\begin{figure}[htb]
\centering
\includegraphics[width=0.6\linewidth]{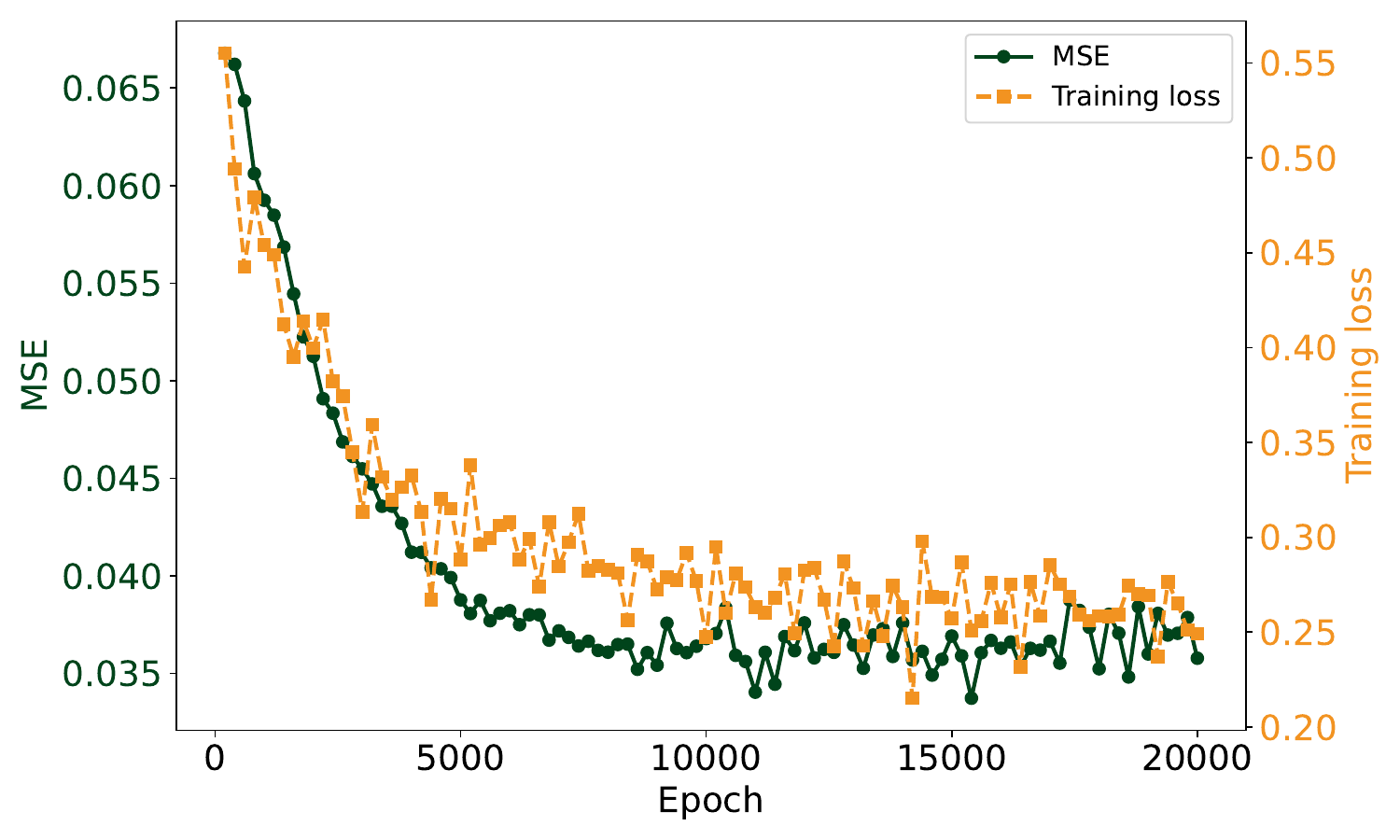}
\caption{The MSE between the generated samples and real data points (\textcolor[rgb]{0.0, 0.27, 0.11}{green, solid line with circle markers}), and the training loss (\textcolor[rgb]{0.95, 0.58, 0.13}{orange, dashed line with square markers}), plotted over the training epochs. Both curves show a similar trend, supporting the conclusion in \cref{thm:generalization_of_subspace_components} that the MSE is upper-bounded by the training error (up to a constant scaling), indicating that as training improves, the generated samples get closer to the real data points.}
\label{fig:exp_mse_vs_training_loss}
\end{figure}

\end{document}